\documentclass[11pt]{article}

\usepackage[margin=1in]{geometry}
\usepackage[authoryear,round]{natbib}
\usepackage{amsmath,amssymb,amsfonts,amsthm,mathtools}
\usepackage{booktabs}
\usepackage{microtype}
\usepackage{xcolor}
\usepackage{enumitem}
\usepackage{algorithm}
\usepackage{algpseudocode}
\usepackage{hyperref}
\hypersetup{colorlinks=true,linkcolor=blue,citecolor=blue,urlcolor=blue}
\usepackage{cleveref}
\usepackage{graphicx}
\usepackage{dsfont}
\usepackage{bbm}
\usepackage{url}
\usepackage{multirow}

\newtheorem{theorem}{Theorem}[section]
\newtheorem{lemma}[theorem]{Lemma}
\newtheorem{proposition}[theorem]{Proposition}
\newtheorem{corollary}[theorem]{Corollary}
\newtheorem{definition}[theorem]{Definition}

\newtheorem{remark}[theorem]{Remark}
\newtheorem{problem}[theorem]{Problem}

\DeclareMathOperator{\diag}{diag}
\DeclareMathOperator{\rank}{rank}
\DeclareMathOperator{\Span}{span}

\DeclareMathOperator{\Orb}{Orb}

\newcommand{\R}{\mathbb{R}}
\newcommand{\C}{\mathbb{C}}

\newcommand{\F}{\mathbb{F}}

\newcommand{\od}{\odot}
\newcommand{\one}{\mathds{1}}
\newcommand{\norm}[1]{\left\lVert #1\right\rVert}

\newcommand{\transfer}{\mathcal{T}}

\newcommand{\bigO}{\mathcal{O}}

\title{\bf Semidirect Fourier Delta Attention:\\ Phase-Controlled Delta Memory with Constructive Chunk-WY Kernels}
\author{Tiantian Zhang (Crystal)\\ Columbia University\\ \texttt{t.zhang8@columbia.edu}}
\date{\today}

\begin{document}
\maketitle

\begin{abstract}
Linear attention replaces the growing KV cache of softmax attention with a fixed recurrent state, but fixed-state memory makes exact copying, algorithmic state tracking, and long-context retrieval difficult. We introduce \emph{Semidirect Fourier Delta Attention} (SFDA), a phase-controlled delta-rule layer that strictly contains Kimi Delta Attention (KDA). SFDA replaces KDA's purely real diagonal decay with a block-rotational Fourier control operator. In complex coordinates the recurrent update is
\[
S_t=(I-\beta_t k_tk_t^*)\Lambda_tS_{t-1}+\beta_tk_tv_t^*,
\qquad
\Lambda_t=\diag(\alpha_t\odot e^{i\theta_t}).
\]
The central technical result is a constructive chunk-WY theorem. For a left-to-right product $P_t=A_tA_{t-1}\cdots A_1$ with $A_t=\Lambda_t-u_tr_t^*$, the factors can be updated explicitly as
\[
P_t=\Gamma_t-Y_tM_tW_t^*,
\]
where the rank grows only inside a fixed chunk. This turns closure under composition into an implementable chunk-local recursion, gives an exact affine write summary, and yields formal cost and stability bounds. We also characterize compact expressivity for the actual kernel-compatible family: each chunk is a structured phase/control product plus a rank-$C$ correction, which compactly realizes product cyclic counters, dihedral phase-orientation memory, register/reset memories, and bounded stacks under the corresponding structured controls. We further resolve a finite-state expressivity question for a broader generalized tied-write affine template: every deterministic finite automaton is exactly realized by one-hot lifting and setting the write coefficient to zero. We validate the algebra with numerical theorem checks and toy cyclic-memory experiments, while leaving fused Triton/CUDA speed and large-scale hybrid-ratio comparisons to subsequent systems work.
\end{abstract}

\section{Introduction}

A recurring theme in sequence modeling is the tension between \emph{parallel shortcut computation} and \emph{recurrent state tracking}. Shallow Transformers can simulate finite automata by composing transition functions in parallel \citep{liu2023shortcuts}, but shortcut solutions can be brittle under distribution shift and length extrapolation. Linear attention and gated recurrent attention move in the opposite direction: they maintain a fixed-size state that updates recurrently, which improves inference efficiency but constrains long-context retrieval and precise algorithmic memory.

Kimi Delta Attention (KDA) is a recent and strong example of this second direction. It maintains a matrix-valued memory state
\begin{equation}
S_t=(I-\beta_t k_t k_t^*)\diag(\alpha_t)S_{t-1}+\beta_t k_t v_t^*,
\label{eq:intro-kda}
\end{equation}
and obtains practical efficiency from a chunkwise WY-style representation of diagonal decay plus rank-one delta corrections \citep{kimi2025linear}. KDA's empirical success suggests a question that is both theoretical and systems-oriented:
\begin{quote}
Can we add algebraic phase/control dynamics to KDA-style recurrent memory without destroying the structured chunk kernel?
\end{quote}

We answer this question for a restricted but hardware-friendly phase family. The proposed layer, \emph{Semidirect Fourier Delta Attention} (SFDA), inserts a block-rotational control matrix before the delta correction. In real coordinates,
\begin{equation}
S_t=(I-\beta_t k_tk_t^*)U_tD_tS_{t-1}+\beta_t k_tv_t^*,
\label{eq:intro-sfda-real}
\end{equation}
where $U_t$ is a direct sum of $2\times2$ rotations and $D_t$ is a positive block/diagonal decay. Under the identification $\R^{2m}\simeq \C^m$, this becomes
\begin{equation}
S_t=(I-\beta_t k_tk_t^*)\Lambda_tS_{t-1}+\beta_t k_tv_t^*,
\qquad
\Lambda_t=\diag(\alpha_t\od e^{i\theta_t}).
\label{eq:intro-sfda-complex}
\end{equation}
Thus the algebraic control is a learnable Fourier phase, while the delta correction remains rank one.

The nontrivial part is not the phase parameterization alone. If the phase-control matrix were dense, the kernel would collapse into dense $d\times d$ products. The crucial design choice is to restrict $U_tD_t$ to aligned $2\times2$ rotation-decay blocks, equivalently a complex diagonal $\Lambda_t$. Then every token transition has the form
\begin{equation}
A_t=\Lambda_t-u_tw_t^*,
\label{eq:intro-diag-rank-one}
\end{equation}
that is, phase-decay plus one rank-one delta correction.

\paragraph{Contributions.}
This preprint formalizes the semidirect/Fourier automata idea as a theory-and-kernel framework for phase-controlled delta memory.
\begin{enumerate}[leftmargin=*]
    \item We define tied SFDA as the main architecture and a write-decoupled variant for exact automata constructions. KDA is recovered by setting $U_t=I$, equivalently $\theta_t=0$.
    \item We prove that the phase-control part exactly implements bounded cyclic memory: rotations maintain modular counters and phase trackers with constant norm.
    \item We prove a constructive left-to-right chunk-WY theorem. For $P_t=A_tA_{t-1}\cdots A_1$ with $A_t=\Lambda_t-u_tw_t^*$, the factors $\Gamma_t,Y_t,M_t,W_t$ admit an explicit recursion. This is stronger than an existential closure theorem and is the main kernel theorem.
    \item We prove an exact affine chunk transfer, including the additive write summary
    \[
    B_C=\sum_{i=1}^C A_{C:i+1}B_i,
    \]
    and show how to compute it by a zero-initial-state recurrence.
    \item We characterize compact kernel-compatible expressivity inside a chunk: a length-$C$ SFDA product is a structured phase/control product plus a rank-$C$ correction.  This proves compact exact realizations for product cyclic counters, dihedral phase-orientation memory, register/reset memories, and bounded stacks under the corresponding structured controls.
    \item We give a reader-friendly finite-state theorem: every deterministic finite automaton is exactly realized by a generalized tied-write affine memory system after one-hot lifting. The construction sets the write coefficient to zero, so the erase-write coupling is not an expressivity obstruction for finite state machines. The remaining question is efficient compressed realization, not exact finite-state realizability.
    \item We add an orbit-restricted tied-write compilation theorem. It shows when a decoupled write direction can be routed from a tied key using only the architecture's structured control family, and it explicitly separates contribution routing from full-state compilation with scratch/protected channels.
    \item We include the complex-to-real implementation, formal cost bounds, a numerical stability lemma, and a rank-growth discipline: rank grows only inside fixed chunks, not across the whole sequence.
\end{enumerate}

The main empirical target is not merely to beat KDA on toy automata. The long-term systems target is to test whether the stronger SFDA linear layer can support a higher linear-to-global attention ratio, for example $7{:}1$ or $15{:}1$, while matching or exceeding a KDA-style $3{:}1$ hybrid under equal compute.

\section{Background}
\label{sec:background}

\subsection{Automata, shortcuts, and recurrent memory}

A finite semiautomaton $\mathcal{A}=(Q,\Sigma,\delta)$ evolves by
\begin{equation}
q_t=\delta(q_{t-1},\sigma_t),\qquad q_t\in Q,\ \sigma_t\in\Sigma.
\end{equation}
The per-symbol maps $\delta(\cdot,\sigma)$ generate a finite transformation semigroup. Existing shortcut results show that Transformers can simulate any finite semiautomaton by parallel prefix composition in $O(\log T)$ depth, and that some solvable semiautomata admit constant-depth shortcuts through Krohn--Rhodes-style decompositions \citep{liu2023shortcuts}. However, those shallow shortcuts may fail to generalize to unseen lengths or shifted input distributions, motivating architectures with explicit recurrent state.

\paragraph{Finite automata as ordinary matrix recurrences.}
No automata-theory background is needed for the finite-state expressivity theorem used later. If $Q=\{q_1,\ldots,q_N\}$, encode state $q_i$ by the one-hot vector $e_i\in\R^N$. For each input symbol $\sigma$, define a matrix $P_\sigma$ by
\begin{equation}
P_\sigma e_i=e_j\quad\text{whenever}\quad \delta(q_i,\sigma)=q_j.
\label{eq:background-onehot}
\end{equation}
Then the automaton recurrence is exactly the linear recurrence
\begin{equation}
s_t=P_{\sigma_t}s_{t-1}.
\label{eq:background-linear-dfa}
\end{equation}
The matrices $P_\sigma$ are deterministic one-hot transition matrices: each column has exactly one nonzero entry, but the matrix need not be invertible. For example, a reset transition can map many states to the same next state, so $P_\sigma$ is not necessarily a permutation matrix. This elementary one-hot view is the only finite-automata fact required for the tied-write realization result in \cref{sec:finite-state-main,app:automata}.

The algebraic object we use is a semidirect product
\begin{equation}
N\rtimes H,
\qquad
(n_2,h_2)(n_1,h_1)=(n_2+h_2\cdot n_1,\ h_2h_1),
\label{eq:semidirect-law}
\end{equation}
where $H$ is a control group acting linearly on a memory monoid or vector space $N$. This structure separates two operations: memory storage and control of how memory is transported.

\subsection{KDA as affine recurrent memory}

KDA updates a memory matrix $S_t\in\F^{d\times d_v}$ by \cref{eq:intro-kda}. Abstractly, each token defines an affine map
\begin{equation}
S\mapsto A_tS+B_t,
\qquad
A_t=(I-\beta_tk_tk_t^*)D_t,
\qquad
B_t=\beta_tk_tv_t^*,
\label{eq:affine-token}
\end{equation}
where $D_t=\diag(\alpha_t)$. Affine maps compose by
\begin{equation}
(A_2,B_2)\circ(A_1,B_1)=(A_2A_1,\ A_2B_1+B_2).
\label{eq:affine-law}
\end{equation}
Thus scanability is mathematically immediate. The hard part is efficiency: dense products $A_2A_1$ cost $O(d^3)$, so a useful architecture must preserve a diagonal-plus-low-rank structure inside chunks.

KDA is efficient because $A_t$ is diagonal decay plus a rank-one correction. Its chunkwise formulation stores a compact WY-like representation of the product and uses triangular solves or UT-style transforms to avoid dense matrix products \citep{kimi2025linear,bischof1987wy}. Our goal is to add phase/control dynamics while preserving this same structural invariant.

\subsection{Relation to RoPE, SSMs, DPLR, and KDA}
\label{sec:relation-ssm}

It is helpful to separate where rotations occur.  Rotary positional embeddings rotate query/key features to encode relative position.  SFDA instead rotates the recurrent memory transition itself:
\begin{equation}
S_t=(I-\beta_tk_tk_t^*)\Lambda_tS_{t-1}+\beta_tk_tv_t^*.
\end{equation}
Thus SFDA phases are state-transition phases, not merely query/key positional phases.

\begin{table}[h]
\centering
\caption{Theory-level comparison of transition invariants.}
\label{tab:ssm-comparison}
\begin{tabular}{p{0.20\linewidth}p{0.67\linewidth}}
\toprule
Method family & State transition invariant \\
\midrule
Complex diagonal SSM/S4D & $x_t=A x_{t-1}+Bu_t$ with static or slowly varying diagonal/structured $A$ \\
Mamba-style selective SSM & $x_t=A_t x_{t-1}+B_tu_t$ with input-dependent selective parameters \\
KDA & $S_t=(I-\beta_tk_tk_t^*)D_tS_{t-1}+\beta_tk_tv_t^*$ \\
SFDA & $S_t=(I-\beta_tk_tk_t^*)\Lambda_tS_{t-1}+\beta_tk_tv_t^*$ with phase-decay $\Lambda_t$ \\
General DPLR & $S_t=(D_t-a_tb_t^*)S_{t-1}+B_t$, usually more expressive but harder to fuse efficiently \\
\bottomrule
\end{tabular}
\end{table}

SFDA differs from complex diagonal SSMs because it retains the delta-rule rank-one correction.  It differs from KDA because the real diagonal decay is upgraded to complex phase-decay.  It differs from unrestricted DPLR because the low-rank term is tied to the delta rule and therefore admits the constructive chunk-WY invariant.

\begin{proposition}[Interpolation between KDA and complex diagonal SSMs]
\label{prop:interpolation}
If $\theta_t=0$, SFDA reduces to KDA.  If $\beta_t=0$, SFDA reduces to a complex diagonal state-space recurrence $S_t=\Lambda_tS_{t-1}$.  Hence SFDA interpolates between KDA-style delta memory and oscillatory diagonal SSM-style phase dynamics.
\end{proposition}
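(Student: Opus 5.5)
The proof is a direct substitution into the complex-coordinate update \cref{eq:intro-sfda-complex}, handled in two cases.

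\emph{Case $\theta_t=0$.} Then $e^{i\theta_t}=\one$ entrywise, so $\Lambda_t=\diag(\alpha_t\od\one)=\diag(\alpha_t)$. This is a real diagonal matrix with the positive decays $\alpha_t$, and the update becomes literally \cref{eq:intro-kda}. To make the identification honest, I will also check it in the real coordinates of \cref{eq:intro-sfda-real}. With $\theta_t=0$, each $2\times2$ rotation block of $U_t$ is the identity, so $U_tD_t=D_t$. This matches the remark in the contributions that KDA is recovered at $U_t=I$.

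One bookkeeping point needs care. Under $\R^{2m}\simeq\C^m$, a real diagonal $D_t$ corresponds to a complex diagonal only if each aligned pair of coordinates shares a decay. So the statement should be read with KDA's decay vector $\alpha_t$ in the complex coordinates. Equivalently, KDA is taken with pairwise-tied decays in real coordinates, or directly as the complex-coordinate recurrence with real $\alpha_t$. The keys $k_t$ and writes $v_t$ are unchanged.

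\emph{Case $\beta_t=0$.} The erase factor becomes $I-0\cdot k_tk_t^*=I$ and the write term $\beta_tk_tv_t^*$ vanishes. The update is therefore $S_t=\Lambda_tS_{t-1}$, a time-varying complex diagonal linear recurrence. Its diagonal entries are $\alpha_{t,j}e^{i\theta_{t,j}}$, whose moduli $\alpha_{t,j}$ give decay and whose arguments give oscillation. This is exactly the S4D / complex diagonal SSM form in \cref{tab:ssm-comparison}, with zero input injection. If one prefers an input term, it is only the degenerate case $Bu_t=0$ of the SSM row, and I will note that explicitly.

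\emph{Interpolation.} The two special cases are the faces $\{\theta=0\}$ and $\{\beta=0\}$ of the parameter space $(\alpha,\theta,\beta,k,v)$. The SFDA family contains both and varies continuously between them through $\theta_t$ and $\beta_t$. This is all that ``interpolates'' claims. There is no real obstacle; the only delicate step is the real/complex decay-tying convention in the first case, which I will state explicitly.
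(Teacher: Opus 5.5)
Your proposal is correct and follows the paper's argument: the first claim is the same substitution $e^{i\theta_t}=\one$, $\Lambda_t=\diag(\alpha_t)$ used in the proof of \cref{prop:kda-special}, and the second is the direct observation that $\beta_t=0$ removes both the erase factor and the write term. Your remark about pairwise-tied decays under $\R^{2m}\simeq\C^m$ is a fair clarification that the paper leaves implicit, but the reduction does not need it, since \cref{eq:intro-kda} is stated over $\F$ and the substitution can be carried out entirely in one coordinate system.
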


\subsection{Why arbitrary orthogonal control is not acceptable}

A tempting extension is to replace $D_t$ by a learned dense orthogonal matrix. That destroys the kernel: products of dense orthogonal matrices with rank-one corrections do not retain a cheap diagonal-plus-low-rank representation. We therefore restrict the control operator to block rotations
\begin{equation}
U_t=\bigoplus_{j=1}^{d/2}R(\theta_{t,j}),
\qquad
R(\theta)=
\begin{bmatrix}
\cos\theta & -\sin\theta\\
\sin\theta & \cos\theta
\end{bmatrix}.
\label{eq:block-rot}
\end{equation}
With paired channels interpreted as complex coordinates, $U_tD_t$ is diagonal. This is the reason SFDA can add phase memory while retaining a KDA-like kernel path. The design is also aligned with the common interpretation of rotary positional encodings as products of $2\times2$ rotations \citep{su2024roformer}.

\section{Semidirect Fourier Delta Attention}
\label{sec:method}

We now define the layer. Let $x_t\in\R^{d_{\mathrm{model}}}$ be the token representation. For each head, projections produce $q_t,k_t\in\F^d$, $v_t\in\F^{d_v}$, a scalar step size $\beta_t\in[0,1]$, channel-wise decay $\alpha_t\in[0,1]^d$, and phase $\theta_t\in[-\theta_{\max},\theta_{\max}]^d$ in complex coordinates. The main architecture in this paper is the tied SFDA recurrence
\begin{align}
S_t&=(I-\beta_tk_tk_t^*)\Lambda_tS_{t-1}+\beta_tk_tv_t^*,
\label{eq:sfda}\\
\Lambda_t&=\diag(\alpha_t\od e^{i\theta_t}).
\nonumber
\end{align}
The output is
\begin{equation}
o_t=S_t^*q_t.
\end{equation}
In real implementations, the complex multiplication by $\lambda_{t,j}=\alpha_{t,j}e^{i\theta_{t,j}}$ is implemented by a $2\times2$ rotation-decay block. No FFT is required.

\paragraph{Stable parameterization.}
A simple implementation enforces the stability assumptions used later by
\begin{align}
\alpha_t&=\alpha_{\min}+(\alpha_{\max}-\alpha_{\min})\sigma(a_t),\qquad 0\le\alpha_{\min}\le\alpha_{\max}\le1,\label{eq:param-alpha}\\
\theta_t&=\theta_{\max}\tanh(b_t),\label{eq:param-theta}\\
\beta_t&=\sigma(c_t),\label{eq:param-beta}\\
k_t&=\tilde k_t/(\norm{\tilde k_t}_2+\varepsilon).\label{eq:param-key}
\end{align}
Then $\norm{\Lambda_t}_2\le1$, $0\le\beta_t\le1$, and $\norm{k_t}_2\le1$, so $0\le\beta_t\norm{k_t}_2^2\le1$.

\begin{definition}[Tied SFDA transition]
For a token $t$, define
\begin{equation}
A_t=(I-\beta_tk_tk_t^*)\Lambda_t,
\qquad
B_t=\beta_tk_tv_t^*.
\label{eq:AtBt}
\end{equation}
Then tied SFDA is the affine recurrence $S_t=A_tS_{t-1}+B_t$.
\end{definition}

\paragraph{Write-decoupled variant.}
For controlled memory constructions it is sometimes useful to separate the erase direction from the write direction:
\begin{equation}
S_t=(I-\eta_tk_tk_t^*)\Lambda_tS_{t-1}+\beta_tr_tv_t^*.
\label{eq:sfda-decoupled-main}
\end{equation}
The tied model is recovered by setting $r_t=k_t$ and $\eta_t=\beta_t$. The main architecture and kernel theorems are stated for tied SFDA.

\paragraph{Generalized tied-write affine template.}
The finite-state expressivity theorem in \cref{sec:finite-state-main,app:automata} uses a slightly more general proof template,
\begin{equation}
z_t=(I-\beta_{\sigma_t}k_{\sigma_t}k_{\sigma_t}^*)\widetilde A_{\sigma_t}z_{t-1}
+\beta_{\sigma_t}k_{\sigma_t}r_{\sigma_t}.
\label{eq:general-tied-template}
\end{equation}
Here $\widetilde A_\sigma$ is an unrestricted symbol-conditioned linear map. This template isolates the effect of tying erase and write through the same vector $k_\sigma$. It is an expressivity proof model, not the hardware kernel. The kernel-compatible SFDA family restricts $\widetilde A_\sigma$ to the phase-decay form $\Lambda_\sigma$, which is what enables the chunk-WY recursion.

\begin{table}[t]
\centering
\caption{Model hierarchy used in the paper. Each theorem explicitly states which row it applies to.}
\label{tab:model-hierarchy}
\begin{tabular}{p{0.25\linewidth}p{0.62\linewidth}}
\toprule
Model class & Linear transition family \\
\midrule
KDA & $(I-\beta kk^*)\diag(\alpha)$ \\
Tied SFDA & $(I-\beta kk^*)\Lambda$, with $\Lambda=\diag(\alpha\od e^{i\theta})$ or real block rotations \\
Kernel-compatible generalized SFDA & $\Lambda-ur^*$ with cheap structured $\Lambda$ \\
Generalized tied-write template & $(I-\beta kk^*)\widetilde A$, where $\widetilde A$ may be dense \\
\bottomrule
\end{tabular}
\end{table}

\begin{proposition}[KDA is a special case]
\label{prop:kda-special}
If $\theta_t=0$ for all $t$, then $\Lambda_t=\diag(\alpha_t)$ and \cref{eq:sfda} reduces exactly to KDA. Equivalently, in real coordinates, setting $U_t=I$ in \cref{eq:intro-sfda-real} recovers KDA.
\end{proposition}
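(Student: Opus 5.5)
The plan is direct substitution followed by a check that the real-coordinate form agrees with the complex one.

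First, the complex form. With $\theta_t=0$ we have $e^{i\theta_t}=\one$ componentwise, so $\alpha_t\od e^{i\theta_t}=\alpha_t$ and $\Lambda_t=\diag(\alpha_t)$. Substituting this into \cref{eq:sfda} term by term gives
\[
S_t=(I-\beta_tk_tk_t^*)\diag(\alpha_t)S_{t-1}+\beta_tk_tv_t^*,
\]
which is exactly \cref{eq:intro-kda}. The parameters $\beta_t$, $k_t$, $v_t$ and $\alpha_t$ play the same roles in both recurrences. I would also note that the affine pair $(A_t,B_t)$ of \cref{eq:AtBt} then coincides with the KDA pair in \cref{eq:affine-token}, with $D_t=\diag(\alpha_t)$. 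So the equality holds at the level of per-token affine maps, and therefore also for all compositions under \cref{eq:affine-law}.

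Second, the real form. Setting $\theta_{t,j}=0$ gives $R(0)=I_2$ in \cref{eq:block-rot}, so $U_t=I$ and \cref{eq:intro-sfda-real} becomes $S_t=(I-\beta_tk_tk_t^*)D_tS_{t-1}+\beta_tk_tv_t^*$. To make the "equivalently" precise, I would check the identification $\R^{2m}\simeq\C^m$. Multiplication by $\lambda=\alpha e^{i\theta}$ corresponds to the real block $\alpha R(\theta)$, so $\theta=0$ corresponds to the block $\alpha I_2$. Hence the real decay $D_t$ assigns the same $\alpha_{t,j}$ to both channels of the $j$-th pair.

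The only subtle point is the relation between KDA's channel count and the paired SFDA channels. In real coordinates $D_t$ is allowed to be a positive block/diagonal decay, which can give the two channels of a pair different decays. This recovers general real KDA with $d$ independent channel decays. The complex form instead ties the decays within each pair, or equivalently it recovers KDA over $\F=\C$ with dimension $d$. I would state that the reduction is exact in whichever field and dimension KDA is posed, and that in the real block form the general diagonal $D_t$ is permitted, so there is no loss.
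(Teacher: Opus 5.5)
Your proof is correct and is essentially the paper's argument: substituting $\theta_t=0$ gives $e^{i\theta_t}=\one$ and $\Lambda_t=\diag(\alpha_t)$, so \cref{eq:sfda} becomes \cref{eq:intro-kda}, and the real-coordinate case follows from $R(0)=I_2$, i.e.\ $U_t=I$. One small caution about your side remark on the identification $\R^{2m}\simeq\C^m$: only the decay part matches, since a real $\alpha_{t,j}$ becomes $\alpha_{t,j}I_2$, whereas a complex rank-one $k_tk_t^*$ realifies to a rank-two real operator, so the complex reduction (KDA over $\C$) and the real reduction (KDA over $\R$) should be treated as two separate exact substitutions rather than one realified identity; this does not affect the validity of your proof.
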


\begin{proof}
Substitute $\theta_t=0$, so $e^{i\theta_t}=\one$ and $\Lambda_t=\diag(\alpha_t)$. The recurrence becomes \cref{eq:intro-kda}.
\end{proof}

\begin{proposition}[Complex phases are real block rotations]
\label{prop:realification-main}
Let $\lambda=\alpha e^{i\theta}$ and define
\begin{equation}
\mathcal R(\lambda)=\alpha
\begin{bmatrix}
\cos\theta&-\sin\theta\\
\sin\theta&\cos\theta
\end{bmatrix}.
\end{equation}
Multiplication by $\lambda$ on one complex coordinate is exactly multiplication by $\mathcal R(\lambda)$ on the corresponding two real coordinates. Hence
\begin{equation}
\diag(\lambda_1,\ldots,\lambda_m)
\quad\Longleftrightarrow\quad
\bigoplus_{j=1}^{m}\mathcal R(\lambda_j),
\end{equation}
and SFDA can be implemented with real tensors only.
\end{proposition}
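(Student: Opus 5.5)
The plan is to verify the single-coordinate claim by direct computation, then lift it to the diagonal matrix via the block structure of the identification $\C^m\simeq\R^{2m}$, and finally apply it to the SFDA recurrence column by column. First fix the realification map $\phi:\C\to\R^2$, $\phi(x+iy)=(x,y)^\top$. This map is an $\R$-linear bijection, and on $\C^m$ it acts coordinatewise by stacking the $m$ pairs in order.

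For a single coordinate, write $\lambda=\alpha e^{i\theta}=\alpha\cos\theta+i\alpha\sin\theta$ and $z=x+iy$. Expanding the product gives
\begin{equation}
\lambda z=\alpha(x\cos\theta-y\sin\theta)+i\,\alpha(x\sin\theta+y\cos\theta),
\end{equation}
so $\phi(\lambda z)=\mathcal R(\lambda)\phi(z)$, read off entrywise. Since $\diag(\lambda_1,\ldots,\lambda_m)$ acts independently on each coordinate, applying $\phi$ coordinatewise shows that it corresponds exactly to the block-diagonal matrix $\bigoplus_j\mathcal R(\lambda_j)$. This gives the displayed equivalence.

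To conclude that SFDA needs only real tensors, I would extend the identification to the full recurrence \cref{eq:sfda}, applying it column by column to $S_t\in\C^{d\times d_v}$. Real inputs such as $v_t$, or complex ones split into real and imaginary parts, are handled the same way.

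The main obstacle is the delta term $I-\beta_tk_tk_t^*$. It is complex-linear, but its realification is not block diagonal. I would handle it in one of two ways.
\begin{itemize}
\item Keep $k_t$ real, as the real-coordinate formulation \cref{eq:intro-sfda-real} implicitly does. Then the realification of $k_tk_t^*$ is the real rank-one matrix on the interleaved real vector, because $\R^{2m}$ keys are the primary parameterization.
\item More generally, observe that any complex-linear map $M=A+iB$ realifies to $\begin{bmatrix}A&-B\\B&A\end{bmatrix}$ after permutation. This form preserves products and sums, so the whole affine recurrence, including the output $o_t=S_t^*q_t$ (using its real part), transfers to real arithmetic.
\end{itemize}
In either case, only the $\Lambda_t$ factor requires the $2\times2$ block statement proved above.
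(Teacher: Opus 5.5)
Your first two paragraphs are the paper's proof. The paper proves the proposition only by the coordinate identity $\alpha e^{i\theta}(x+iy)=\alpha(x\cos\theta-y\sin\theta)+i\alpha(x\sin\theta+y\cos\theta)$, and leaves implicit both the block-diagonal lift and the clause ``SFDA can be implemented with real tensors only''.

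Your treatment of the full recurrence goes beyond the paper, and your second bullet is the right way to justify that clause. The map $A+iB\mapsto\begin{bmatrix}A&-B\\B&A\end{bmatrix}$ is an injective $\R$-algebra homomorphism, and it sends $M^*$ to the transpose of the image of $M$. So the whole affine recurrence, including the delta term, carries over exactly to real arithmetic. You also need not keep only $\mathrm{Re}\,o_t$: the imaginary part, $\mathrm{Re}(S_t)^\top\mathrm{Im}(q_t)-\mathrm{Im}(S_t)^\top\mathrm{Re}(q_t)$, is equally computable with real tensors.

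Your first bullet, however, is false as stated and should be dropped. For $k\in\C^m$ with realification $\phi(k)$, one has $\phi(kk^*z)=\phi(k)\,\mathrm{Re}(k^*z)+\phi(ik)\,\mathrm{Im}(k^*z)$. Hence $kk^*$ realifies to $\phi(k)\phi(k)^\top+\phi(ik)\phi(ik)^\top$, a real rank-two matrix, not $\phi(k)\phi(k)^\top$. Even for $k\in\R^m$ you get $kk^\top\otimes I_2$, again of rank two.

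Conversely, a real rank-one $\tilde k\tilde k^\top$ on $\R^{2m}$ with $\tilde k\neq0$ is never complex-linear. With $J=\bigoplus_j\begin{bmatrix}0&-1\\1&0\end{bmatrix}$, the matrix $J\tilde k\tilde k^\top$ has range $\Span\{J\tilde k\}$, while $\tilde k\tilde k^\top J$ has range $\Span\{\tilde k\}$, and $J\tilde k\perp\tilde k$. So the real-coordinate recurrence \cref{eq:intro-sfda-real} with a real rank-one delta term is a different model from the complex recurrence, not its realification. Routing the argument through it would change the model rather than prove the proposition.

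With the first bullet removed, your proof is complete, and it covers more than the paper's one-line argument, which literally addresses only the $\Lambda_t$ factor.
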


\begin{proof}
This is the coordinate identity
\begin{equation}
\alpha e^{i\theta}(x+iy)=\alpha(x\cos\theta-y\sin\theta)+i\alpha(x\sin\theta+y\cos\theta).
\end{equation}
\end{proof}

\begin{proposition}[Bounded cyclic phase memory]
\label{prop:phase-memory}
Fix a two-dimensional real block, or one complex coordinate. Suppose $\beta_t=0$ and $\alpha_t=1$ on that block. If $\theta_t=2\pi a_t/m$ with $a_t\in\{0,\ldots,m-1\}$, then the hidden phase obeys
\begin{equation}
z_t=e^{2\pi i(\sum_{s\le t}a_s)/m}z_0,
\qquad
|z_t|=|z_0|.
\end{equation}
Thus SFDA exactly represents a cyclic counter $C_m$ with bounded norm.
\end{proposition}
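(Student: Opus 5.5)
The plan is to specialize the tied recurrence \cref{eq:sfda} to the fixed block and then unroll it by induction on $t$. Because $\Lambda_t$ is diagonal, or block-diagonal in real coordinates by \cref{prop:realification-main}, the chosen complex coordinate does not interact with the other coordinates through $\Lambda_t$. With $\beta_t=0$ the erase factor $I-\beta_tk_tk_t^*$ becomes the identity. The write term $\beta_tk_tv_t^*$ also vanishes. So on this coordinate the update reduces to the scalar recurrence $z_t=\lambda_tz_{t-1}$ with $\lambda_t=\alpha_te^{i\theta_t}=e^{2\pi i a_t/m}$, since $\alpha_t=1$. One point needs care here: if $\beta_t=0$ only on the block but $k_t$ has components elsewhere, the rank-one term could couple coordinates. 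I will therefore read the hypothesis as saying that the token contributes no delta correction to this coordinate; this holds in particular when $\beta_t=0$ for the whole head, as in \cref{prop:interpolation}.

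\emph{Step 1 (product formula).} The base case $t=0$ is trivial. For the inductive step,
\[
z_t=e^{2\pi i a_t/m}\,e^{2\pi i(\sum_{s\le t-1}a_s)/m}z_0=e^{2\pi i(\sum_{s\le t}a_s)/m}z_0,
\]
using $e^{x}e^{y}=e^{x+y}$.

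\emph{Step 2 (norm).} The norm claim follows from $|e^{i\phi}|=1$. In real coordinates, $\mathcal R(\lambda)$ with $\alpha=1$ is orthogonal, so it preserves the Euclidean norm of the two-dimensional block.

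\emph{Step 3 (counter semantics).} For the counter interpretation, note that $e^{2\pi i n/m}$ depends only on $n\bmod m$. The map $n\bmod m\mapsto e^{2\pi i n/m}z_0$ is injective whenever $z_0\neq0$, because the $m$-th roots of unity are distinct. Hence the state $z_t$ determines $\sum_{s\le t}a_s \bmod m$ exactly, and the update is the group action of $C_m\cong\{e^{2\pi ij/m}\}$. This realizes the cyclic counter with bounded norm.

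The only real obstacle is the decoupling issue from the first paragraph: making precise that no other coordinate leaks into the chosen block. Once that reduction is stated, the remaining steps are routine.
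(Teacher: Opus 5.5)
Your proposal is correct and follows the paper's own argument: reduce to the scalar update $z_t=e^{2\pi i a_t/m}z_{t-1}$ on the chosen coordinate, add exponents, and use $|e^{i\phi}|=1$ for the norm. The decoupling concern you flag does not arise in the paper's setup, because $\beta_t$ is a single scalar per head, so $\beta_t=0$ removes the rank-one term entirely and the diagonal $\Lambda_t$ then acts coordinatewise; your injectivity remark for $z_0\neq0$ is a small addition that the paper leaves implicit.
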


\begin{proof}
The update on the selected coordinate is $z_t=e^{2\pi i a_t/m}z_{t-1}$. Multiplying phases adds exponents modulo $m$, and complex multiplication by a unit phase preserves norm.
\end{proof}

\paragraph{Single-step diagonal-plus-rank-one form.}
The SFDA transition matrix in \cref{eq:AtBt} can be written as
\begin{equation}
A_t=\Lambda_t-u_tw_t^*,
\qquad
u_t:=\beta_t k_t,
\qquad
w_t:=\Lambda_t^*k_t.
\label{eq:single-step-wy}
\end{equation}
Indeed, $u_tw_t^*=\beta_tk_tk_t^*\Lambda_t$. This is the invariant that makes the structured kernel possible: diagonal/phase decay plus one rank-one correction.

\section{Main Kernel Theorems}
\label{sec:theory-main}

The kernel theorem has two layers. First, phase-decay plus low-rank correction is closed under composition. Second, for the single-step SFDA form $A_t=\Lambda_t-u_tr_t^*$, there is an explicit left-to-right recursion for the factors. This turns the closure theorem into a constructive chunk algorithm.

\begin{theorem}[Block-WY closure]
\label{thm:wy-closure-main}
Let
\begin{equation}
A_1=\Gamma_1-Y_1M_1W_1^*,
\qquad
A_2=\Gamma_2-Y_2M_2W_2^*,
\end{equation}
where $\Gamma_i\in\F^{d\times d}$, $Y_i,W_i\in\F^{d\times r_i}$, and $M_i\in\F^{r_i\times r_i}$. Then
\begin{equation}
A_2A_1=\Gamma_{21}-Y_{21}M_{21}W_{21}^*,
\label{eq:wy-product-main}
\end{equation}
where
\begin{align}
\Gamma_{21} &= \Gamma_2\Gamma_1,\label{eq:Gamma21}\\
Y_{21} &= \begin{bmatrix}\Gamma_2Y_1 & Y_2\end{bmatrix},\label{eq:Y21}\\
W_{21} &= \begin{bmatrix}W_1 & \Gamma_1^*W_2\end{bmatrix},\label{eq:W21}\\
M_{21} &=
\begin{bmatrix}
M_1 & 0\\
-M_2W_2^*Y_1M_1 & M_2
\end{bmatrix}.
\label{eq:M21}
\end{align}
Consequently, structured transition tuples $(\Gamma,Y,W,M)$ form a representation of a matrix multiplication monoid, with rank growing additively under exact composition.
\end{theorem}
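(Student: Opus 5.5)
The plan is a direct algebraic verification. Expand the product $A_2A_1$, then check that the proposed block factors reproduce each term; the monoid statement follows by induction.

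First, multiply out
\begin{equation*}
A_2A_1=(\Gamma_2-Y_2M_2W_2^*)(\Gamma_1-Y_1M_1W_1^*).
\end{equation*}
This gives four terms:
\begin{equation*}
\Gamma_2\Gamma_1 \;-\; \Gamma_2Y_1M_1W_1^* \;-\; Y_2M_2W_2^*\Gamma_1 \;+\; Y_2M_2W_2^*Y_1M_1W_1^*.
\end{equation*}
The first term is $\Gamma_{21}$ as in \cref{eq:Gamma21}. The remaining three terms must equal $-Y_{21}M_{21}W_{21}^*$.

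Second, compute $Y_{21}M_{21}W_{21}^*$ blockwise using \cref{eq:Y21,eq:W21,eq:M21}. Since $W_{21}^*=\begin{bmatrix}W_1^*\\ W_2^*\Gamma_1\end{bmatrix}$, the product $M_{21}W_{21}^*$ has top block row $M_1W_1^*$. Its bottom block row is $-M_2W_2^*Y_1M_1W_1^*+M_2W_2^*\Gamma_1$. Left-multiplying by $\begin{bmatrix}\Gamma_2Y_1 & Y_2\end{bmatrix}$ gives
\begin{equation*}
\Gamma_2Y_1M_1W_1^* - Y_2M_2W_2^*Y_1M_1W_1^* + Y_2M_2W_2^*\Gamma_1.
\end{equation*}
Negating this matches the three cross terms exactly, which proves \cref{eq:wy-product-main}. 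The only care needed is the adjoint identity $(\Gamma_1^*W_2)^*=W_2^*\Gamma_1$ and keeping the order of noncommuting factors. I expect no real obstacle here; this sign and ordering bookkeeping in the off-diagonal block of $M_{21}$ is the one place an error could slip in.

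Third, for the "consequently" clause, define the composition of tuples by the formulas above. The evaluation map $(\Gamma,Y,W,M)\mapsto \Gamma-YMW^*$ is then multiplicative by what was just shown. The identity tuple is $(I,\cdot,\cdot,\cdot)$ with rank $0$. The rank, meaning the inner dimension, of the composite is $r_1+r_2$ by construction. By induction on the number of factors, a product of $n$ tuples of ranks $r_i$ is represented with rank $\sum r_i$. This gives a monoid homomorphism into matrix multiplication.

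The representation is a homomorphism at the level of evaluated matrices, not of tuples themselves. Tuple composition is associative only up to the evaluation map, although one can check directly that both bracketings give block lower-triangular $M$ with identical blocks. I would remark on this rather than belabor it.
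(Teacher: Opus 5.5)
Your proposal is correct and is essentially the paper's proof: you expand $A_2A_1$ into four terms, compute $Y_{21}M_{21}W_{21}^*$ blockwise, match the three cross terms, and derive the monoid claim from the fact that the evaluation map to ordinary matrices is multiplicative. One small refinement is that your own bracketing check (identical $\Gamma$, $Y$, $W$, and identical blocks of $M$) shows tuple composition is strictly associative, with identity $(I,\cdot,\cdot,\cdot)$ of rank $0$, so the qualifier ``only up to the evaluation map'' is unnecessary.
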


\begin{proof}
Expand $A_2A_1$ and substitute \cref{eq:Gamma21,eq:Y21,eq:W21,eq:M21}. The product $Y_{21}M_{21}W_{21}^*$ equals
\begin{align}
&\Gamma_2Y_1M_1W_1^*+Y_2M_2W_2^*\Gamma_1\nonumber\\
&\hspace{0.1in}-Y_2M_2W_2^*Y_1M_1W_1^*.
\end{align}
Therefore $\Gamma_{21}-Y_{21}M_{21}W_{21}^*$ is exactly $A_2A_1$. Associativity follows because each tuple represents an ordinary matrix.
\end{proof}

\begin{theorem}[Constructive left-to-right chunk-WY recursion]
\label{thm:constructive-wy}
Let $A_t=\Lambda_t-u_tr_t^*$ over $\F\in\{\R,\C\}$, where $\Lambda_t$ is diagonal or aligned block-phase and $r_t$ is the right WY vector. Define
\begin{equation}
P_t=A_tA_{t-1}\cdots A_1.
\end{equation}
Initialize $\Gamma_0=I$, $Y_0,W_0\in\F^{d\times0}$, and $M_0\in\F^{0\times0}$. For $t\ge1$, set
\begin{align}
\Gamma_t&=\Lambda_t\Gamma_{t-1},\label{eq:constructive-Gamma}\\
Y_t&=\begin{bmatrix}\Lambda_tY_{t-1}&u_t\end{bmatrix},\label{eq:constructive-Y}\\
W_t&=\begin{bmatrix}W_{t-1}&\Gamma_{t-1}^*r_t\end{bmatrix},\label{eq:constructive-W}\\
M_t&=
\begin{bmatrix}
M_{t-1}&0\\
-r_t^*Y_{t-1}M_{t-1}&1
\end{bmatrix}.
\label{eq:constructive-M}
\end{align}
Then
\begin{equation}
P_t=\Gamma_t-Y_tM_tW_t^*.
\label{eq:constructive-invariant}
\end{equation}
For a chunk of size $C$, $Y_C,W_C\in\F^{d\times C}$ and $M_C\in\F^{C\times C}$, so the correction rank is at most $C$.
\end{theorem}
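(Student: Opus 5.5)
Induction on $t$, with each step an application of the block-WY closure, \cref{thm:wy-closure-main}. Base case $t=0$: $P_0=I=\Gamma_0$, and the correction $Y_0M_0W_0^*$ is the empty $d\times d$ zero matrix. (Alternatively start at $t=1$, where the recursion gives $\Gamma_1=\Lambda_1$, $Y_1=u_1$, $W_1=\Gamma_0^*r_1=r_1$, $M_1=[1]$, so $\Gamma_1-Y_1M_1W_1^*=\Lambda_1-u_1r_1^*=A_1$.)

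For the inductive step, assume $P_{t-1}=\Gamma_{t-1}-Y_{t-1}M_{t-1}W_{t-1}^*$. Write $P_t=A_tP_{t-1}$ and apply \cref{thm:wy-closure-main} with these choices:
\begin{itemize}
\item the earlier factor is $P_{t-1}$, so $(\Gamma_1,Y_1,M_1,W_1)=(\Gamma_{t-1},Y_{t-1},M_{t-1},W_{t-1})$;
\item the later factor is $A_t$ in rank-one form, so $(\Gamma_2,Y_2,M_2,W_2)=(\Lambda_t,u_t,1,r_t)$.
\end{itemize}
Substituting into \cref{eq:Gamma21,eq:Y21,eq:W21,eq:M21} gives
\begin{itemize}
\item $\Gamma_{21}=\Lambda_t\Gamma_{t-1}$;
\item $Y_{21}=[\Lambda_tY_{t-1}\ \ u_t]$;
\item $W_{21}=[W_{t-1}\ \ \Gamma_{t-1}^*r_t]$;
\item $M_{21}$ is lower block-triangular with blocks $M_{t-1}$, $0$, $-1\cdot r_t^*Y_{t-1}M_{t-1}$, and $1$.
\end{itemize}
These match \cref{eq:constructive-Gamma,eq:constructive-Y,eq:constructive-W,eq:constructive-M} exactly, so $P_t=\Gamma_t-Y_tM_tW_t^*$.

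The closure theorem is purely algebraic and only needs associativity of matrix multiplication over $\F$. It therefore holds whether $\Lambda_t$ is complex diagonal or a real aligned block-rotation. The structure of $\Lambda_t$ matters only for cost, not for correctness.

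The one point needing care is the placement of the conjugate transposes. The closure formula puts $\Gamma_1^*W_2$ on the right factor; here that becomes $\Gamma_{t-1}^*r_t$, not $\Lambda_t^*r_t$. The lower-left block of $M_t$ is $-M_2W_2^*Y_1M_1=-r_t^*Y_{t-1}M_{t-1}$, which has size $1\times(t-1)$. As a sanity check I would expand directly:
\[
\Gamma_t-Y_tM_tW_t^* = \Lambda_t\Gamma_{t-1}-\Lambda_tY_{t-1}M_{t-1}W_{t-1}^* - u_t\bigl(r_t^*\Gamma_{t-1}-r_t^*Y_{t-1}M_{t-1}W_{t-1}^*\bigr),
\]
which equals $(\Lambda_t-u_tr_t^*)P_{t-1}$.

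For the dimension claim, a second induction shows that $Y_t$ and $W_t$ gain exactly one column per step and $M_t$ gains one row and one column. Hence $Y_t,W_t\in\F^{d\times t}$ and $M_t\in\F^{t\times t}$; taking $t=C$ bounds the rank of the correction term $Y_CM_CW_C^*$ by $C$. The only real risk is index and conjugation bookkeeping, and the direct expansion above checks it.
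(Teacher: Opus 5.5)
Your proof is correct and is essentially the paper's argument: induction on $t$, where the inductive step is the one-step identity $(\Lambda_t-u_tr_t^*)(\Gamma_{t-1}-Y_{t-1}M_{t-1}W_{t-1}^*)=\Gamma_t-Y_tM_tW_t^*$. The only difference is packaging. You obtain that identity by specializing \cref{thm:wy-closure-main} to $(\Gamma_2,Y_2,M_2,W_2)=(\Lambda_t,u_t,1,r_t)$, whereas the paper (\cref{app:constructive-proof}) expands the product directly, which is exactly your sanity-check computation.
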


\begin{proof}[Proof sketch]
The base case gives $P_1=\Lambda_1-u_1r_1^*$. For the induction step, expand $(\Lambda_t-u_tr_t^*)(\Gamma_{t-1}-Y_{t-1}M_{t-1}W_{t-1}^*)$. The lower row $-r_t^*Y_{t-1}M_{t-1}$ in \cref{eq:constructive-M} cancels the cross term $u_tr_t^*Y_{t-1}M_{t-1}W_{t-1}^*$. The full expansion is given in \cref{app:constructive-proof}.
\end{proof}

\begin{corollary}[SFDA admits exact constructive block-WY products]
\label{cor:sfda-wy}
For any chunk of $C$ tied SFDA transitions, set
\begin{equation}
\Lambda_t=\diag(\alpha_t\od e^{i\theta_t}),
\qquad
u_t=\beta_tk_t,
\qquad
r_t=\Lambda_t^*k_t.
\end{equation}
Then $A_t=(I-\beta_tk_tk_t^*)\Lambda_t=\Lambda_t-u_tr_t^*$ and \cref{thm:constructive-wy} gives
\begin{equation}
A_CA_{C-1}\cdots A_1=\Gamma_C-Y_CM_CW_C^*,
\label{eq:chunk-product}
\end{equation}
with $\rank(Y_CM_CW_C^*)\le C$.
\end{corollary}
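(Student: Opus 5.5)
The corollary follows from \cref{thm:constructive-wy} once the single-step identity is checked. There are three steps: verify $A_t=\Lambda_t-u_tr_t^*$ for the stated choices, apply the constructive recursion to get the product formula, and then bound the rank by counting columns.

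\textbf{Single-step form.} With $u_t=\beta_tk_t$ and $r_t=\Lambda_t^*k_t$, we have $r_t^*=k_t^*\Lambda_t$. Hence
\[
u_tr_t^*=\beta_tk_tk_t^*\Lambda_t,
\]
and therefore
\[
(I-\beta_tk_tk_t^*)\Lambda_t=\Lambda_t-\beta_tk_tk_t^*\Lambda_t=\Lambda_t-u_tr_t^*.
\]
This is the same identity as \cref{eq:single-step-wy}. Because $\Lambda_t=\diag(\alpha_t\od e^{i\theta_t})$ is diagonal over $\C$, the hypothesis of \cref{thm:constructive-wy} that $\Lambda_t$ be diagonal or aligned block-phase holds. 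In real coordinates, \cref{prop:realification-main} turns $\Lambda_t$ into aligned $2\times2$ rotation-decay blocks, and $\Lambda_t^*$ becomes the transpose, so the real-coordinate version is covered by the same theorem.

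\textbf{Product formula.} Run the recursion \cref{eq:constructive-Gamma,eq:constructive-Y,eq:constructive-W,eq:constructive-M} for $t=1,\ldots,C$ with these $u_t,r_t$. The invariant \cref{eq:constructive-invariant} at $t=C$ gives \cref{eq:chunk-product} directly.

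\textbf{Rank bound.} Each step appends exactly one column to $Y$ and one to $W$, and $M$ grows from $(t-1)\times(t-1)$ to $t\times t$. Starting from empty factors, induction on $t$ gives $Y_C,W_C\in\F^{d\times C}$ and $M_C\in\F^{C\times C}$. Then
\[
\rank(Y_CM_CW_C^*)\le\min(\rank Y_C,\rank M_C,\rank W_C)\le C.
\]

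Nothing here is a real obstacle, since the substantive work is in \cref{thm:constructive-wy}. The one point that needs care is the adjoint bookkeeping: the right WY vector must be $\Lambda_t^*k_t$, not $k_t$, so that $r_t^*=k_t^*\Lambda_t$ places the decay on the correct side of the correction. The same care applies in \cref{eq:constructive-W}, where $\Gamma_{t-1}^*r_t=\Gamma_{t-1}^*\Lambda_t^*k_t=\Gamma_t^*k_t$.
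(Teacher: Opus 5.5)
Your proof is correct and is essentially the paper's own argument: the paper likewise derives $A_t=\Lambda_t-u_tr_t^*$ from $(\Lambda_t^*k_t)^*=k_t^*\Lambda_t$ and then invokes \cref{thm:constructive-wy}, whose factor dimensions give the rank bound you spell out. One minor caution about your optional real-coordinate aside: it is accurate for the real-key model \cref{eq:intro-sfda-real}, but realifying the complex correction $\beta_tk_tk_t^*$ gives a real rank-two operator (its image is spanned by $k_t$ and $ik_t$), so a realified complex chunk has real correction rank up to $2C$---this does not affect the corollary, which is stated over $\C$.
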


\begin{proof}
The identity follows from $(\Lambda_t^*k_t)^*=k_t^*\Lambda_t$. Apply \cref{thm:constructive-wy}.
\end{proof}

\begin{theorem}[Exact affine chunk transfer]
\label{thm:affine-chunk-main}
Consider a chunk of SFDA transitions $S_t=A_tS_{t-1}+B_t$ for $t=1,\ldots,C$. Let
\begin{equation}
A_{C:1}:=A_CA_{C-1}\cdots A_1=\Gamma_C-Y_CM_CW_C^*.
\end{equation}
Define the chunk write summary
\begin{equation}
B_C^{\mathrm{chunk}}
:=\sum_{i=1}^{C} A_{C:i+1}B_i,
\qquad
A_{C:C+1}:=I.
\label{eq:chunk-write}
\end{equation}
Then the chunk maps its boundary state exactly by
\begin{equation}
S_{\mathrm{out}}
=(\Gamma_C-Y_CM_CW_C^*)S_{\mathrm{in}}+B_C^{\mathrm{chunk}}.
\label{eq:chunk-transfer}
\end{equation}
Moreover, $B_C^{\mathrm{chunk}}$ can be computed by running the same recurrence inside the chunk with zero initial state.
\end{theorem}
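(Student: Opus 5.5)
The plan is to prove the unrolled affine identity by induction on the number of steps $t$ inside the chunk, and then substitute the structured form of the product from \cref{cor:sfda-wy}. First set $S_0:=S_{\mathrm{in}}$. Define, for $1\le i\le t+1$, the partial products $A_{t:i}:=A_tA_{t-1}\cdots A_i$, with the convention $A_{t:t+1}:=I$. The inductive claim is
\begin{equation}
S_t=A_{t:1}S_0+\sum_{i=1}^{t}A_{t:i+1}B_i .
\end{equation}

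For the base case $t=1$, the claim reads $S_1=A_1S_0+B_1$, which is the recurrence itself. For the inductive step, apply $S_{t}=A_tS_{t-1}+B_t$ to the hypothesis for $t-1$. This gives $A_tA_{t-1:1}S_0+\sum_{i=1}^{t-1}A_tA_{t-1:i+1}B_i+B_t$. Then use $A_tA_{t-1:i}=A_{t:i}$, and absorb $B_t=A_{t:t+1}B_t$ as the $i=t$ term.

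Equivalently, the step is repeated application of the affine composition law \cref{eq:affine-law}: the pair $(A_{t:1},B^{(t)})$ composes with $(A_{t+1},B_{t+1})$ to give $(A_{t+1:1},\,A_{t+1}B^{(t)}+B_{t+1})$. This recursion for the second component is exactly the one satisfied by the sum above. Setting $t=C$ yields $S_{\mathrm{out}}=A_{C:1}S_{\mathrm{in}}+B_C^{\mathrm{chunk}}$. Substituting $A_{C:1}=\Gamma_C-Y_CM_CW_C^*$ from \cref{cor:sfda-wy}, which itself follows from \cref{thm:constructive-wy}, gives \cref{eq:chunk-transfer}.

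For the final claim, define the auxiliary sequence $\tilde S_0=0$, $\tilde S_t=A_t\tilde S_{t-1}+B_t$. This is the same recurrence run with zero initial state. The unrolled identity with $S_0=0$ gives $\tilde S_C=A_{C:1}\cdot 0+\sum_{i}A_{C:i+1}B_i=B_C^{\mathrm{chunk}}$.

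There is no real obstacle. The only point requiring care is the index bookkeeping: the empty-product convention $A_{C:C+1}=I$, and the fact that products are ordered left-to-right, with later transitions on the left, consistent with $P_t$ in \cref{thm:constructive-wy}. Linearity of $A_t$ acting on matrix states $S\in\F^{d\times d_v}$ is automatic, since $A_t$ acts by left multiplication.
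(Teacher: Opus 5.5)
Your proposal is correct and follows the paper's proof: the paper likewise unrolls $S_t=A_tS_{t-1}+B_t$ to $S_C=A_{C:1}S_0+\sum_{i=1}^{C}A_{C:i+1}B_i$, substitutes the chunk-WY factorization for the first term, and notes that the zero-initial-state recurrence reproduces exactly the second term. The only difference is that you write out the induction and the empty-product convention explicitly, whereas the paper leaves the unrolling implicit.
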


\begin{proof}
Unrolling $S_t=A_tS_{t-1}+B_t$ gives
\begin{equation}
S_C=A_{C:1}S_0+\sum_{i=1}^{C}A_{C:i+1}B_i.
\end{equation}
The first term is \cref{eq:chunk-product}; the second is \cref{eq:chunk-write}. If $\bar S_0=0$ and $\bar S_t=A_t\bar S_{t-1}+B_t$, then $\bar S_C$ is exactly the same sum.
\end{proof}

\begin{theorem}[Prefix outputs inside a chunk]
\label{thm:prefix-outputs}
For every prefix $r\le C$ of a chunk, define
\begin{equation}
P_r=A_rA_{r-1}\cdots A_1=\Gamma_r-Y_rM_rW_r^*
\end{equation}
using the constructive recursion in \cref{thm:constructive-wy}. Let
\begin{equation}
B_r^{\mathrm{chunk}}=\sum_{i=1}^{r}A_{r:i+1}B_i.
\end{equation}
Then the state after the $r$-th token in the chunk is
\begin{equation}
S_r=P_rS_0+B_r^{\mathrm{chunk}}.
\end{equation}
Consequently, the token output can be written exactly as
\begin{equation}
o_r=S_r^*q_r=S_0^*P_r^*q_r+(B_r^{\mathrm{chunk}})^*q_r.
\label{eq:prefix-output}
\end{equation}
Thus the same prefix factors used for the boundary transfer also generate all intra-chunk outputs.
\end{theorem}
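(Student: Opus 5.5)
The plan is to reduce the statement to the chunk-transfer argument of \cref{thm:affine-chunk-main}, applied to every prefix length $r\le C$ instead of only to $r=C$, and then take adjoints.

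\emph{Step 1: unroll the prefix.} I would prove by induction on $r$ that
\[
S_r=A_{r:1}S_0+\sum_{i=1}^{r}A_{r:i+1}B_i,\qquad A_{r:r+1}:=I.
\]
The case $r=0$ is trivial, with the empty sum and $A_{0:1}=I$. For the inductive step, substitute the hypothesis into $S_r=A_rS_{r-1}+B_r$. This uses $A_rA_{r-1:1}=A_{r:1}$ and $A_rA_{r-1:i+1}=A_{r:i+1}$, and the term $B_r=A_{r:r+1}B_r$ supplies the last summand. The second term is $B_r^{\mathrm{chunk}}$ by definition.

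\emph{Step 2: identify the first term.} I would identify $A_{r:1}$ with $P_r=\Gamma_r-Y_rM_rW_r^*$ by invoking \cref{thm:constructive-wy} at index $r$. Since the recursion there is left-to-right, the factors $(\Gamma_r,Y_r,M_r,W_r)$ are exactly the intermediate iterates produced on the way to $(\Gamma_C,Y_C,M_C,W_C)$. No recomputation is needed: $Y_r,W_r$ are the first $r$ columns of $Y_C,W_C$ up to the accumulated $\Lambda$ factors on $Y$, and $M_r$ is the leading $r\times r$ block of $M_C$ because $M_t$ is built lower-block-triangularly. The tied SFDA instance of this identity is \cref{cor:sfda-wy}, applied to the prefix of length $r$.

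\emph{Step 3: outputs.} I would take the conjugate transpose, $S_r^*=S_0^*P_r^*+(B_r^{\mathrm{chunk}})^*$, and multiply on the right by $q_r$ to get \cref{eq:prefix-output}. As in \cref{thm:affine-chunk-main}, $B_r^{\mathrm{chunk}}$ is the $r$-th iterate $\bar S_r$ of the zero-initial-state recurrence. So a single zero-state pass yields all the write summaries simultaneously.

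\emph{Main obstacle.} The algebra is routine, so the main point needing care is the claim that the \emph{same} prefix factors serve both the boundary transfer and the intra-chunk outputs. Concretely, this is the nesting property of Step 2: $M_r$ is a leading principal block of $M_C$, and $W_r$ consists of leading columns of $W_C$. I would verify it by a short induction on the recursion \cref{eq:constructive-W,eq:constructive-M}, noting that each step only appends a column to $W$ and a row and column to $M$ without modifying existing entries. The bookkeeping conventions $A_{r:r+1}=I$ and $A_{0:1}=I$ also need to be stated explicitly.
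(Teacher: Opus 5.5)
Your proposal is correct and follows essentially the paper's route: the paper applies the affine chunk-transfer theorem to the shortened chunk $1{:}r$, which is exactly your Step 1 unrolling combined with the constructive WY identity at index $r$, and then reads off the output formula from $o_r=S_r^*q_r$. Your nesting check on $M_r$ and $W_r$ is not needed for the identity itself, because the statement defines $P_r$ directly by the recursion at index $r$. It is, however, a valid justification of the closing remark about reusing the same prefix factors, and your caveat that the leading columns of $Y_C$ are $\Lambda_C\cdots\Lambda_{r+1}Y_r$ rather than $Y_r$ is correct.
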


\begin{proof}
Apply \cref{thm:affine-chunk-main} to the shortened chunk $1{:}r$. The output identity follows from the definition $o_r=S_r^*q_r$.
\end{proof}

\begin{remark}[What is not claimed]
The exact rank in \cref{thm:wy-closure-main} grows additively. We do not claim a fixed-rank exact representation over the entire sequence. The exact theorem is chunk-bounded: rank grows only up to the chosen chunk size $C$ and is then applied to boundary states.
\end{remark}

\subsection{Finite-state tied-write realization}
\label{sec:finite-state-main}

The constructive WY theorem is a kernel result. Separately, the following theorem resolves the finite-state version of the tied-write expressivity concern. The statement is intentionally elementary: a finite automaton is just a machine with finitely many states, and one-hot vectors turn its state update into matrix multiplication.

\begin{theorem}[Finite automata are exactly realized by the generalized tied-write template]
\label{thm:finite-tied-main}
Let $\mathcal A=(Q,\Sigma,\delta)$ be a deterministic finite automaton with $|Q|=N$. Consider the generalized tied-write affine template
\begin{equation}
z_t=(I-\beta_{\sigma_t}k_{\sigma_t}k_{\sigma_t}^*)\widetilde A_{\sigma_t}z_{t-1}
+\beta_{\sigma_t}k_{\sigma_t}r_{\sigma_t}
\end{equation}
in dimension $N$. Then there exist parameters such that $z_t$ is exactly the one-hot encoding of the automaton state after reading $\sigma_1,\ldots,\sigma_t$.
\end{theorem}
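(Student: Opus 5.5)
The construction follows the paper's own hint: set every write coefficient $\beta_\sigma=0$ and put the entire automaton into the unrestricted linear map $\widetilde A_\sigma$. Concretely, identify $Q=\{q_1,\ldots,q_N\}$ with the standard basis $e_1,\ldots,e_N$ of $\R^N$ (viewed inside $\F^N$). For each symbol $\sigma\in\Sigma$, let $\widetilde A_\sigma:=P_\sigma$ be the deterministic one-hot transition matrix of \cref{eq:background-onehot}, so that $P_\sigma e_i=e_j$ whenever $\delta(q_i,\sigma)=q_j$. Set $\beta_\sigma=0$ and choose $k_\sigma$ and $r_\sigma$ arbitrarily, for instance $k_\sigma=e_1$ and $r_\sigma=0$. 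Finally, take the initial vector $z_0=e_{i_0}$, the one-hot encoding of the start state $q_{i_0}$. If the statement intends the initial state to be arbitrary, the same argument works for any one-hot $z_0$.

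With these choices the erase factor is $I-0\cdot k_\sigma k_\sigma^*=I$ and the write term $0\cdot k_\sigma r_\sigma$ vanishes. The template recurrence therefore collapses to
\begin{equation*}
z_t=P_{\sigma_t}z_{t-1},
\end{equation*}
which is exactly the linear DFA recurrence \cref{eq:background-linear-dfa}.

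The proof is then a one-line induction on $t$. The base case holds because $z_0$ encodes the start state. For the inductive step, if $z_{t-1}=e_i$ encodes $q_{t-1}=q_i$, then $z_t=P_{\sigma_t}e_i=e_j$ with $q_j=\delta(q_i,\sigma_t)=q_t$. Determinism of $\delta$ makes $P_\sigma$ well defined, since each column has exactly one $1$. Invertibility is never needed, because $\widetilde A_\sigma$ is unrestricted in the template, so reset and other non-permutation transitions are covered.

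There is no genuine obstacle; the main care needed is in bookkeeping.
\begin{enumerate}
\item Check that $\beta_\sigma=0$ is admissible in the template. It is, since the parameterization \cref{eq:param-beta} only motivates $\beta\in[0,1]$ and the theorem allows arbitrary parameters.
\item Note that $r_\sigma$ here is a vector of length matching $z$, namely a one-column value, so the write term is well typed.
\item Make explicit that the claim concerns the generalized template with dense $\widetilde A_\sigma$, not the kernel-compatible $\Lambda_\sigma$ family.
\end{enumerate}
I would close with a remark that the theorem shows tying erase and write is no obstruction to exact finite-state realization, which is the interpretation the paper emphasizes.
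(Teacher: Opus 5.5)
Your proposal is correct and is the paper's proof: one-hot encode the states, take $\widetilde A_\sigma=P_\sigma$ and $\beta_\sigma=0$ so the template collapses to $z_t=P_{\sigma_t}z_{t-1}$, and conclude by the same induction as the paper's one-hot lemma. One minor bookkeeping slip: since $z_t,k_\sigma\in\F^N$, the write coefficient $r_\sigma$ must be a scalar, not a length-$N$ vector, though this does not matter once $\beta_\sigma=0$.
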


\begin{proof}
Index $Q=\{q_1,\ldots,q_N\}$ and encode $q_i$ as $e_i$. For each symbol $\sigma$, define the deterministic transition matrix $P_\sigma$ by $P_\sigma e_i=e_j$ whenever $\delta(q_i,\sigma)=q_j$. Choose $\widetilde A_\sigma=P_\sigma$ and $\beta_\sigma=0$. The tied-write update reduces to $z_t=P_{\sigma_t}z_{t-1}$, which is exactly the one-hot recurrence for the automaton. The detailed induction is given in \cref{app:finite-state-proof}.
\end{proof}

\begin{corollary}[Finite semidirect automata]
\label{cor:finite-semidir-main}
Every finite semidirect automaton with state space $Q=N\rtimes H$ is exactly realized by the generalized tied-write template in dimension $|N||H|$.
\end{corollary}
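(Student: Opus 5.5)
The corollary should follow from \cref{thm:finite-tied-main} once a finite semidirect automaton is viewed as an ordinary deterministic finite automaton on the finite set $Q=N\rtimes H$. The only real work is to fix what "finite semidirect automaton" means and check that its state space has size $|N||H|$.

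\textbf{Model.} I would take such an automaton to have states $(n,h)\in N\times H$, where $N$ is a finite memory set (monoid or module) and $H$ is a finite control group acting on $N$. Each input symbol $\sigma$ acts by left multiplication with a fixed element $(n_\sigma,h_\sigma)$ under the semidirect law \cref{eq:semidirect-law}:
\[
\delta((n,h),\sigma)=(n_\sigma+h_\sigma\cdot n,\ h_\sigma h).
\]
The composition is well defined because the action of $h_\sigma$ maps $N$ to $N$ and $H$ is closed under multiplication. Hence $\delta$ is a total function $Q\times\Sigma\to Q$, and $(Q,\Sigma,\delta)$ is a deterministic finite automaton with $|Q|=|N||H|$.

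\textbf{Encoding.} I would fix any bijection between $N\times H$ and $\{1,\ldots,|N||H|\}$, for instance the lexicographic index, and identify each state $(n,h)$ with the one-hot vector $e_{(n,h)}\in\R^{|N||H|}$. Then I apply \cref{thm:finite-tied-main} in dimension $|N||H|$. For each symbol I set $\widetilde A_\sigma=P_\sigma$ with $P_\sigma e_{(n,h)}=e_{\delta((n,h),\sigma)}$, and $\beta_\sigma=0$. This gives $z_t=e_{q_t}$ for all $t$ by the induction already established for \cref{thm:finite-tied-main}.

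One could also note that $P_\sigma$ factors as a Kronecker-type product of a control permutation on $H$ and an affine memory map on $N$, which reflects the semidirect structure. This factorization is not needed for exactness.

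\textbf{Main obstacle.} The only delicate point is definitional. If the paper's notion lets $N$ be a monoid acted on by endomorphisms rather than automorphisms, $P_\sigma$ need not be a permutation matrix. That is harmless, because \cref{thm:finite-tied-main} already allows non-invertible one-hot transition matrices. A second minor point is that the initial state $z_0=e_{q_0}$ must be supplied as the initial condition. Neither issue requires new machinery, so the proof reduces to a direct invocation of \cref{thm:finite-tied-main}.
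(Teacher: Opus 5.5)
Your proposal is correct and matches the paper's proof, which likewise observes only that $Q=N\rtimes H$ is a finite state set with $|Q|=|N||H|$ and then applies \cref{thm:finite-tied-main} to the deterministic transition map induced by each symbol. The paper deliberately fixes no transition law; its appendix's ``typical'' update $(n,h)\mapsto(n+\rho(h)a_\sigma,\,hh_\sigma)$ is not your left-multiplication model. Your specific model is therefore unnecessary but harmless, since your argument uses only finiteness and determinism.
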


\begin{proof}
The product state space is finite, with $|Q|=|N||H|$. Apply \cref{thm:finite-tied-main} to the deterministic transition map induced by each input symbol.
\end{proof}

\begin{remark}[What this solves and what it does not solve]
\label{rem:finite-vs-compressed}
\Cref{thm:finite-tied-main} removes finite automata as an expressivity obstruction for the generalized tied-write template. It does not say that the kernel-compatible SFDA transition family $\Lambda_t-u_tw_t^*$ can represent every such transition in low dimension, nor that a compressed affine memory system can always be compiled into tied SFDA with polynomial overhead. Those are low-dimensional compression and kernel-design questions, not finite-state realizability questions.
\end{remark}

\subsection{Orbit-restricted tied-write routing}
\label{sec:orbit-routing-main}

The one-hot theorem above uses unrestricted symbol-conditioned matrices $\widetilde A_\sigma=P_\sigma$.  This is useful for finite-state expressivity, but it is not yet a kernel theorem for the structured SFDA family.  A sharper question is when the tied write direction can emulate a decoupled write direction while using only the control operators already available to the model.

Let $k$ be a tied key and let $\mathcal G$ be the structured control family, such as signed permutations, block rotations, or products of local rotations.  The reachable write directions from $k$ are its orbit
\begin{equation}
\Orb_{\mathcal G}(k)=\{Rk:R\in\mathcal G\}.
\end{equation}
If a desired decoupled write direction $w_\sigma$ lies in this orbit, then its write contribution can be routed from a tied write along $k$.  For a full affine update, one must also account for the erase term in the delta rule.  The exact theorem in \cref{app:tied_write_orbit} states the needed protected-subspace condition: after the base transition, the carried memory must have zero projection onto the tied key.  Under that condition, a tied write followed by the structured routing operator exactly realizes the decoupled write.

This result is intentionally restricted.  Full orthogonal control is transitive on the sphere but not kernel-compatible in general; fixed $2\times2$ block rotations preserve the chunk-WY structure but are not transitive on the full sphere.  Thus the practical SFDA design question becomes a routing-capacity problem: can a kernel-compatible control family provide enough write-direction routing to improve KDA-style memory without destroying the block-WY kernel?

\section{Compact Expressivity of Kernel-Compatible SFDA}
\label{sec:compact-expressivity}

The finite-state theorem in \cref{sec:finite-state-main} shows exact one-hot realizability for a generalized tied-write template.  That result is useful, but it does not answer the practical kernel question: which memory systems are represented \emph{compactly} by the actual structured transition family
\begin{equation}
A_t=\Lambda_t-u_tr_t^*,
\label{eq:compact-primitive}
\end{equation}
where $\Lambda_t$ is diagonal, block-phase, or another structured control operator that is cheap to multiply and apply?  This section gives a positive, scoped answer.  Kernel-compatible SFDA compactly represents three useful families: phase-group counters, register/reset memories, and bounded stacks under monomial or partial-permutation control.  It also gives the precise rank budget explaining why chunk-WY is the correct computational invariant.

\begin{definition}[Structured control family]
\label{def:structured-control}
A structured control family $\mathcal L\subseteq\F^{d\times d}$ is a set of matrices containing $I$ such that products and adjoints of its elements remain efficiently applicable.  The default SFDA choice is the complex diagonal phase-decay family
\begin{equation}
\mathcal L_{\rm phase}=\{\diag(\alpha\od e^{i\theta}):0\le \alpha_i\le1\}.
\end{equation}
The same chunk-WY algebra also applies when $\mathcal L$ is replaced by aligned real $2\times2$ rotation/reflection blocks, signed permutations, or partial permutations, provided products $\Gamma=\Lambda_C\cdots\Lambda_1$ and applications $\Gamma x$, $\Lambda Y$, and $\Gamma^*r$ remain cheap.
\end{definition}

\begin{definition}[$C$-compact SFDA realization]
\label{def:c-compact}
A symbolic memory system with transitions $T_\sigma$ is $C$-compactly realized by kernel-compatible SFDA in dimension $d$ if every symbol transition can be implemented by at most $C$ primitive affine steps
\begin{equation}
S^+=(\Lambda-u r^*)S+B,
\qquad \Lambda\in\mathcal L,
\end{equation}
using the same $d$-dimensional row state.  For tied SFDA, the primitive has $u=\beta k$, $r=\Lambda^*k$, and $B=\beta kv^*$.
\end{definition}

\subsection{Rank budget: the exact compact class inside a chunk}

\begin{theorem}[Kernel-compatible rank budget]
\label{thm:rank-budget}
Let $P_C=A_CA_{C-1}\cdots A_1$ be a product of $C$ primitive SFDA transitions $A_t=\Lambda_t-u_tr_t^*$ with $\Lambda_t\in\mathcal L$.  Then
\begin{equation}
P_C=\Gamma_C-Y_CM_CW_C^*,
\qquad
\Gamma_C=\Lambda_C\cdots\Lambda_1,
\end{equation}
where $Y_C,W_C\in\F^{d\times C}$ and $M_C\in\F^{C\times C}$.  Consequently
\begin{equation}
\rank(P_C-\Gamma_C)\le C.
\label{eq:rank-budget}
\end{equation}
Conversely, every transition generated by at most $C$ such primitive SFDA steps is represented exactly by these chunk-WY factors.
\end{theorem}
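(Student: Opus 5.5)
The plan is to obtain the factorization directly from the constructive recursion of \cref{thm:constructive-wy}, applied to the primitives $A_t=\Lambda_t-u_tr_t^*$ with $\Lambda_t\in\mathcal L$. That theorem was stated for diagonal or aligned block-phase $\Lambda_t$. The first step is to check that its proof uses nothing beyond ordinary matrix algebra: associativity, and the adjoint identity $(\Gamma_{t-1}^*r_t)^*=r_t^*\Gamma_{t-1}$. In particular, it never uses commutativity or diagonality of the $\Lambda_t$. So the recursion \cref{eq:constructive-Gamma,eq:constructive-Y,eq:constructive-W,eq:constructive-M} is valid verbatim for any $\Lambda_t\in\mathcal L$ as in \cref{def:structured-control}. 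Membership in $\mathcal L$ only guarantees that the factors are cheap to form: $\Lambda_tY_{t-1}$, $\Gamma_{t-1}^*r_t$ and $\Gamma_t=\Lambda_t\Gamma_{t-1}$.

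Unrolling \cref{eq:constructive-Gamma} from $\Gamma_0=I$ gives $\Gamma_C=\Lambda_C\cdots\Lambda_1$. Each step appends one column to $Y$ and $W$ and one row and column to $M$. By induction, $Y_C,W_C\in\F^{d\times C}$ and $M_C\in\F^{C\times C}$.

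The rank bound \cref{eq:rank-budget} then follows from the invariant \cref{eq:constructive-invariant}:
\[
P_C-\Gamma_C=-Y_CM_CW_C^*,
\qquad
\rank(Y_CM_CW_C^*)\le\rank(Y_C)\le C.
\]
As an optional remark, $M_C$ is unit lower triangular, hence invertible. This is not needed for the inequality.

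For the converse, take a transition produced by $C'\le C$ primitive steps. Run the same recursion for $C'$ steps; this yields exact factors of width $C'\le C$. If fixed width $C$ is desired, pad with identity primitives $\Lambda=I\in\mathcal L$ and $u=0$ (equivalently $\beta=0$). These append a zero column to $Y$, a column $\Gamma^*r$ to $W$ that is multiplied against zero, and a trivial block to $M$, so the product is unchanged.

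The main point needing care is the generalization from diagonal $\Lambda_t$ to a general $\mathcal L$. I would make it rigorous by redoing the induction step explicitly. Expanding $(\Lambda_t-u_tr_t^*)(\Gamma_{t-1}-Y_{t-1}M_{t-1}W_{t-1}^*)$ gives four terms:
\[
\Lambda_t\Gamma_{t-1}
-\Lambda_tY_{t-1}M_{t-1}W_{t-1}^*
-u_tr_t^*\Gamma_{t-1}
+u_tr_t^*Y_{t-1}M_{t-1}W_{t-1}^*.
\]
The next step is to multiply out the block form of $Y_tM_tW_t^*$ and match it against these terms:
\[
Y_tM_tW_t^*
=\Lambda_tY_{t-1}M_{t-1}W_{t-1}^*
-u_tr_t^*Y_{t-1}M_{t-1}W_{t-1}^*
+u_tr_t^*\Gamma_{t-1}.
\]
Then $\Gamma_t-Y_tM_tW_t^*$ reproduces the expansion exactly. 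None of these identities uses any property of $\Lambda_t$ beyond its being a matrix.
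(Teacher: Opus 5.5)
Your proposal is correct and follows the paper's proof: you read the factorization off the constructive recursion of \cref{thm:constructive-wy}, bound the rank by the $C$-dimensional inner dimension of $Y_CM_CW_C^*$, and get the converse by running the recursion. You also check explicitly that the induction never uses diagonality of $\Lambda_t$, and you pad with identity primitives when $C'<C$; this works because $I\in\mathcal L$ and the zero column $u=0$ cancels whatever new row appears in $M$. Both points make explicit what the paper leaves implicit.
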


\begin{proof}
The factorization is exactly \cref{thm:constructive-wy}.  Since $Y_CM_CW_C^*$ factors through a $C$-dimensional middle space, its rank is at most $C$.  The converse is definitional: if the transition is generated by $C$ primitive steps, the constructive recursion produces its exact chunk representation.
\end{proof}

\begin{proposition}[No global fixed-rank exact invariant in general]
\label{prop:no-global-fixed-rank}
For generic rank-one delta transitions, the exact correction rank can grow linearly with sequence length up to $d$.  In particular, for any $r\le d$, there are $r$ primitive transitions with $\Lambda_i=I$ such that
\begin{equation}
\rank\!\left(I-A_r\cdots A_1\right)=r.
\end{equation}
\end{proposition}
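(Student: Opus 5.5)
The plan is to exhibit an explicit family of orthogonal Householder-type transitions whose product has an easily computable defect $I-A_r\cdots A_1$. Take $\Lambda_i=I$, $u_i=r_i=e_i$ (standard basis vectors, $i=1,\dots,r\le d$), so that $A_i=I-e_ie_i^*$ is the orthogonal projection killing coordinate $i$. In tied SFDA language this is $\beta_i=1$, $k_i=e_i$, $\theta_i=0$, $\alpha_i=\one$, which satisfies the stable parameterization constraints. Because the projectors are diagonal, they commute, and the product is
\[
A_r\cdots A_1=I-\sum_{i=1}^r e_ie_i^*,
\]
so $I-A_r\cdots A_1=\sum_{i=1}^r e_ie_i^*$, the projector onto $\Span\{e_1,\dots,e_r\}$, which has rank exactly $r$. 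This already proves the displayed identity, and $r=d$ gives rank $d$, showing linear growth up to $d$.

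To address the word ``generic'' I would add a short argument that the rank cannot be smaller on an open dense set. First, the upper bound $\rank(I-A_r\cdots A_1)\le r$ holds for any such product: by \cref{thm:rank-budget} with $\Gamma_r=I$, the correction factors through an $r$-dimensional space. Second, the map $(u_1,r_1,\dots,u_r,r_r)\mapsto I-A_r\cdots A_1$ is polynomial, so the set where some $r\times r$ minor is nonzero is Zariski open. The explicit example above shows this set is nonempty, and hence it is dense. Therefore, for generic rank-one delta transitions the rank equals $r$ exactly.

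I expect the main obstacle to be phrasing ``generic'' precisely rather than any computation. The explicit construction is routine, and the genericity step relies only on the standard fact that a nonvanishing polynomial minor defines a nonempty Zariski-open, hence dense, set. A secondary point to verify is that the example respects the tied form $u=\beta k$, $r=\Lambda^*k$; it does, with $\beta=1$ and $k=e_i$.
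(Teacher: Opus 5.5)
Your construction $A_i=I-e_ie_i^*$ (with $\Lambda_i=I$), which uses the commutation of the coordinate projectors to get $I-A_r\cdots A_1=\sum_{i=1}^r e_ie_i^*$ of rank $r$, is exactly the paper's proof. Your additional genericity step is correct: the rank-budget upper bound, combined with a minor that is a real-polynomial function of the parameters and is nonzero at the explicit point (hence nonzero on an open dense set), justifies the word ``generic'' in the statement, which the paper's proof leaves unaddressed.
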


\begin{proof}
Let $A_i=I-e_ie_i^*$ for $i=1,\ldots,r$.  These are primitive delta transitions with $\Lambda_i=I$, $u_i=e_i$, and $r_i=e_i$.  Since the coordinate projections commute,
\begin{equation}
A_r\cdots A_1=I-\sum_{i=1}^r e_ie_i^*.
\end{equation}
Therefore $I-A_r\cdots A_1=\sum_{i=1}^r e_ie_i^*$ has rank $r$.
\end{proof}

\paragraph{Consequence.}
\Cref{prop:no-global-fixed-rank} explains why the paper uses chunk-bounded exactness.  A fixed-rank global summary is impossible in general.  The exact invariant is local: let the rank grow to $C$ inside a chunk, apply the chunk transfer to the boundary state, and restart the rank budget in the next chunk.

\subsection{Useful compactly representable memory systems}

\begin{theorem}[Direct products of cyclic counters]
\label{thm:cyclic-products-compact}
Let
\begin{equation}
G=C_{m_1}\times\cdots\times C_{m_\ell}
\end{equation}
be a product of cyclic groups.  Suppose each input symbol $\sigma$ carries an increment vector $a(\sigma)=(a_1(\sigma),\ldots,a_\ell(\sigma))$ with $a_j(\sigma)\in\mathbb Z_{m_j}$.  Then phase-only SFDA exactly realizes the group state in complex dimension $\ell$ with chunk size $C=1$ by setting
\begin{equation}
\beta_\sigma=0,
\qquad
\Lambda_\sigma=\diag\!\left(e^{2\pi i a_1(\sigma)/m_1},\ldots,e^{2\pi i a_\ell(\sigma)/m_\ell}\right).
\end{equation}
The state norm is preserved, and the real implementation uses dimension $2\ell$.
\end{theorem}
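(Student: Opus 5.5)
The plan is to reduce the statement to \cref{prop:phase-memory} applied coordinatewise, and then check the chunk-size and norm claims directly. First, fix the encoding. Represent the group element $g=(g_1,\ldots,g_\ell)\in G$ by the vector
\[
\phi(g)=\bigl(e^{2\pi i g_1/m_1},\ldots,e^{2\pi i g_\ell/m_\ell}\bigr)\in\C^\ell.
\]
Each coordinate is well defined because $e^{2\pi i g_j/m_j}$ depends only on $g_j$ modulo $m_j$. The map $\phi$ is also injective: on $\mathbb Z_{m_j}$, the value $e^{2\pi i g_j/m_j}$ determines $g_j$, since $\{e^{2\pi i c/m_j}: c\in\mathbb Z_{m_j}\}$ consists of $m_j$ distinct roots of unity. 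So the group state can be read off exactly from the hidden vector.

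Second, with $\beta_\sigma=0$ the tied SFDA update collapses to $z_t=\Lambda_{\sigma_t}z_{t-1}$, as in \cref{prop:interpolation}. Because $\Lambda_\sigma$ is diagonal, the $j$-th coordinate evolves independently as
\[
z_{t,j}=e^{2\pi i a_j(\sigma_t)/m_j}z_{t-1,j}.
\]
This is exactly the setting of \cref{prop:phase-memory} with $\alpha=1$. Running induction on $t$ then gives
\[
z_t=\phi\Bigl(g_0+\sum_{s\le t}a(\sigma_s)\Bigr)
\]
whenever $z_0=\phi(g_0)$. In other words, $\Lambda_\sigma\phi(g)=\phi(g+a(\sigma))$, so $\phi$ intertwines the group action with the recurrence. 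The diagonal $\Lambda_\sigma$ has unimodular entries, so it lies in $\mathcal L_{\rm phase}$ with $\alpha=\one$.

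Third, the remaining claims follow quickly. Each symbol uses a single primitive step $S^+=(\Lambda-ur^*)S+B$ with $u=\beta_\sigma k=0$ and $B=0$, so by \cref{def:c-compact} the realization has $C=1$. Norm preservation holds because $\Lambda_\sigma$ is unitary: $|z_{t,j}|=|z_{0,j}|=1$, hence $\norm{z_t}_2=\sqrt{\ell}$. The real-dimension claim comes from \cref{prop:realification-main}, which replaces each complex coordinate by a $2\times2$ rotation block, giving dimension $2\ell$ in real tensors. If a matrix state $S\in\C^{\ell\times d_v}$ is wanted, the same argument applies column by column.

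The main obstacle is a subtlety, not a computation: showing that the phase encoding is \emph{exact} and faithful. This requires well-definedness of $\phi$ modulo $m_j$ and injectivity of $\phi$, both of which rest on the roots-of-unity argument in the first step. I would state these two facts explicitly rather than leaving them implicit.
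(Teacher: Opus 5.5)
Your proposal is correct and follows essentially the same route as the paper: set $\beta_\sigma=0$ so the update becomes $z_t=\Lambda_{\sigma_t}z_{t-1}$, read off each coordinate as $\exp\bigl(2\pi i\sum_{s\le t}a_j(\sigma_s)/m_j\bigr)$ (the character embedding of the current group element), then use unit modulus for norm preservation and the real/complex block equivalence for the real dimension $2\ell$. Several details you make explicit are left implicit in the paper: that the embedding is well defined and injective, that the initial state can be a general $z_0=\phi(g_0)$ rather than $z_0=\one$, and that $C=1$ follows from \cref{def:c-compact}.
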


\begin{proof}
Let $z_0=\one\in\C^\ell$.  With $\beta_\sigma=0$, the recurrence is $z_t=\Lambda_{\sigma_t}z_{t-1}$.  Therefore
\begin{equation}
[z_t]_j=\exp\!\left(2\pi i\sum_{s\le t}a_j(\sigma_s)/m_j\right),
\end{equation}
which is exactly the character embedding of the current element of $G$.  Each coordinate has unit modulus, so the norm is bounded.  The complex-to-real equivalence is \cref{lem:real-complex}.
\end{proof}

\begin{corollary}[Compactness over one-hot state]
The realization in \cref{thm:cyclic-products-compact} represents a state space of size $\prod_{j=1}^\ell m_j$ using $2\ell$ real coordinates, before the finite readout.  Thus phase SFDA gives exponentially smaller state dimension than one-hot encoding for product counters.
\end{corollary}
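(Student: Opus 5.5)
The plan is to reduce the corollary to the explicit realization in \cref{thm:cyclic-products-compact} plus a counting argument. First, I would fix the state space: the group $G=C_{m_1}\times\cdots\times C_{m_\ell}$ has exactly $|G|=\prod_{j=1}^\ell m_j$ elements. The one-hot encoding of \cref{eq:background-onehot} therefore needs dimension $\prod_j m_j$. The phase realization needs only complex dimension $\ell$, which by \cref{prop:realification-main} (equivalently \cref{lem:real-complex}) is real dimension $2\ell$.

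Second, I would check that the phase realization is faithful, so that the $2\ell$ coordinates genuinely represent all $\prod_j m_j$ states. The state is mapped to the character vector
\[
g=(g_1,\ldots,g_\ell)\mapsto z(g)=\bigl(e^{2\pi i g_1/m_1},\ldots,e^{2\pi i g_\ell/m_\ell}\bigr).
\]
Each coordinate map $g_j\mapsto e^{2\pi i g_j/m_j}$ is injective on $\mathbb Z_{m_j}$, since $e^{2\pi i a/m}=1$ iff $m\mid a$. Hence $z$ is injective on $G$. Distinct group states thus yield distinct hidden vectors on a finite set of points, so a finite readout can recover the state exactly; for instance, a nearest-point lookup among the $|G|$ values. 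I would phrase ``before the finite readout'' as meaning exactly this: the recurrent state is $z(g)$ and decoding is a separate finite map.

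Third, I would compare dimensions. Whenever every $m_j\ge2$, we have $\prod_j m_j\ge 2^\ell$. One-hot dimension is therefore at least $2^\ell$, while the phase dimension is $2\ell$, an exponential gap in $\ell$. For example, if all $m_j=m$, the comparison is $m^\ell$ versus $2\ell$.

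The only step needing care is making ``exponentially smaller'' precise. Trivial factors $m_j=1$ add dimension without adding states. I would handle this by assuming $m_j\ge2$, or by discarding trivial factors, which shrinks $\ell$ and only improves the phase dimension. The rest follows directly from \cref{thm:cyclic-products-compact}.
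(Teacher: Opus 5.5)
Your proposal is correct and matches the paper's reasoning. The paper gives no separate proof: it treats the corollary as an immediate counting consequence of \cref{thm:cyclic-products-compact}. Your injectivity check of the character map $g\mapsto z(g)$ and your handling of trivial factors $m_j=1$ make explicit what the paper leaves tacit.
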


\begin{theorem}[Dihedral phase-orientation memory]
\label{thm:dihedral-compact}
Assume the structured control family contains the planar rotation
\begin{equation}
R=\begin{bmatrix}\cos(2\pi/m)&-\sin(2\pi/m)\\ \sin(2\pi/m)&\cos(2\pi/m)\end{bmatrix}
\end{equation}
and the reflection
\begin{equation}
F=\begin{bmatrix}1&0\\0&-1\end{bmatrix}.
\end{equation}
Then the dihedral group $D_m=\langle R,F:F^2=I,\ F R F=R^{-1}\rangle$ is exactly realized by kernel-compatible SFDA with $\beta=0$, row dimension $d=2$, value width $d_v=2$, and chunk size $C=1$, by maintaining the matrix state $S_t=\rho(g_t)\in\R^{2\times2}$.
\end{theorem}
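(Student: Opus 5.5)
The plan is to reduce the statement to the phase-only recurrence with $\beta=0$ and then check that the matrix state tracks the group element by left multiplication. With $\beta_t=0$ the primitive step is $S^+=(\Lambda-ur^*)S+B$ with $u=\beta k=0$ and $B=\beta kv^*=0$, so it collapses to $S_t=\Lambda_{\sigma_t}S_{t-1}$. Here $\Lambda_{\sigma}$ is drawn from the assumed control family $\mathcal L$ containing $R$ and $F$.

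For the construction, take $\rho:D_m\to O(2)$ to be the standard representation $\rho(R^aF^b)=R^aF^b$. The matrices $R$ and $F$ satisfy $F^2=I$, $R^m=I$ and $FRF=R^{-1}$, so $\rho$ respects the defining relations. Each input symbol $\sigma$ carries a group element $h_\sigma\in D_m$, written in normal form $h_\sigma=R^{a}F^{b}$. Set $\Lambda_\sigma=\rho(h_\sigma)$ and initialize $S_0=I=\rho(e)$. By Definition~\ref{def:c-compact}, each symbol must be a single primitive step ($C=1$). This requires $\rho(h_\sigma)$ itself to be applicable as one element of $\mathcal L$, which holds if $\mathcal L$ is closed under products, i.e., contains the finite group generated by $R$ and $F$. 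Definition~\ref{def:structured-control} lists aligned $2\times2$ rotation/reflection blocks as admissible, so I will read the hypothesis this way. Alternatively, I can note that without closure each symbol uses at most $a+b$ primitive steps, all with $\beta=0$.

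The induction then goes as follows. Assume $S_{t-1}=\rho(g_{t-1})$ with $g_{t-1}=h_{\sigma_{t-1}}\cdots h_{\sigma_1}$. Then $S_t=\rho(h_{\sigma_t})\rho(g_{t-1})=\rho(h_{\sigma_t}g_{t-1})=\rho(g_t)$, since $\rho$ is a homomorphism. The state has $d=2$ rows and $d_v=2$ value columns. Exactness, meaning distinct group elements give distinct states, follows from faithfulness of $\rho$ for $m\ge3$. Indeed, $\rho(R^a)$ is a rotation and $\rho(R^aF)$ has determinant $-1$, so the two cosets are separated. Within the rotations, $R^a$ for $a=0,\ldots,m-1$ are pairwise distinct because $2\pi/m$ has order exactly $m$. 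Within the reflections, $R^aF$ are distinct for the same reason.

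Norm boundedness follows because all $\Lambda$ are orthogonal, so $\norm{S_t}_2=1$ for all $t$. Consistency with the kernel is covered by Theorem~\ref{thm:rank-budget} with $C=1$ and zero correction rank, giving $P=\Gamma$. I expect the main obstacle to be the faithfulness and small-$m$ edge cases ($m=1,2$). For these I will either restrict to $m\ge3$ or note that the abstract presentation's group coincides with its image in these cases as well. A second point needing care is the interpretive issue of whether $F$ counts as an admissible element of $\mathcal L$, which I resolve via Definition~\ref{def:structured-control}'s explicit inclusion of reflection blocks.
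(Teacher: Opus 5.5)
Your proposal is correct and follows the paper's proof essentially step for step: set $\beta=0$ so the update collapses to $S_t=\Lambda_{\sigma_t}S_{t-1}$, take $\Lambda_\sigma=\rho(h_\sigma)$ with $S_0=I$, induct using the homomorphism property, and get bounded norm from orthogonality. Your additions make precise several points the paper leaves implicit: you state the relation $R^m=I$ explicitly (the displayed presentation lists only $F^2=I$ and $FRF=R^{-1}$), you give the determinant/order argument for faithfulness (which already works for every $m\ge1$, so $m=1,2$ need no separate treatment), and you note that $C=1$ requires either $\rho(h_\sigma)\in\mathcal L$ or restricting the input symbols to the generators $R$ and $F$.
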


\begin{proof}
Let the input symbol choose either $R$, $F$, or a product generator in $D_m$, and set $\Lambda_\sigma=\rho(\sigma)$, where $\rho$ is the faithful planar representation generated by $R$ and $F$.  With $\beta=0$, the update is $S_t=\Lambda_{\sigma_t}S_{t-1}$.  Starting from $S_0=I$, induction gives $S_t=\rho(\sigma_t\cdots\sigma_1)$.  The matrices $R$ and $F$ satisfy the defining relations of $D_m$, so this exactly tracks the dihedral state.  Orthogonality gives bounded norm.
\end{proof}

\begin{remark}[Pure phase vs. reflection]
The default complex-diagonal phase family represents the rotation subgroup $C_m$.  Exact dihedral tracking needs a reflection or conjugation operator.  This remains kernel-compatible when the control family is expanded to aligned $2\times2$ orthogonal blocks, because the WY recursion only requires cheap multiplication and application of the structured controls.
\end{remark}

\begin{theorem}[Register and reset memories]
\label{thm:register-compact}
Consider $d$ independent memory registers, each storing a value vector in $\R^{d_v}$.  The operation ``write value $a$ into register $i$ and leave all other registers unchanged'' is exactly represented by tied SFDA with chunk size $C=1$:
\begin{equation}
S^+=(I-e_ie_i^*)S+e_i a^*.
\end{equation}
Equivalently, choose $\Lambda=I$, $\beta=1$, $k=e_i$, and $v=a$.
\end{theorem}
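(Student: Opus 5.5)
The proof is a direct substitution into the tied SFDA recurrence \cref{eq:sfda}, followed by a row-by-row check. Take the row-state convention, so the state $S\in\R^{d\times d_v}$ has row $i$ equal to the (conjugate-transposed) content of register $i$. Choose $\Lambda=I$, which lies in every structured control family by \cref{def:structured-control} (equivalently $\alpha=\one$, $\theta=0$). Also choose $\beta=1$, $k=e_i$ and $v=a$. Since $\norm{e_i}_2=1$ and $\beta=1$, these choices satisfy the stability constraints $0\le\beta\norm{k}_2^2\le1$.

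With these parameters, the primitive update
\[
S^+=(I-\beta kk^*)\Lambda S+\beta kv^*
\]
becomes exactly
\[
S^+=(I-e_ie_i^*)S+e_ia^*.
\]
In the WY language of \cref{eq:single-step-wy}, this is one primitive step $\Lambda-ur^*$ with $u=e_i$ and $r=\Lambda^*k=e_i$. It is therefore a $1$-compact realization in the sense of \cref{def:c-compact}, and its chunk size is $C=1$.

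Next, verify the semantics row by row. For $j\neq i$, multiplying on the left by $e_j^*$ gives
\[
e_j^*S^+=e_j^*S-(e_j^*e_i)e_i^*S+(e_j^*e_i)a^*=e_j^*S,
\]
so every other register is unchanged. For $j=i$, the same computation gives
\[
e_i^*S^+=e_i^*S-e_i^*S+a^*=a^*,
\]
so register $i$ now holds $a$, and its old content has been erased completely rather than merely attenuated. This is exactly the ``write $a$ into register $i$'' operation.

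The reset memory is the special case $a=0$. Sequences of writes and resets then follow by induction on $t$ through the affine composition law \cref{eq:affine-law}, with one primitive step per symbol.

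There is no real obstacle here. The only point needing care is that the tied erase and write directions coincide, both being $e_i$, and that $\beta\norm{k}^2=1$ gives exact overwrite rather than partial decay. Over $\C$ one should also keep track of conjugation in $v^*$, but with $a$ real, or with the stated convention, this is immediate.
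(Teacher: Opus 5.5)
Your proposal is correct and follows the paper's proof: you substitute $\Lambda=I$, $\beta=1$, $k=e_i$, $v=a$ into the tied update, then observe that $I-e_ie_i^*$ zeros row $i$ and fixes the other rows, so the write term places $a^*$ in row $i$. Your explicit row-by-row check with $e_j^*$, your WY remark ($u=r=e_i$), and your reset case $a=0$ are more detailed versions of that same argument.
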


\begin{proof}
Substituting $\Lambda=I$, $\beta=1$, and $k=e_i$ into tied SFDA gives
\begin{equation}
S^+=(I-e_ie_i^*)S+e_i a^*.
\end{equation}
The projection $I-e_ie_i^*$ zeros exactly row $i$ and leaves all other rows unchanged; the write term replaces row $i$ by $a^*$.
\end{proof}

\begin{corollary}[Flip-flops and last-write-wins memories]
Finite flip-flop memories, reset automata, and last-write-wins associative slots are compactly represented by KDA/SFDA with one row per memory slot and one value channel per symbol feature, rather than one coordinate per global finite state.
\end{corollary}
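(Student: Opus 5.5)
The plan is to reduce every case to \cref{thm:register-compact}, using only two primitive tied-SFDA steps. The first is a \emph{write} step with $\Lambda=I$, $\beta=1$, $k=e_i$, $v=a$, which replaces row $i$ of $S$ by $a^*$ and leaves the other rows alone. The second is a \emph{no-op} step with $\Lambda=I$, $\beta=0$, which is the identity map. Both are single primitive steps, so chunk size $C=1$ in the sense of \cref{def:c-compact}. I would then exhibit, for each memory system named in the corollary, a symbol-to-primitive assignment and a readout.

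\emph{Flip-flops and reset automata.} I would characterize these as semiautomata in which every input symbol acts either as the identity or as a constant (reset) map $q\mapsto c_\sigma$. I take one memory row, $d=1$, and value width $d_v=|Q|$. State $q$ is stored as the row $e_q^*$, so that $S_t=e_1e_{q_t}^*$. A reset symbol $\sigma$ maps to the write step with $k=e_1$ and $v=e_{c_\sigma}$. An identity symbol (for example ``read'' or ``hold'') maps to the no-op. Induction on $t$, with \cref{thm:register-compact} supplying the step, gives $S_t=e_1e_{q_t}^*$ exactly. The output $o_t=S_t^*q_t$ with query $q_t=e_1$ reads back $e_{q_t}$. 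The classical flip-flop is the case $|Q|=2$, where one can even use $d_v=1$ with the scalar bit as the value.

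\emph{Last-write-wins associative slots.} There are $d$ slots, each holding a value from a feature space. I use one row per slot and value channels given by the value features, for instance a one-hot or real embedding in $\R^{d_v}$. The symbol ``write $a$ to slot $i$'' maps to the write step with $k=e_i$ and $v=a$. Since $I-e_ie_i^*$ fixes every row $j\neq i$, the induction invariant is that row $j$ of $S_t$ equals the most recent value written to slot $j$, or its initial value if slot $j$ has never been written. This is exactly the last-write-wins semantics. Reading slot $j$ uses the query $e_j$. Because each primitive is exactly the KDA form $\theta=0$, the same construction holds for KDA by \cref{prop:kda-special}.

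The step I expect to need care is making ``compactly'' precise rather than the realization itself. I would argue it by counting states. The joint finite state space of $d$ slots over a value alphabet $V$ has $|V|^d$ elements, so a one-hot realization as in \cref{thm:finite-tied-main} needs dimension $|V|^d$. The construction above instead uses a $d\times|V|$ state with $C=1$. The one subtle point is faithfulness: the value encoding must be injective on $V$, so that distinct global states give distinct matrices $S_t$. One-hot value channels guarantee this, and the readout then recovers each slot separately.
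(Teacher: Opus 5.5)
Your proposal is correct and follows essentially the paper's route. The paper states this corollary as an immediate consequence of \cref{thm:register-compact}: one row per slot, erase-then-write on that row with $\Lambda=I$, $\beta=1$, $k=e_i$, and every other row untouched. Your explicit encodings, the $\beta=0$ no-op for hold symbols, and the $|V|^d$ versus $d\times|V|$ count make that implicit argument precise. One nuance: for a single reset automaton, your one-hot value code with $d_v=|Q|$ is no smaller than a one-hot state encoding. The compactness therefore comes from running many slots in parallel, or from using any injective value code (for instance binary), which suffices because each write replaces the whole row.
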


\begin{theorem}[Bounded stacks under partial-permutation control]
\label{thm:bounded-stack-compact}
Suppose the structured control family contains the zero-fill shift matrices $Z_{\downarrow},Z_{\uparrow}\in\R^{h\times h}$ defined by
\begin{equation}
[Z_{\downarrow}S]_{i+1,:}=S_{i,:}\ (i<h),\quad [Z_{\downarrow}S]_{1,:}=0,
\end{equation}
with $Z_{\uparrow}$ shifting rows upward and zero-filling the bottom row.  Then a bounded stack of depth $h$ over value embeddings $a\in\R^{d_v}$ is exactly represented by kernel-compatible SFDA in row dimension $h$:
\begin{align}
\mathrm{PUSH}(a):\quad&S^+=Z_{\downarrow}S+e_1a^*,\label{eq:push}\\
\mathrm{POP}:\quad&S^+=Z_{\uparrow}S.\label{eq:pop}
\end{align}
The push operation is tied-SFDA-compatible because $e_1^*Z_{\downarrow}=0$.
\end{theorem}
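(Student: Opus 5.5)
We prove the statement by fixing an encoding of stack contents as a row state and checking three things: that each operation acts correctly on that encoding, that each operation is a legitimate primitive of the kernel-compatible family, and that the tied form of the push holds.

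\emph{Encoding.} A stack $[a_1,\ldots,a_n]$ with top $a_1$ and $n\le h$ is stored as $S\in\R^{h\times d_v}$ with $S_{j,:}=a_j^*$ for $j\le n$ and all other rows zero. The empty stack is $S_0=0$.

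\emph{Correctness of the operations.} Take $n<h$, so that no overflow occurs. For \cref{eq:push}, $Z_{\downarrow}S$ moves row $j$ to row $j+1$ and zeroes row $1$. Adding $e_1a^*$ then places $a^*$ in row $1$, which is exactly the encoding of $[a,a_1,\ldots,a_n]$. For \cref{eq:pop}, $Z_{\uparrow}S$ moves row $j+1$ to row $j$ and zero-fills the bottom row, which encodes $[a_2,\ldots,a_n]$. Popping the empty stack returns $0$. By induction over the operation sequence, the state always equals the encoding of the true stack. At depth $h$, a push drops the bottom element; this is the standard bounded-stack semantics, and we state it explicitly.

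\emph{Membership in the primitive family.} Each operation has the form $S^+=(\Lambda-ur^*)S+B$ with $u=r=0$. For push, $\Lambda=Z_{\downarrow}$ and $B=e_1a^*$; for pop, $\Lambda=Z_{\uparrow}$ and $B=0$. Both $\Lambda$ lie in $\mathcal L$ by hypothesis. Each symbol uses a single primitive step, so $C=1$, and the chunk-WY recursion of \cref{thm:constructive-wy} applies verbatim for structured $\mathcal L$, since products of zero-fill shifts are again cheap partial shifts.

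\emph{The tied-SFDA form of push.} This is the only step needing an identity, and I expect it to be the main point to get right. The tied primitive is $(I-\beta kk^*)\Lambda S+\beta kv^*$; choose $k=e_1$, $\beta=1$, $v=a$, $\Lambda=Z_{\downarrow}$. Then
\[
(I-e_1e_1^*)Z_{\downarrow}=Z_{\downarrow}-e_1(e_1^*Z_{\downarrow})=Z_{\downarrow},
\]
because $e_1^*Z_{\downarrow}=0$: row $1$ of $Z_{\downarrow}$ is zero by the definition of the zero-fill. So the erase term vanishes and the tied update coincides with \cref{eq:push}. Equivalently, in the single-step form \cref{eq:single-step-wy}, the right WY vector is $r=\Lambda^*k=Z_{\downarrow}^*e_1=0$, so the rank-one correction is identically zero.

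For pop, take $\beta=0$, which gives the pure control step $S^+=Z_{\uparrow}S$.
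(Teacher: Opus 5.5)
Your proposal is correct and takes the same route as the paper. The paper also sets $\Lambda=Z_{\downarrow}$, $\beta=1$, $k=e_1$, $v=a$ for push, uses that the first row of $Z_{\downarrow}S$ vanishes so $(I-e_1e_1^*)Z_{\downarrow}S=Z_{\downarrow}S$, and takes $\Lambda=Z_{\uparrow}$, $\beta=0$ for pop. Your explicit stack encoding, the induction over operations, and the observation that the right WY vector $r=Z_{\downarrow}^*e_1$ is zero are correct details the paper leaves implicit.
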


\begin{proof}
For push, set $\Lambda=Z_{\downarrow}$, $\beta=1$, $k=e_1$, and $v=a$.  The tied transition gives
\begin{equation}
S^+=(I-e_1e_1^*)Z_{\downarrow}S+e_1a^*.
\end{equation}
Since the first row of $Z_{\downarrow}S$ is zero, $(I-e_1e_1^*)Z_{\downarrow}S=Z_{\downarrow}S$, proving \cref{eq:push}.  For pop, set $\Lambda=Z_{\uparrow}$ and $\beta=0$, giving \cref{eq:pop}.
\end{proof}

\begin{remark}[Scope of the stack theorem]
The bounded-stack theorem uses partial-permutation controls, not the default phase-only controls.  It is included because the constructive WY theorem is valid for any structured control family whose products and applications are cheap.  If the architecture is restricted strictly to complex diagonal phases, then cyclic/product counters and oscillatory filters are the default compact automata family, while stack shifts require an additional monomial or partial-permutation control option.
\end{remark}

\subsection{What this section proves}

The compact expressivity picture is now precise:
\begin{itemize}[leftmargin=*]
    \item phase-only SFDA compactly represents finite abelian phase counters and oscillatory diagonal SSM-style filters;
    \item adding aligned reflections gives compact dihedral and orientation memory;
    \item the rank-one delta correction gives exact register/reset memories;
    \item adding cheap partial permutations gives bounded stacks and shift-register memories;
    \item arbitrary finite automata remain exactly realizable by one-hot lifting in the generalized template, but not necessarily compactly by $\Lambda-u r^*$ transitions.
\end{itemize}
Thus the remaining empirical question is not whether SFDA can represent any finite state machine in principle, but whether these compact structured memories improve extrapolation and the KDA quality--efficiency frontier.

\section{Chunk Algorithm, Cost, and Stability}
\label{sec:kernel}

The previous section proves exact constructive factorization. This section turns it into a kernel theorem. The central rule is simple: \emph{do not allow the WY rank to grow across the whole sequence}. Let it grow only inside fixed chunks of length $C$.

\begin{algorithm}[t]
\caption{Constructive SFDA-WY chunk summary}
\label{alg:chunk-summary}
\begin{algorithmic}[1]
\Require Chunk inputs $\{k_t,a_t,\alpha_t,\theta_t,\beta_t\}_{t=1}^C$, where $a_t$ is the attention value vector.
\State $\Gamma\gets I$; $Y,W\gets$ empty $d\times0$ matrices; $M\gets$ empty $0\times0$ matrix.
\For{$t=1$ to $C$}
    \State $\Lambda_t\gets\diag(\alpha_t\od e^{i\theta_t})$.
    \State $u_t\gets\beta_t k_t$ and $r_t\gets\Lambda_t^*k_t$ so $A_t=\Lambda_t-u_tr_t^*$.
    \State $Y_{\rm old}\gets Y$, $W_{\rm old}\gets W$, $M_{\rm old}\gets M$, $\Gamma_{\rm old}\gets\Gamma$.
    \State $\Gamma\gets\Lambda_t\Gamma_{\rm old}$.
    \State $Y\gets[\Lambda_tY_{\rm old},\ u_t]$.
    \State $W\gets[W_{\rm old},\ \Gamma_{\rm old}^*r_t]$.
    \State $M\gets\begin{bmatrix}M_{\rm old}&0\\-r_t^*Y_{\rm old}M_{\rm old}&1\end{bmatrix}$.
\EndFor
\State Compute $B_C^{\rm chunk}$ by zero-state recurrence $\bar S_0=0$, $\bar S_t=A_t\bar S_{t-1}+\beta_tk_ta_t^*$.
\State \Return $\transfer=(\Gamma,Y,W,M,B_C^{\rm chunk})$.
\end{algorithmic}
\end{algorithm}

\paragraph{Applying a chunk transfer.}
Given $\transfer=(\Gamma,Y,W,M,B)$, the boundary update is
\begin{equation}
S_{\mathrm{out}}=\Gamma S_{\mathrm{in}}-Y\bigl(M(W^*S_{\mathrm{in}})\bigr)+B.
\label{eq:apply-transfer}
\end{equation}
If $Y,W\in\F^{d\times C}$ and $S\in\F^{d\times d_v}$, this costs
\begin{equation}
\bigO(dd_v)+\bigO(dCd_v)+\bigO(C^2d_v)+\bigO(dCd_v),
\label{eq:apply-cost}
\end{equation}
where $\Gamma S$ is cheap because $\Gamma$ is diagonal or block diagonal. For fixed $C\in\{64,128,256\}$, the transfer application is linear in the state dimension up to a chunk-dependent factor.

\begin{theorem}[Chunk-bounded exactness]
\label{thm:chunk-bounded}
For a sequence of length $T$ partitioned into chunks of size $C$, \cref{alg:chunk-summary} and \cref{eq:apply-transfer} compute exactly the same boundary states as the recurrent SFDA update, while maintaining low-rank factors of rank at most $C$ inside every chunk.
\end{theorem}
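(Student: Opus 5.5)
We argue by induction over chunks, combining \cref{thm:constructive-wy} (through \cref{cor:sfda-wy}) with \cref{thm:affine-chunk-main}. Partition the tokens into chunks $c=1,\ldots,\lceil T/C\rceil$, each of length at most $C$. Write $S^{(c)}$ for the recurrent SFDA state after the last token of chunk $c$, with $S^{(0)}=S_0$. Write $\widehat S^{(c)}$ for the state obtained by repeatedly applying \cref{eq:apply-transfer} to the transfers $\transfer_c$ returned by \cref{alg:chunk-summary}. The claim is that $\widehat S^{(c)}=S^{(c)}$ for all $c$, and that every factor matrix built along the way has at most $C$ columns.

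\textbf{Step 1 (single chunk).} The algorithm's inner loop is exactly the recursion \cref{eq:constructive-Gamma,eq:constructive-Y,eq:constructive-W,eq:constructive-M}, with $u_t=\beta_tk_t$ and $r_t=\Lambda_t^*k_t$. I will check line by line that the stored ``old'' copies play the roles of $\Gamma_{t-1},Y_{t-1},W_{t-1},M_{t-1}$. Once that is confirmed, \cref{cor:sfda-wy} gives $\Gamma-YMW^*=A_{C:1}$ for the chunk's tokens. The zero-state recurrence in the final line of the algorithm returns $B_C^{\rm chunk}$, by the last assertion of \cref{thm:affine-chunk-main}. Here we note that $\beta_tk_ta_t^*$ is the $B_t$ of \cref{eq:AtBt}, with $a_t$ in place of $v_t$. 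Applying \cref{eq:apply-transfer} therefore gives
\[
\Gamma S_{\rm in}-Y(M(W^*S_{\rm in}))+B=(\Gamma-YMW^*)S_{\rm in}+B,
\]
using only associativity of matrix multiplication. By \cref{eq:chunk-transfer}, this equals the output of $C$ recurrent steps started from $S_{\rm in}$. A final chunk shorter than $C$ is handled by running the same argument with its actual length $C'\le C$.

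\textbf{Step 2 (induction over chunks).} Assume $\widehat S^{(c-1)}=S^{(c-1)}$. The recurrent state entering chunk $c$ is $S^{(c-1)}$, so by Step 1, with $S_{\rm in}=\widehat S^{(c-1)}$, the transfer output $\widehat S^{(c)}$ equals $S^{(c)}$. The transfer $\transfer_c$ depends only on the inputs of chunk $c$ and not on the boundary state. This independence is the point that makes the induction valid.

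\textbf{Step 3 (rank bound).} Inside chunk $c$, each step of the recursion appends exactly one column to $Y$ and to $W$, and one row and one column to $M$. The factors are reinitialized to empty at the start of every chunk. After $t\le C$ steps, therefore, $Y,W\in\F^{d\times t}$ and $M\in\F^{t\times t}$, so $\rank(YMW^*)\le t\le C$, as also stated in \cref{thm:rank-budget}.

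\textbf{Main obstacle.} The only real obstacle is bookkeeping. One must verify that the algorithm's update of $W$ uses $\Gamma_{\rm old}$ rather than the updated $\Gamma$, and that the update of $M$ uses $Y_{\rm old}$. These are precisely the indices required by \cref{thm:constructive-wy}. Beyond this, exactness holds in exact arithmetic; floating-point error is a separate stability question and is not part of this claim.
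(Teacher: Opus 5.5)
Your proposal is correct and follows the same route as the paper: the algorithm's loop is the constructive recursion of \cref{thm:constructive-wy} (via \cref{cor:sfda-wy}), the zero-state recurrence gives the write summary by \cref{thm:affine-chunk-main}, chaining the transfers reproduces the recurrence regrouped by chunks, and the rank bound holds because each chunk contains at most $C$ rank-one corrections. You go further than the paper's short proof in three places: the explicit induction over boundary states, the handling of a short final chunk, and the check that the $W$ and $M$ updates use $\Gamma_{\rm old}$ and $Y_{\rm old}$.
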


\begin{proof}
The constructive recursion gives $A_{C:1}=\Gamma-YMW^*$ exactly by \cref{thm:constructive-wy}. The zero-state recurrence gives the affine write summary by \cref{thm:affine-chunk-main}. Applying summaries sequentially gives $S_{j+1}=A_jS_j+B_j$, which is the original recurrence with parentheses grouped by chunks. The rank bound follows because a chunk contains $C$ single-step rank-one corrections.
\end{proof}

\paragraph{Inter-chunk parallelism.}
The boundary-state scan is exact and simple. It can be implemented recurrently over $T/C$ chunks; this is often acceptable because $T/C\ll T$. A fully parallel inter-chunk scan is possible using the affine law \cref{eq:affine-law}, but exact composition doubles the low-rank size. Therefore there are three implementation choices:
\begin{description}[leftmargin=*]
    \item[Boundary-state scan.] Do not compose $(Y,W,M)$ across chunks. Apply each chunk transfer to the incoming boundary state. This is the recommended first kernel.
    \item[Superchunks.] Let rank grow inside a superchunk of size $C_s=mC$, then apply the resulting transfer to a boundary state. This improves parallelism while keeping rank bounded by $C_s$.
    \item[Approximate recompression.] Compose chunk transfers and compress $YMW^*$ back to rank $C$ using QR/SVD or randomized low-rank approximation. This gives more parallelism but is approximate, so it should not be the first exact theorem claim.
\end{description}

\begin{theorem}[Reference complexity]
\label{thm:complexity}
For one chunk of size $C$, state width $d$, and value width $d_v$, the constructive SFDA-WY summary can be stored using
\begin{equation}
\bigO(d)+\bigO(dC)+\bigO(C^2)+\bigO(dd_v)
\end{equation}
scalars for $(\Gamma,Y,W,M,B)$. The naive explicit construction of $(\Gamma,Y,W,M)$ costs $\bigO(dC^2+C^3)$ arithmetic operations per chunk, the zero-state write recurrence costs $\bigO(Cdd_v)$, and applying the chunk transfer to a boundary state costs
\begin{equation}
\bigO(dd_v+dCd_v+C^2d_v).
\end{equation}
For fixed $C$, boundary application is linear in $d$ and $d_v$ up to constants depending on $C$.
\end{theorem}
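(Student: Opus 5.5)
The plan is a direct accounting argument that walks through the three components of the reference implementation: the stored tuple $(\Gamma,Y,W,M,B)$, the construction loop in \cref{alg:chunk-summary}, and the boundary application \cref{eq:apply-transfer}. Throughout, I use only the structural facts established earlier. Each $\Lambda_t$ is diagonal, or aligned $2\times2$ block-diagonal after \cref{prop:realification-main}. Products of such matrices stay in the same class, so by \cref{eq:constructive-Gamma} $\Gamma_t$ is also diagonal or block-diagonal. \Cref{thm:constructive-wy} fixes the shapes at step $t$: $Y_t,W_t\in\F^{d\times t}$ and $M_t\in\F^{t\times t}$.

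\emph{Storage.} At $t=C$, $\Gamma$ needs $\bigO(d)$ scalars: at most two per aligned block in complex form, four in real form. $Y$ and $W$ need $\bigO(dC)$, $M$ needs $\bigO(C^2)$, and $B^{\rm chunk}_C\in\F^{d\times d_v}$ needs $\bigO(dd_v)$. Summing gives the stated bound. I will also note that $M$ is lower triangular, which only improves constants.

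\emph{Construction cost.} I count the work per step $t\le C$ of the recursion in \cref{thm:constructive-wy}:
\begin{itemize}[leftmargin=*]
\item $\Gamma_t=\Lambda_t\Gamma_{t-1}$ costs $\bigO(d)$ because both factors are diagonal or block-diagonal.
\item $\Lambda_tY_{t-1}$ costs $\bigO(dt)$, since this is a diagonal scaling of $t-1$ columns; appending $u_t=\beta_tk_t$ costs $\bigO(d)$.
\item $\Gamma_{t-1}^*r_t$ costs $\bigO(d)$, with $r_t=\Lambda_t^*k_t$ also $\bigO(d)$.
\item The new row of $M_t$ is formed as $(r_t^*Y_{t-1})M_{t-1}$, evaluated left to right. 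The product $r_t^*Y_{t-1}$ costs $\bigO(dt)$, and multiplying the resulting row vector by $M_{t-1}$ costs $\bigO(t^2)$.
\end{itemize}
The per-step cost is therefore $\bigO(dt+t^2)$. Summing over $t=1,\dots,C$ gives $\bigO(dC^2+C^3)$.

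The stated bound is loose here, and the step I would double-check is the grouping of the $M$ row. Computing $Y_{t-1}M_{t-1}$ first would cost $\bigO(dt^2)$ per step and $\bigO(dC^3)$ overall, which breaks the bound. The proof must therefore fix the left-to-right evaluation order explicitly.

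\emph{Write recurrence and application.} For the zero-state write recurrence, I apply $A_t=\Lambda_t-u_tr_t^*$ to $\bar S_{t-1}\in\F^{d\times d_v}$ without ever forming $A_t$ densely:
\begin{itemize}[leftmargin=*]
\item $\Lambda_t\bar S_{t-1}$ costs $\bigO(dd_v)$.
\item $r_t^*\bar S_{t-1}$ costs $\bigO(dd_v)$, and the outer product with $u_t$ costs another $\bigO(dd_v)$.
\item Adding $\beta_tk_ta_t^*$ costs $\bigO(dd_v)$.
\end{itemize}
Over $C$ steps this totals $\bigO(Cdd_v)$. Correctness of this recurrence is \cref{thm:affine-chunk-main}.

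For the boundary application I evaluate \cref{eq:apply-transfer} with the stated parenthesization:
\begin{itemize}[leftmargin=*]
\item $\Gamma S_{\rm in}$ costs $\bigO(dd_v)$.
\item $W^*S_{\rm in}$ costs $\bigO(dCd_v)$ and produces a $C\times d_v$ matrix.
\item Applying $M$ to that matrix costs $\bigO(C^2d_v)$.
\item Multiplying by $Y$ costs $\bigO(dCd_v)$.
\item Adding $B$ costs $\bigO(dd_v)$.
\end{itemize}
This recovers \cref{eq:apply-cost}, hence $\bigO(dd_v+dCd_v+C^2d_v)$, which is linear in $d$ and in $d_v$ once $C$ is fixed.

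The only real subtlety in the whole argument is associativity: each bound holds only under the specified evaluation order. I will state that order explicitly at every step rather than leave the grouping to the reader.
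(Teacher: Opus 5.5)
Your proposal is correct and follows essentially the same accounting as the paper's proof: storage read off from the shapes of $(\Gamma,Y,W,M,B)$, an $\bigO(dt+t^2)$ per-step count for the factor recursion summed to $\bigO(dC^2+C^3)$, $\bigO(dd_v)$ per step for the zero-state write recurrence, and the $\Gamma S$, $W^*S$, $M(\cdot)$, $Y(\cdot)$ decomposition for the transfer. Your explicit left-to-right grouping $(r_t^*Y_{t-1})M_{t-1}$ is exactly what the paper's count for the rows of $M$ tacitly relies on, so spelling it out is a useful clarification rather than a different route.
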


\begin{proof}
The storage terms are for a diagonal/block-diagonal $\Gamma$, two $d\times C$ factors, a $C\times C$ triangular factor, and the additive state-shaped term $B$. In the naive recursion, multiplying $\Lambda_tY_{t-1}$ and appending $\Gamma_{t-1}^*r_t$ over all $t\le C$ costs $\bigO(dC^2)$; forming rows $r_t^*Y_{t-1}M_{t-1}$ costs $\bigO(dC^2+C^3)$ over the chunk. The zero-state recurrence applies one phase-decay and one rank-one correction to a $d\times d_v$ state for $C$ steps, costing $\bigO(Cdd_v)$. Finally, \cref{eq:apply-transfer} consists of $W^*S$, multiplication by $M$, multiplication by $Y$, and the cheap phase-decay term, giving the stated bound.
\end{proof}

\begin{proposition}[Basic numerical stability]
\label{prop:stability}
Assume $\norm{\Lambda_t}_2\le1$ and $0\le\beta_t\norm{k_t}_2^2\le2$ for all $t$. Then $\norm{A_t}_2\le1$ for $A_t=(I-\beta_tk_tk_t^*)\Lambda_t$. Consequently, if $\norm{B_t}_F\le b_t$, then
\begin{equation}
\norm{S_t}_F\le \norm{S_0}_F+\sum_{i=1}^t b_i.
\end{equation}
If additionally $\norm{\Lambda_t}_2\le\rho<1$ and $\norm{I-\beta_tk_tk_t^*}_2\le1$, then
\begin{equation}
\norm{S_t}_F\le \rho^t\norm{S_0}_F+\sum_{i=1}^t \rho^{t-i}b_i.
\end{equation}
\end{proposition}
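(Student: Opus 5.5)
The proof has two steps. First I bound the operator norm of a single transition. Then I unroll the affine recurrence and apply the triangle inequality in Frobenius norm, using that left multiplication by a matrix $A$ satisfies $\norm{AS}_F\le\norm{A}_2\norm{S}_F$.

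\emph{Single-step bound.} Since $\norm{A_t}_2\le\norm{I-\beta_tk_tk_t^*}_2\norm{\Lambda_t}_2$ and $\norm{\Lambda_t}_2\le 1$ by hypothesis, it suffices to show $\norm{I-\beta_tk_tk_t^*}_2\le1$. The matrix $I-\beta_tk_tk_t^*$ is Hermitian. If $k_t=0$ it equals $I$. Otherwise its eigenvalues are $1$, with multiplicity $d-1$ on $k_t^\perp$, and $1-\beta_t\norm{k_t}_2^2$, with eigenvector $k_t$. The hypothesis $0\le\beta_t\norm{k_t}_2^2\le2$ gives $1-\beta_t\norm{k_t}_2^2\in[-1,1]$. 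So the spectral norm, which for a Hermitian matrix is the largest absolute eigenvalue, is at most $1$. This is the only step that uses the structure of the delta rule, and it is where the constant $2$ in the hypothesis enters.

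\emph{Accumulated bound.} From $S_t=A_tS_{t-1}+B_t$, the triangle inequality and the submultiplicativity above give
\begin{equation}
\norm{S_t}_F\le\norm{A_t}_2\norm{S_{t-1}}_F+b_t\le\norm{S_{t-1}}_F+b_t.
\end{equation}
Induction on $t$ then yields $\norm{S_t}_F\le\norm{S_0}_F+\sum_{i\le t}b_i$. Equivalently, one can unroll as in \cref{thm:affine-chunk-main} to $S_t=A_{t:1}S_0+\sum_iA_{t:i+1}B_i$ and bound each product of transitions by $1$.

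\emph{Contractive case.} Under the additional hypotheses, $\norm{A_t}_2\le\norm{I-\beta_tk_tk_t^*}_2\,\rho\le\rho$. The same one-step inequality becomes $\norm{S_t}_F\le\rho\norm{S_{t-1}}_F+b_t$. Induction, or bounding $\norm{A_{t:i+1}}_2\le\rho^{t-i}$ in the unrolled sum, gives the geometric bound $\norm{S_t}_F\le\rho^t\norm{S_0}_F+\sum_{i\le t}\rho^{t-i}b_i$.
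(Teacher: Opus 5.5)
Your proof is correct and matches the paper's argument. It reads off $\norm{I-\beta_tk_tk_t^*}_2\le1$ from the eigenvalues $1$ and $1-\beta_t\norm{k_t}_2^2\in[-1,1]$, then gets both Frobenius bounds from $\norm{A_tS}_F\le\norm{A_t}_2\norm{S}_F$ and induction (with $\norm{A_t}_2\le\rho$ in the contractive case); your explicit treatment of $k_t=0$ and of Hermitian symmetry just spells out details the paper leaves implicit.
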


\begin{proof}
The matrix $I-\beta_tk_tk_t^*$ has eigenvalue $1-\beta_t\norm{k_t}_2^2$ on the span of $k_t$ and eigenvalue $1$ on its orthogonal complement. The assumption $0\le\beta_t\norm{k_t}_2^2\le2$ gives spectral norm at most $1$. Therefore $\norm{A_t}_2\le\norm{I-\beta_tk_tk_t^*}_2\norm{\Lambda_t}_2\le1$. The Frobenius bounds follow from $\norm{A_tS}_{F}\le\norm{A_t}_2\norm{S}_F$ and induction.
\end{proof}

\begin{proposition}[Conditional conditioning of triangular WY factors]
\label{prop:triangular-conditioning}
Suppose the chunk recursion can be written as
\begin{equation}
M=(I+L)^{-1}D_\beta,
\end{equation}
where $L$ is strictly lower triangular and $D_\beta$ is diagonal with $\norm{D_\beta}_2\le1$. If $\norm{L}_2\le\eta<1$, then
\begin{equation}
\norm{M}_2\le \frac{1}{1-\eta}.
\end{equation}
\end{proposition}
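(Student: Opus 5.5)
The plan is a Neumann-series argument for the inverse of $I+L$, followed by submultiplicativity. Since $\norm{L}_2\le\eta<1$, the series $\sum_{k\ge0}(-L)^k$ converges absolutely in operator norm, and its sum is a two-sided inverse of $I+L$. The partial sums telescope:
\begin{equation}
(I+L)\sum_{k=0}^{n}(-L)^k = I-(-L)^{n+1},
\end{equation}
and the error term tends to zero because $\norm{L^{n+1}}_2\le\eta^{n+1}$.

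Because $L$ is strictly lower triangular, it is nilpotent of order at most the chunk size $C$. The series is therefore actually the finite sum $\sum_{k=0}^{C-1}(-L)^k$. This makes invertibility automatic, since $I+L$ is unit lower triangular. Convergence is then not an issue; only the norm estimate uses the hypothesis on $\eta$.

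The key bound comes from the triangle inequality applied to the series, followed by submultiplicativity:
\begin{equation}
\norm{(I+L)^{-1}}_2\le\sum_{k\ge0}\norm{L}_2^k\le\sum_{k\ge0}\eta^k=\frac{1}{1-\eta}.
\end{equation}
Then
\begin{equation}
\norm{M}_2\le\norm{(I+L)^{-1}}_2\,\norm{D_\beta}_2\le\frac{1}{1-\eta}\cdot 1,
\end{equation}
which is the claimed bound. An alternative route gives the same bound: for any unit vector $x$, we have $\norm{(I+L)x}_2\ge 1-\eta$. This lower-bounds the smallest singular value of $I+L$ by $1-\eta$.

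There is no real obstacle, because the statement is conditional. The hypothesis already assumes the form $M=(I+L)^{-1}D_\beta$, so the proof never has to derive it from the recursion \cref{eq:constructive-M}. The one point to state carefully is that the bound is on the spectral norm, so that submultiplicativity applies directly. A remark could note that the nilpotency gives the sharper finite bound $\sum_{k<C}\eta^k$.
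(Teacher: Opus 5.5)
Your proposal is correct and matches the paper's proof: a Neumann series for $(I+L)^{-1}$ gives $\norm{(I+L)^{-1}}_2\le(1-\eta)^{-1}$, and submultiplicativity with $\norm{D_\beta}_2\le1$ finishes the bound. Your nilpotency remark is a correct addition the paper does not make: the series truncates after the $(-L)^{C-1}$ term, so invertibility is automatic and the bound sharpens to $\sum_{k<C}\eta^k$.
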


\begin{proof}
Since $\norm{L}_2<1$, the Neumann series gives $(I+L)^{-1}=\sum_{j\ge0}(-L)^j$ and hence $\norm{(I+L)^{-1}}_2\le(1-\eta)^{-1}$. Multiplying by $D_\beta$ cannot increase the norm by more than $\norm{D_\beta}_2\le1$.
\end{proof}

\paragraph{Training-time enforcement.}
The parameterization in \cref{eq:param-alpha,eq:param-theta,eq:param-beta,eq:param-key} enforces $\norm{\Lambda_t}_2\le1$, $0\le\beta_t\le1$, and $\norm{k_t}_2\le1$.  The conditioning condition in \cref{prop:triangular-conditioning} is not automatic; it should be monitored empirically through $\norm{L}_2$, triangular-solve residuals, or finite-precision agreement between recurrent and chunk paths.

\paragraph{Systems target.}
The theorem above is a reference bound, not the final fused-kernel claim. The systems target is a Triton/CUDA implementation that avoids dense $d\times d$ matrices, uses matrix multiplies for $W^*S$, $M(\cdot)$, and $Y(\cdot)$, and empirically approaches KDA-like $\bigO(TdC)$ behavior with small constants. That speed claim must be benchmarked; algebra alone does not prove it.


\section{Basic Experiments}
\label{sec:basic-experiments}

We report a small set of controlled experiments designed to test the algebraic
mechanism isolated by the theory. These experiments are intentionally modest:
they verify the exact chunk algebra numerically and test whether phase control
helps on cyclic state tracking. They do not claim large-scale language-model
improvements, hybrid-ratio gains, or fused-kernel speedups; those remain the
next empirical stage.

\paragraph{Experiment A: constructed cyclic memory.}
We first test a training-free mod-$5$ counter.  The target state is
\[
    c_t = \sum_{i=1}^t a_i \pmod 5,
\]
and SFDA stores the counter in a two-dimensional phase block
\[
    z_t = R(2\pi a_t/5) z_{t-1}.
\]
The decoder is calibrated at length $128$ and then evaluated at longer lengths.
This directly tests Proposition~\ref{prop:phase-memory}: a unit-modulus phase
orbit can store a cyclic counter without norm drift.  As a controlled baseline,
we compare against the best real-diagonal decay model in the same low-dimensional
counter setting.  Real diagonal decay has no nontrivial bounded period-$5$ orbit,
so it should collapse toward chance as the length grows.

\paragraph{Experiment B: learned short-to-long counter.}
We next train a minimal SFDA counter head at length $48$ and evaluate on longer
lengths.  The baseline is the same architecture with the phase disabled,
\(\theta_t\equiv 0\), corresponding to the real-decay KDA special case in this
controlled setup. This experiment asks whether the learnable phase parameter can
be found from data, and whether the learned phase extrapolates beyond the
training length.

\begin{figure}[t]
    \centering
    \begin{minipage}{0.49\textwidth}
        \centering
        \includegraphics[width=\linewidth]{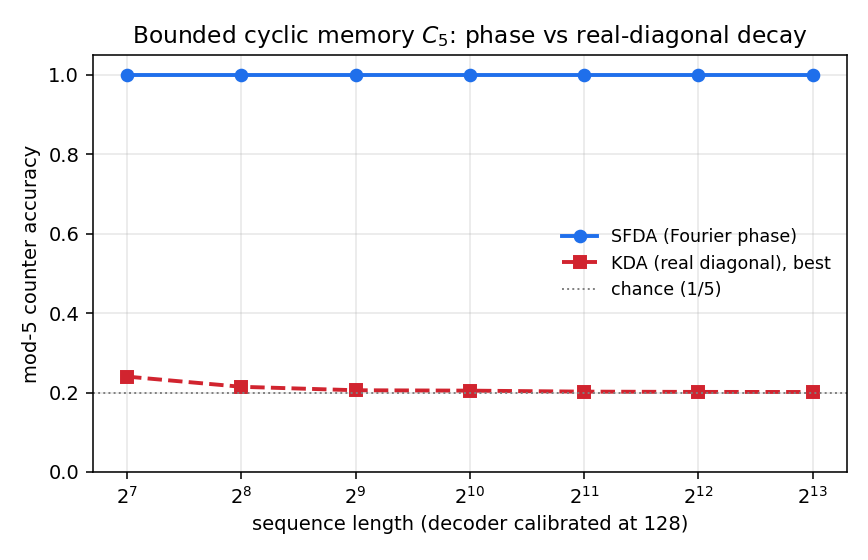}
        \vspace{-0.75em}
        \centerline{\small (a) Constructed phase counter}
    \end{minipage}
    \hfill
    \begin{minipage}{0.49\textwidth}
        \centering
        \includegraphics[width=\linewidth]{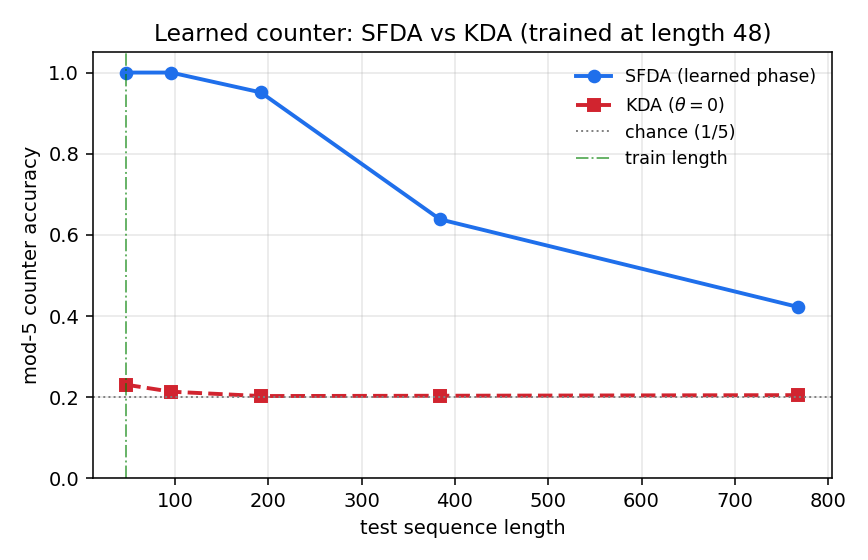}
        \vspace{-0.75em}
        \centerline{\small (b) Learned counter, train length $48$}
    \end{minipage}
    \caption{\textbf{Toy cyclic-memory experiments.}
    (a) In a constructed mod-$5$ counter, SFDA's Fourier phase block tracks the
    counter exactly from length $2^7$ to $2^{13}$, while the best real-diagonal
    decay baseline stays near chance.  (b) In a learned mod-$5$ counter trained
    only at length $48$, SFDA learns a usable phase update and extrapolates
    beyond the training length, while the phase-disabled KDA baseline remains
    near chance in this controlled setting.}
    \label{fig:toy-cyclic-memory}
\end{figure}

\begin{table}[t]
\centering
\caption{Constructed mod-$5$ counter: representational separation.}
\label{tab:constructed-counter}
\vspace{-0.25em}
\begin{tabular}{rcc}
\toprule
Length & SFDA phase & Best real diagonal \\
\midrule
128  & 1.000 & 0.240 \\
1024 & 1.000 & 0.205 \\
8192 & 1.000 & 0.201 \\
\bottomrule
\end{tabular}
\vspace{-0.5em}
\end{table}

\begin{table}[t]
\centering
\caption{Learned mod-$5$ counter trained at length $48$.}
\label{tab:learned-counter}
\vspace{-0.25em}
\begin{tabular}{rcc}
\toprule
Test length & KDA, $\theta=0$ & SFDA \\
\midrule
48  & 0.231 & 1.000 \\
96  & 0.214 & 1.000 \\
192 & 0.203 & 0.951 \\
384 & 0.204 & 0.638 \\
768 & 0.205 & 0.422 \\
\bottomrule
\end{tabular}
\vspace{-0.5em}
\end{table}

\paragraph{Results.}
Figure~\ref{fig:toy-cyclic-memory} and
Tables~\ref{tab:constructed-counter}--\ref{tab:learned-counter} show the
expected separation.  In the constructed setting, phase memory is exact at all
tested lengths, whereas the real-diagonal baseline approaches chance level
\(1/5\).  In the learned setting, SFDA solves the train-length counter and
extrapolates roughly four times beyond the training length before phase error
accumulates.  The phase-disabled baseline stays near chance.  The gap between
Experiment~A and Experiment~B suggests that the SFDA parameterization has the
right representation, while training and phase calibration determine how far the
learned model extrapolates.

\paragraph{Numerical theorem checks.}
We also verify the algebra used by the kernel construction.  The block-WY
closure theorem, the constructive chunk-WY product, and the affine chunk transfer
all match dense recurrent ground truth to numerical precision; see
Appendix~\ref{app:toy-experiment-details}.  These checks are not a substitute
for fused-kernel benchmarking, but they guard against algebraic implementation
mistakes before moving to a Triton/CUDA kernel.


\section{Experiments}
\label{sec:experiments}

The theory in Sections~\ref{sec:theory-main}--\ref{sec:kernel} makes two
kinds of falsifiable claims: (i) exact algebraic identities, including the
constructive chunk-WY recursion, the affine chunk transfer, the KDA reduction,
finite-state one-hot realizability, and the stability bound; and (ii) a
mechanistic claim that phase control implements bounded cyclic memory that a
real-only decay mechanism does not provide. We verify (i) numerically and test
(ii) on small synthetic state-tracking tasks. These are proof-of-concept
experiments; language-modeling, fused-kernel, and long-context claims remain
future work, as discussed in Section~\ref{sec:limitations} and
Appendix~\ref{app:experiment-protocol}.

\subsection{Numerical verification of the theorems}
\label{sec:exp-verify}

We implement the SFDA recurrence, the dense single-step transition
\[
A_t=(I-\beta_t k_t k_t^\ast)\Lambda_t,
\]
and the constructive chunk-WY recursion of
Theorem~\ref{thm:constructive-wy} in complex coordinates, and check each theorem
against a brute-force ground truth on random inputs. Table~\ref{tab:verify}
reports the residuals; all identities hold to numerical precision.
\footnote{The cyclic phase check is reported in double precision. In single
precision, the phase counter accumulates roughly \(10^{-4}\) round-off over
\(T=5000\) steps, consistent with the theorem being exact and the observed error
being numerical.}

\begin{table}[t]
\centering
\caption{Numerical verification of SFDA's exact claims. Residual is the relevant
operator or state error against a brute-force reference on random inputs.}
\label{tab:verify}
\small
\begin{tabular}{@{}llr@{}}
\toprule
Claim & Reference & Residual \\
\midrule
Chunk-WY recursion
 & Thm.~\ref{thm:constructive-wy}
 & \(5.7{\times}10^{-10}\) \\
Correction rank \(\le C\)
 & Cor.~\ref{cor:sfda-wy}
 & \(=C\) \\
KDA reduction at \(\theta=0\)
 & Prop.~\ref{prop:kda-special}
 & \(0\) \\
Cyclic phase memory, \(T=5000\)
 & Prop.~\ref{prop:phase-memory}
 & \(5.2{\times}10^{-13}\) \\
Affine chunk transfer
 & Thm.~\ref{thm:affine-chunk-main}
 & \(9.5{\times}10^{-8}\) \\
Generalized tied-write DFA realization
 & Thm.~\ref{thm:finite-tied-main}
 & \(0\) \\
Stability bound \(\|A_t\|_2\le 1\)
 & Prop.~\ref{prop:stability}
 & holds over \(2000\) samples \\
\bottomrule
\end{tabular}
\end{table}

\subsection{State tracking and length extrapolation}
\label{sec:exp-track}

\paragraph{Setup.}
We compare tied SFDA against its \(\theta=0\) reduction, which is the KDA
special case from Proposition~\ref{prop:kda-special}. The comparison uses an
equal-capacity design: identical architecture, identical projections, and
identical parameter count; the two models differ only in whether the phase
\(\theta_t\) is used or forced to zero. Each model is a single-layer,
single-head recurrence with state width \(d=16\) and a linear readout
(\(\sim 8.3\mathrm{k}\) parameters total). We train with AdamW using a cosine
schedule, keep the best checkpoint by in-distribution validation accuracy, and
report accuracy on the final quarter of positions, which are the positions that
most require persistent state. All models train on length
\(T_{\mathrm{train}}=32\) and are evaluated on
\(T\in\{32,64,128,256\}\), i.e., up to \(8\times\) extrapolation. Results are
means over two seeds.

\paragraph{Tasks.}
We evaluate two small state-tracking tasks. First, the cyclic counter \(C_3\)
asks the model to output the running sum modulo \(3\) of a stream of
increments. Second, the reset counter \(C_3\) is the same task with an additional
reset symbol that returns the state to \(0\). The reset operation is an
idempotent memory transition, matching the register/reset family in
Theorem~\ref{thm:register-compact}. Chance accuracy is \(1/3\).

\paragraph{Results.}
Table~\ref{tab:track} and Figure~\ref{fig:track} show a clean separation. On
the pure cyclic counter, the \(\theta=0\) KDA reduction remains near chance at
all tested lengths, including in-distribution. SFDA solves the task
in-distribution and degrades only at longer extrapolation lengths, where learned
phase calibration error accumulates. With a reset symbol bounding the effective
horizon, SFDA extrapolates essentially perfectly to \(8\times\) the training
length, while KDA plateaus around \(0.68\). This matches
Proposition~\ref{prop:phase-memory}: phase is the mechanism that gives SFDA a
bounded cyclic orbit.

\begin{table}[t]
\centering
\caption{State-tracking accuracy on last-quarter positions. Models are trained
at length \(32\), and results are means over \(2\) seeds. SFDA and KDA have
identical capacity; KDA is SFDA with \(\theta=0\). Chance is \(0.333\).}
\label{tab:track}
\small
\begin{tabular}{@{}llcccc@{}}
\toprule
Task & Model & \(T=32\) & \(T=64\) & \(T=128\) & \(T=256\) \\
\midrule
\multirow{2}{*}{Cyclic \(C_3\)}
 & SFDA & \textbf{1.000} & \textbf{0.987} & \textbf{0.491} & 0.341 \\
 & KDA  & 0.344 & 0.330 & 0.337 & 0.335 \\
\midrule
\multirow{2}{*}{Reset \(C_3\)}
 & SFDA & \textbf{1.000} & \textbf{1.000} & \textbf{1.000} & \textbf{1.000} \\
 & KDA  & 0.686 & 0.682 & 0.683 & 0.680 \\
\bottomrule
\end{tabular}
\end{table}

\begin{figure}[t]
\centering
\includegraphics[width=\linewidth]{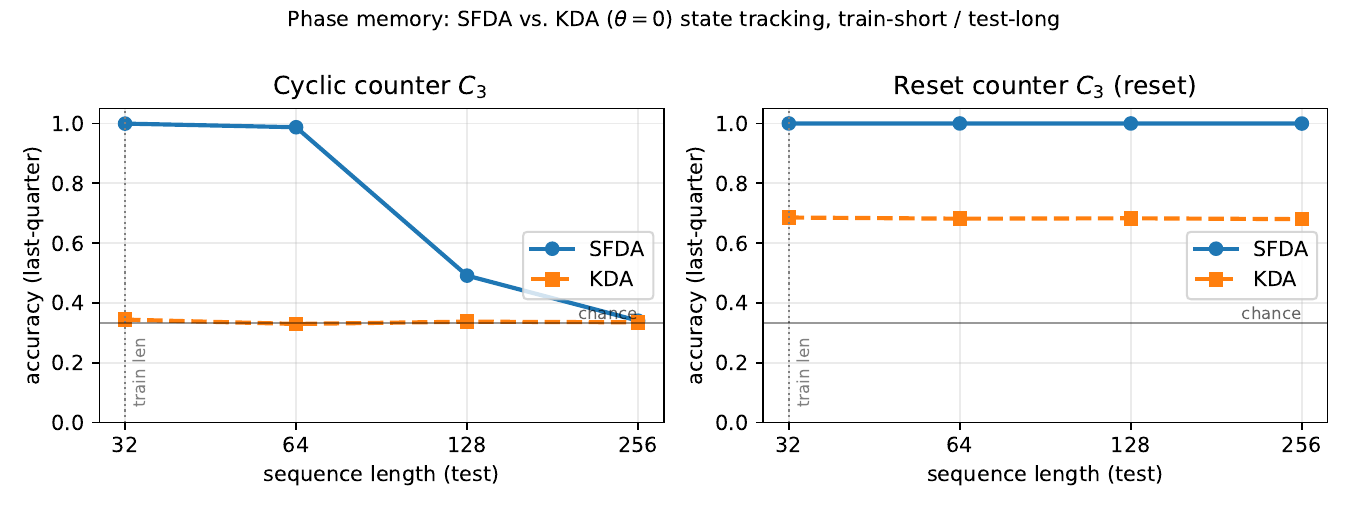}
\caption{Phase memory enables modular state tracking. SFDA, shown with solid
curves, is compared to its \(\theta=0\) KDA reduction, shown with dashed curves,
on cyclic and reset counters. Models are trained at length \(32\), indicated by
the dotted vertical line, and tested out to length \(256\). In this controlled
setting, KDA remains near chance on the cyclic counter, while SFDA solves the
task in-distribution and extrapolates exactly on the reset counter.}
\label{fig:track}
\end{figure}

\paragraph{Limitations of these experiments.}
These are small-scale, mechanism-isolating tests, not the full empirical program
needed for a systems-level ICML submission. The dihedral task \(D_3\), a
reversible counter whose direction is flipped by a control symbol, remains hard
at this scale: SFDA reaches only approximately \(0.47\) in-distribution, while
KDA stays near chance. This indicates that learning conditional
direction-reversal, while representable by the structured control theory in
Theorem~\ref{thm:dihedral-compact}, may require more capacity or training than
used here. We make no language-modeling, throughput, or long-context claims in
this section. In particular, the hybrid-ratio target---SFDA at \(7{:}1\) or
\(15{:}1\) versus KDA at \(3{:}1\)---and the fused Triton/CUDA kernel are left
to future work; see Appendix~\ref{app:experiment-protocol} and
Section~\ref{sec:limitations}.
\section{Related Work}
\label{sec:related-work}

\paragraph{Efficient attention and fast-weight memory.}
The Transformer architecture introduced softmax self-attention as a general sequence modeling primitive \citep{vaswani2017attention}. Linear attention rewrites attention as an associative recurrent memory, enabling constant-size autoregressive state and linear-time sequence processing \citep{katharopoulos2020linear}.  This view is closely related to fast-weight memory: the recurrent state stores transient key--value associations and is updated online by Hebbian or delta-style rules \citep{schlag2021fastweights}.  Subsequent work improves the basic linear-attention state with gating, decay, and hardware-aware chunkwise parallelism \citep{hua2022linear,sun2023retnet,yang2024gla,yang2025gdn}.  Kimi Linear introduces KDA, a channel-wise gated delta-rule layer with a specialized DPLR/WY-style chunkwise algorithm and a 3:1 KDA-to-global-attention hybrid \citep{kimi2025linear}.  SFDA is complementary: rather than replacing KDA's delta memory, it adds a phase-control operator and proves that the resulting transition still admits an exact chunk-local WY invariant.

\paragraph{State-space models, DPLR structure, and phase dynamics.}
Structured state-space models use diagonal or diagonal-plus-low-rank operators to model long sequences efficiently \citep{gu2022s4}.  Mamba and Mamba2 make state-space dynamics input-dependent and expose a duality between selective SSMs and attention-like computation \citep{gu2023mamba,dao2024mamba2}.  SFDA differs from static complex diagonal SSMs because the transition contains an input-dependent delta correction, and differs from unrestricted DPLR because the low-rank term is tied to the delta rule, preserving a chunk-WY kernel.  The learned phase component is motivated by the empirical and theoretical importance of oscillatory state dynamics and negative/complex modes for state tracking \citep{grazzi2025negative}.

\paragraph{Chunk-WY kernels and triangular transforms.}
The algebraic core of KDA-like kernels is a compact representation of products of rank-one corrections.  Classical WY representations compact products of Householder transformations \citep{bischof1987wy}, and later UT-style triangular transforms improve accumulation efficiency \citep{joffrain2006ut}.  KDA adapts this philosophy to diagonal decay plus rank-one delta updates and benchmarks a faster specialized kernel than general DPLR \citep{kimi2025linear}.  Parallelizing delta-rule linear transformers over sequence length has also been studied directly in the linear-attention kernel literature \citep{yang2024delta,yang2024fla}.  Our constructive theorem gives the corresponding invariant for phase-decay transitions $A_t=\Lambda_t-u_tr_t^*$.

\paragraph{Position, phases, and rotations.}
Rotary positional embeddings encode relative position through products of two-dimensional rotations applied to query/key features \citep{su2024roformer,barbero2025rope}.  Related Fourier positional methods study periodic extensions for length generalization \citep{hua2024fourierposition}.  SFDA uses rotations differently: phases act on the recurrent memory transition itself, so the phase variable is a state-transition mechanism rather than only a positional feature map.

\paragraph{Automata, shortcuts, and algebraic memory.}
Transformers can simulate finite automata by composing transition maps in shallow parallel circuits, and solvable semiautomata admit particularly short Krohn--Rhodes-style shortcuts \citep{liu2023shortcuts,krohn1965algebraic,barrington1988finite}.  The earlier semidirect/Fourier automata preprint proposed using algebraic memory/control decompositions and Fourier phase representations for automaton simulation \citep{zhang2025partialautomata}.  This paper converts that perspective into a kernel-compatible recurrent attention layer: cyclic and dihedral state tracking become exact phase-memory constructions, while general finite-state one-hot expressivity is stated separately for the generalized tied-write affine template.

\paragraph{Retrieval, copying, and hybrid architectures.}
A central limitation of pure fixed-state sequence models is exact retrieval and copying over long contexts \citep{jelassi2024repeat,wen2024rnn}.  Empirical studies of efficient language models emphasize the recall--throughput tradeoff \citep{arora2023zoology,arora2024simple}.  Hybrid models combine efficient recurrent or SSM layers with occasional full-attention layers to recover global selection capability \citep{lieber2024jamba,blakeman2025nemotron,kimi2025linear}.  SFDA's intended empirical target is therefore not merely a toy counter improvement, but a better quality--efficiency frontier for such hybrids.

\section{Limitations and Open Problems}
\label{sec:limitations}

This preprint deliberately avoids several overclaims.

\paragraph{No fixed-rank full-sequence theorem.}
Exact composition increases WY rank additively. The theorem is chunk-bounded, not globally fixed-rank. Any global fixed-rank scan would require additional structure or approximate recompression.

\paragraph{Finite-state tied-write expressivity is solved; compact SFDA expressivity is characterized for useful families, not universally.}
The appendix proves that every deterministic finite automaton, including every finite semidirect automaton, is exactly realized by a generalized tied-write affine template after one-hot lifting. This closes the finite-state expressivity objection. \Cref{sec:compact-expressivity} further proves compact kernel-compatible realizations for product cyclic counters, dihedral phase-orientation memory, register/reset memory, and bounded stacks when the structured control family includes the corresponding rotations, reflections, or partial permutations. What remains open is a complete low-dimensional classification for arbitrary finite automata or arbitrary semigroup transitions under the strict phase-plus-rank-one family $A_t=\Lambda_t-u_tw_t^*$.

\paragraph{Orbit-restricted tied-write routing is characterized, not solved in full generality.}
Appendix~\ref{app:tied_write_orbit} proves that tied writes can route decoupled write directions lying in the orbit of the tied key under the structured control family. The exact full-state theorem requires a scratch/protected channel or an orthogonality condition ensuring that the erase projection does not damage carried memory. Fixed block rotations are kernel-compatible but not transitive on the full sphere, so arbitrary one-step write-direction routing is not claimed. Richer routing, such as Givens products or full orthogonal control, increases expressivity but may weaken the simple chunk-WY kernel.

\paragraph{Constructive recursion is not yet a fused kernel.}
The constructive chunk-WY recursion proves that the factors can be built exactly and locally. It does not by itself guarantee that a Triton/CUDA implementation matches KDA speed. The systems contribution still requires implementation, profiling, memory-traffic analysis, and comparison to KDA kernels.

\paragraph{No empirical superiority yet.}
The package states the required experiments and target claims. It should not say SFDA beats Kimi Linear until the hybrid-ratio and throughput experiments have been run under fair compute.

\bibliographystyle{plainnat}
\bibliography{references}

\appendix
\section{Additional Proofs}
\label{app:proofs}

\subsection{Real block rotations and complex diagonal phases}

\begin{lemma}[Real-to-complex phase equivalence]
\label{lem:real-complex}
Let $z=x+iy$ identify $(x,y)\in\R^2$ with $z\in\C$.  The real block $\alpha R(\theta)$ acts as multiplication by $\alpha e^{i\theta}$:
\begin{equation}
\alpha R(\theta)
\begin{bmatrix}x\\y\end{bmatrix}
\quad\leftrightarrow\quad
\alpha e^{i\theta}z.
\end{equation}
Therefore, if $U_t=\oplus_jR(\theta_{t,j})$ and $D_t=\oplus_j\alpha_{t,j}I_2$, then $U_tD_t$ is represented in complex coordinates by $\Lambda_t=\diag(\alpha_t\od e^{i\theta_t})$.
\end{lemma}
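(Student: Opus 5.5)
The plan is to reduce everything to the scalar identity already verified in \cref{prop:realification-main}, and then lift it blockwise.

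First, for a single block, fix $(x,y)\in\R^2$ with $z=x+iy$. I will compute $\alpha R(\theta)[x,y]^\top=\alpha(x\cos\theta-y\sin\theta,\ x\sin\theta+y\cos\theta)^\top$. Separately, I will expand $\alpha e^{i\theta}z=\alpha(\cos\theta+i\sin\theta)(x+iy)$ into real and imaginary parts. These agree componentwise, so the real-linear isomorphism $\phi:\R^2\to\C$, $\phi(x,y)=x+iy$, satisfies $\phi\circ\alpha R(\theta)=(\alpha e^{i\theta})\cdot\phi$. This is the asserted correspondence.

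Second, I will extend to direct sums. Let $\Phi:\R^{2m}\to\C^m$ act by applying $\phi$ on each aligned pair of coordinates. Since $U_t=\oplus_j R(\theta_{t,j})$ and $D_t=\oplus_j\alpha_{t,j}I_2$ are both block diagonal with the same $2\times2$ alignment, their product is block diagonal with blocks $R(\theta_{t,j})\,\alpha_{t,j}I_2=\alpha_{t,j}R(\theta_{t,j})$. The scalar $\alpha_{t,j}I_2$ commutes with the rotation, so the order of $U_t$ and $D_t$ is irrelevant. Applying the first step blockwise then gives $\Phi\circ U_tD_t=\Lambda_t\circ\Phi$ with $\Lambda_t=\diag(\alpha_t\odot e^{i\theta_t})$.

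Third, I will note that because $\Phi$ acts column by column, the same intertwining holds for matrix states $S\in\R^{2m\times d_v}$. This is the sense in which $U_tD_t$ is ``represented by'' $\Lambda_t$.

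The only point needing care is the second step: I must make sure that $D_t$ is genuinely of the form $\alpha_{t,j}I_2$ on each block, that is, that both channels in a pair share the same decay. A general positive diagonal would not commute with the rotation and would not correspond to multiplication by a complex scalar. The lemma's hypothesis supplies exactly this condition, so I will state it explicitly when invoking it. Everything else is a direct computation.
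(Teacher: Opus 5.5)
Your proposal is correct and takes the same approach as the paper: the paper's proof is the single-block computation identifying $\alpha R(\theta)[x,y]^\top$ with the real and imaginary parts of $\alpha(\cos\theta+i\sin\theta)(x+iy)$. Your explicit blockwise extension, which uses that $\alpha_{t,j}I_2$ commutes with $R(\theta_{t,j})$, and your column-by-column remark fill in the ``therefore'' step that the paper leaves implicit.
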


\begin{proof}
By direct multiplication,
\begin{align}
\alpha R(\theta)\begin{bmatrix}x\\y\end{bmatrix}
&=\alpha\begin{bmatrix}x\cos\theta-y\sin\theta\\x\sin\theta+y\cos\theta\end{bmatrix},
\end{align}
which is the real and imaginary part of $\alpha(\cos\theta+i\sin\theta)(x+iy)$.
\end{proof}

\subsection{Block-WY identity with dimensions}

The closure theorem in \cref{thm:wy-closure-main} is valid over $\R$ with transpose or over $\C$ with conjugate transpose.  For clarity, suppose
\begin{align}
Y_1,W_1&\in\F^{d\times r_1}, & M_1&\in\F^{r_1\times r_1},\\
Y_2,W_2&\in\F^{d\times r_2}, & M_2&\in\F^{r_2\times r_2}.
\end{align}
Then $Y_{21},W_{21}\in\F^{d\times(r_1+r_2)}$ and $M_{21}\in\F^{(r_1+r_2)\times(r_1+r_2)}$.  The lower-left block $-M_2W_2^*Y_1M_1$ has shape $r_2\times r_1$, so the block matrix is well-typed.

\subsection{Affine composition and chunk scans}

\begin{lemma}[Affine maps form a semidirect product]
\label{lem:affine-semidir}
Let $V=\F^{d\times d_v}$ be an additive vector space and let $\mathcal{G}\subseteq\F^{d\times d}$ be a multiplicative semigroup acting on $V$ by left multiplication.  The set $V\rtimes\mathcal{G}$ with multiplication
\begin{equation}
(B_2,A_2)(B_1,A_1)=(A_2B_1+B_2,\ A_2A_1)
\end{equation}
is a semigroup.  It is a monoid if $\mathcal{G}$ contains $I$.
\end{lemma}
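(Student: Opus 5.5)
We verify the semigroup axioms for $V\rtimes\mathcal G$ directly. First, closure: for $(B_1,A_1),(B_2,A_2)$ with $B_i\in V$ and $A_i\in\mathcal G$, the product $(A_2B_1+B_2,\ A_2A_1)$ again lies in $V\rtimes\mathcal G$. The first component is in $V$ because $A_2B_1$ is a $d\times d_v$ matrix and $V$ is closed under addition. The second component is in $\mathcal G$ because $\mathcal G$ is a multiplicative semigroup. This is the same law as \cref{eq:affine-law} and the semidirect law \cref{eq:semidirect-law}, with $H=\mathcal G$ acting on $N=V$.

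The main step is associativity, and I would carry it out by a direct two-sided computation. On one side,
\begin{equation}
\bigl[(B_3,A_3)(B_2,A_2)\bigr](B_1,A_1)=\bigl(A_3A_2B_1+A_3B_2+B_3,\ A_3A_2A_1\bigr).
\end{equation}
On the other side,
\begin{equation}
(B_3,A_3)\bigl[(B_2,A_2)(B_1,A_1)\bigr]=\bigl(A_3(A_2B_1+B_2)+B_3,\ A_3(A_2A_1)\bigr).
\end{equation}
These agree by distributivity of matrix multiplication over addition and associativity of matrix multiplication. That is precisely the statement that left multiplication is a linear action of $\mathcal G$ on $V$ compatible with the semigroup product, i.e.\ $A_3(A_2X)=(A_3A_2)X$.

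An alternative, cleaner route identifies $(B,A)$ with the affine map $f_{(B,A)}(S)=AS+B$ on $V$. One checks that $f_{(B_2,A_2)}\circ f_{(B_1,A_1)}=f_{(A_2B_1+B_2,\,A_2A_1)}$, so the multiplication is composition of functions and hence associative. I would not rely on this route alone, though, since distinct pairs could a priori give the same map; the direct computation above avoids any injectivity question. Alternatively, one can embed $(B,A)$ as the block matrix $\begin{bmatrix}A&B\\0&I\end{bmatrix}$, which makes the law ordinary block matrix multiplication and is injective.

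For the monoid claim, if $I\in\mathcal G$ then $(0,I)$ is a two-sided identity. Indeed $(0,I)(B,A)=(IB+0,\ IA)=(B,A)$ and $(B,A)(0,I)=(A\cdot0+B,\ AI)=(B,A)$. There is no real obstacle anywhere; the only point needing care is checking that the action is by linear maps, which is what makes the first component in the associativity check expand correctly.
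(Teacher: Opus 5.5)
Your proof is correct and matches the paper's: both expand the two bracketings to $(A_3A_2B_1+A_3B_2+B_3,\ A_3A_2A_1)$ and take $(0,I)$ as the identity. Your explicit closure check and the remarks on the affine-map and block-matrix embeddings are valid extras that the paper does not spell out.
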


\begin{proof}
This is the usual semidirect product law.  Associativity follows by expanding both $((B_3,A_3)(B_2,A_2))(B_1,A_1)$ and $(B_3,A_3)((B_2,A_2)(B_1,A_1))$:
\begin{equation}
(A_3A_2B_1+A_3B_2+B_3,\ A_3A_2A_1).
\end{equation}
The identity is $(0,I)$ when $I\in\mathcal{G}$.
\end{proof}

\begin{corollary}[Chunk parenthesization]
For any partition of $\{1,\ldots,T\}$ into contiguous chunks, composing affine maps within chunks and then composing the chunk summaries gives exactly the same affine map as token-by-token recurrence.
\end{corollary}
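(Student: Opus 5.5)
The plan is to reduce the chunk-parenthesization statement to the associativity established in \cref{lem:affine-semidir}, together with an explicit identification of the token-by-token recurrence with a left-to-right product in the semigroup $V\rtimes\mathcal G$. First, I will encode each token $t$ as the element $g_t=(B_t,A_t)\in V\rtimes\mathcal G$. Applying $g_t$ to a state $S$ means $S\mapsto A_tS+B_t$. A short check against the multiplication law of \cref{lem:affine-semidir} shows that applying $g_1$ and then $g_2$ is the same as applying the product $g_2g_1$: indeed $A_2(A_1S+B_1)+B_2=A_2A_1S+(A_2B_1+B_2)$. So the action is compatible with the product. By induction on $T$, the token-by-token recurrence starting from $S_0$ yields $S_T$ equal to the action of the ordered product $g_Tg_{T-1}\cdots g_1$ on $S_0$. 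In the notation of \cref{thm:affine-chunk-main}, this is the pair $\bigl(\sum_i A_{T:i+1}B_i,\ A_{T:1}\bigr)$.

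Second, I will fix a partition of $\{1,\ldots,T\}$ into contiguous chunks $I_1,\ldots,I_J$, listed left to right. Each chunk summary is defined as the ordered product $h_j=\prod_{t\in I_j}g_t$, taken with later tokens on the left. By \cref{thm:affine-chunk-main}, this is exactly the pair $(B^{\mathrm{chunk}},\Gamma-YMW^*)$ that the chunk algorithm computes. Composing the chunk summaries gives $h_J\cdots h_1$. Since $V\rtimes\mathcal G$ is a semigroup, generalized associativity applies: any bracketing of a fixed ordered word gives the same element. Hence $h_J\cdots h_1=g_T\cdots g_1$, and the two affine maps coincide. If generalized associativity needs to be justified rather than cited, I will run a standard induction on the word length from three-term associativity. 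Alternatively, I will observe that everything is realized as composition of actual affine maps on $V$, and function composition is associative.

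The only step that needs care is bookkeeping of order and contiguity. Contiguity guarantees that the concatenation $I_1,\ldots,I_J$ lists the tokens in their original order, so the rebracketed word really is $g_T\cdots g_1$. Non-contiguous groupings would permute factors of a noncommutative product, and the argument would fail. I will also note that $\mathcal G$ should be taken to be the semigroup generated by the $A_t$, which contains all the chunk products, so every element in the argument lies inside $V\rtimes\mathcal G$.
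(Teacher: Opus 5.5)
Your proposal is correct and follows essentially the same route as the paper, which proves the corollary in one line by citing the associativity of the semidirect product in Lemma~\ref{lem:affine-semidir}. Your extra bookkeeping makes explicit what that one line leaves implicit: that the action is compatible with the product, that generalized associativity follows from the three-term case, and that contiguity preserves the order of the noncommutative word.
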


\begin{proof}
It is just associativity of \cref{lem:affine-semidir}.
\end{proof}

\subsection{Full proof of the constructive WY recursion}
\label{app:constructive-proof}

We prove \cref{thm:constructive-wy} in detail. The base case $t=1$ gives
\begin{equation}
\Gamma_1=\Lambda_1,
\qquad
Y_1=u_1,
\qquad
W_1=r_1,
\qquad
M_1=[1],
\end{equation}
hence $P_1=\Lambda_1-u_1r_1^*=\Gamma_1-Y_1M_1W_1^*$.

Assume $P_{t-1}=\Gamma_{t-1}-Y_{t-1}M_{t-1}W_{t-1}^*$. Then
\begin{align}
P_t
&=(\Lambda_t-u_tr_t^*)(\Gamma_{t-1}-Y_{t-1}M_{t-1}W_{t-1}^*)\\
&=\Lambda_t\Gamma_{t-1}-\Lambda_tY_{t-1}M_{t-1}W_{t-1}^*\nonumber\\
&\quad-u_tr_t^*\Gamma_{t-1}+u_tr_t^*Y_{t-1}M_{t-1}W_{t-1}^*.
\label{eq:appendix-constructive-expand}
\end{align}
Using the proposed updates,
\begin{align}
Y_tM_tW_t^*
&=\begin{bmatrix}\Lambda_tY_{t-1}&u_t\end{bmatrix}
\begin{bmatrix}
M_{t-1}&0\\
-r_t^*Y_{t-1}M_{t-1}&1
\end{bmatrix}
\begin{bmatrix}W_{t-1}^*\\ r_t^*\Gamma_{t-1}\end{bmatrix}\\
&=\Lambda_tY_{t-1}M_{t-1}W_{t-1}^*
-u_tr_t^*Y_{t-1}M_{t-1}W_{t-1}^*\nonumber\\
&\quad+u_tr_t^*\Gamma_{t-1}.
\label{eq:appendix-constructive-lowrank}
\end{align}
Substituting \cref{eq:appendix-constructive-lowrank} into $\Gamma_t-Y_tM_tW_t^*$ with $\Gamma_t=\Lambda_t\Gamma_{t-1}$ gives exactly \cref{eq:appendix-constructive-expand}. Thus $P_t=\Gamma_t-Y_tM_tW_t^*$ by induction.

\section{Constructive Algorithm Details}
\label{app:algorithms}

This appendix expands the chunk-WY recursion in code-like form and records the implementation conventions needed for an arXiv preprint. Throughout, $r_t$ denotes the right WY vector in $A_t=\Lambda_t-u_tr_t^*$; it is not the attention value vector.

\subsection{Left-to-right product convention}

We use the left-to-right chunk product
\begin{equation}
P_t=A_tA_{t-1}\cdots A_1.
\end{equation}
The convention matters because the factors in \cref{thm:constructive-wy} are asymmetric. Reversing the product order changes the placement of $\Gamma_{t-1}^*r_t$ and the lower-triangular structure of $M_t$.

\subsection{Base case and dimensions}

At $t=1$,
\begin{equation}
\Gamma_1=\Lambda_1,
\qquad
Y_1=u_1,
\qquad
W_1=r_1,
\qquad
M_1=[1],
\end{equation}
so
\begin{equation}
P_1=\Lambda_1-u_1r_1^*=\Gamma_1-Y_1M_1W_1^*.
\end{equation}
If $Y_{t-1},W_{t-1}\in\F^{d\times(t-1)}$ and $M_{t-1}\in\F^{(t-1)\times(t-1)}$, then
\begin{equation}
r_t^*Y_{t-1}M_{t-1}\in\F^{1\times(t-1)}.
\end{equation}
Thus the update
\begin{equation}
M_t=
\begin{bmatrix}
M_{t-1}&0\\
-r_t^*Y_{t-1}M_{t-1}&1
\end{bmatrix}
\end{equation}
is well typed. The negative sign is essential: it cancels the cross term generated by multiplying the new rank-one correction into the previous low-rank correction.

\subsection{Reference pseudocode}

The following pseudocode is intended as a reference implementation, not as the final optimized Triton kernel.

\begin{verbatim}
Gamma = identity_phase_or_diag()
Y = empty_matrix(d, 0)
W = empty_matrix(d, 0)
M = empty_matrix(0, 0)

for t in range(C):
    Lambda_t = phase_decay[t]          # diagonal or 2x2 block phase
    u_t = beta[t] * k[t]
    r_t = Lambda_t.conj().T @ k[t]     # elementwise for diagonal phase

    Gamma_old = Gamma
    Y_old, W_old, M_old = Y, W, M

    Gamma = Lambda_t @ Gamma_old
    Y = concat_cols(Lambda_t @ Y_old, u_t)
    W = concat_cols(W_old, Gamma_old.conj().T @ r_t)

    row = - r_t.conj().T @ Y_old @ M_old
    M = block([[M_old, zeros],
               [row,   ones]])
\end{verbatim}

For diagonal complex $\Lambda_t$, the multiplications $\Lambda_tY$ and $\Gamma^*r_t$ are elementwise over channels. For real block-rotation implementation, they are independent $2\times2$ block multiplications.

\subsection{Affine write term}

The recurrent update is
\begin{equation}
S_t=A_tS_{t-1}+B_t,
\qquad
B_t=\beta_tk_ta_t^*,
\end{equation}
where $a_t$ is the attention value vector. Inside a chunk,
\begin{equation}
S_C=A_{C:1}S_0+\sum_{i=1}^C A_{C:i+1}B_i.
\end{equation}
The additive summary can be computed without forming any dense $A_{C:i}$ by running the same recurrence with $\bar S_0=0$:
\begin{equation}
\bar S_t=A_t\bar S_{t-1}+B_t,
\qquad
B_C^{\rm chunk}=\bar S_C.
\end{equation}

\subsection{Why rank remains bounded}

For a chunk of size $C$, the recursion stops at $t=C$, so $Y_C,W_C\in\F^{d\times C}$ and $M_C\in\F^{C\times C}$. Across chunks, the recommended implementation does not compose $(Y,W,M)$ factors into a larger global factor. Instead it applies each chunk transfer to the boundary state:
\begin{equation}
S_{j+1}=\Gamma_jS_j-Y_jM_j(W_j^*S_j)+B_j.
\end{equation}
This is exact and avoids full-sequence rank growth.

\subsection{Optimization target}

The reference construction materializes $M\in\F^{C\times C}$ and costs $\bigO(dC^2+C^3)$ per chunk. For $C\le64$, this is a reasonable first implementation. The optimized kernel should avoid slow Python-level loops and instead:
\begin{enumerate}[leftmargin=*]
    \item compute phase prefix products by elementwise complex multiplication;
    \item compute the within-chunk Gram-like matrix $r_i^*\Gamma_{i-1:j-1}u_j$ using batched matrix multiplication;
    \item construct or solve the triangular $M$ system by forward substitution;
    \item use Tensor-Core-friendly matmuls for $W^*S$, $M(\cdot)$, and $Y(\cdot)$.
\end{enumerate}
This is the code-level systems gap remaining after the theorem.

\section{Kernel Details and Rank-Growth Discipline}
\label{app:kernel-details}

\subsection{What rank growth means}

If a chunk summary has rank $C$, composing two exact chunk summaries produces rank at most $2C$ by \cref{thm:wy-closure-main}.  Repeating this across $T/C$ chunks gives rank $T$ in the worst case.  This is not useful as a full-sequence kernel state.

The correct discipline is:
\begin{enumerate}[leftmargin=*]
    \item Build exact low-rank factors only inside chunks.
    \item Store chunk summaries $\transfer_j=(\Gamma_j,Y_j,W_j,M_j,B_j)$ with rank at most $C$.
    \item Apply each chunk summary to the boundary state; do not materialize a global $YMW^*$.
\end{enumerate}

\subsection{Boundary-state scan}

For chunk $j$,
\begin{equation}
S_{j+1}=\Gamma_jS_j-Y_jM_j(W_j^*S_j)+B_j.
\end{equation}
The operation uses matrix products with shapes
\begin{align}
W_j^*S_j&\in\F^{C\times d_v},\\
M_j(W_j^*S_j)&\in\F^{C\times d_v},\\
Y_jM_j(W_j^*S_j)&\in\F^{d\times d_v}.
\end{align}
Thus the dominant transfer cost is $\bigO(dCd_v+C^2d_v)$ plus diagonal/block-diagonal multiplication by $\Gamma_j$.

\subsection{Potential optimized recurrence for factors}

The closure theorem gives a correct but naive way to build $Y,W,M$.  An optimized kernel should avoid explicitly applying the generic closure formula at every step.  The expected path is analogous to KDA:
\begin{enumerate}[leftmargin=*]
    \item compute phase prefix products $\Gamma_{i\to j}=\prod_{t=i}^{j}\Lambda_t$ by elementwise complex multiplication;
    \item compute pairwise inner products $w_i^*\Gamma_{i+1\to j-1}u_j$ inside the chunk;
    \item form a lower-triangular system for $M$ or its inverse;
    \item use batched matmuls for $W^*S$, $M(\cdot)$, and $Y(\cdot)$.
\end{enumerate}
This is the systems work that turns the theorem into a KDA-speed operator.

\subsection{Numerical considerations}

Phase factors have unit modulus, while decays satisfy $0\le \alpha_{t,j}\le1$.  Products can underflow for long chunks if decays are small.  As in gated linear attention and KDA-style kernels, one may store log-decays and apply local renormalization inside chunks.  The phase part can be stored either as $(\cos\theta,\sin\theta)$ or as complex half/bfloat16 pairs depending on hardware support.

\subsection{Claim language for the paper}

Safe claim:
\begin{quote}
The phase-augmented delta transition admits an exact block-WY representation within chunks.  Since chunk size is fixed, rank growth is bounded inside each chunk.  Chunk summaries can be applied to boundary states without dense transition matrices, preserving recurrent inference and enabling chunk-parallel training.
\end{quote}

Unsafe claim:
\begin{quote}
The full sequence admits an exact fixed-rank low-rank correction independent of $T$.
\end{quote}

\subsection{Derivation plan for the phase-augmented kernel}
\label{app:kernel-stage-plan}

A useful way to read the kernel theorem is as a staged restriction of a dense affine scan.

\paragraph{Stage 1: phase decay only.}
If $A_t=\Lambda_t$ with $\Lambda_t=\diag(\alpha_t\od e^{i\theta_t})$, then
\begin{equation}
A_tA_{t-1}\cdots A_1=\prod_{i=1}^{t}\Lambda_i,
\end{equation}
which is represented by elementwise complex multiplication.  This is the reason for using aligned $2\times2$ rotations rather than arbitrary dense orthogonal matrices.

\paragraph{Stage 2: one delta correction.}
For
\begin{equation}
A_t=\Lambda_t-u_tr_t^*,
\end{equation}
products remain diagonal/block-phase plus low-rank corrections.  The $M$ factor is needed because cross terms such as $u_2r_2^*u_1r_1^*$ must be represented without expanding into dense matrices.  This is exactly what the block-WY invariant stores.

\paragraph{Stage 3: chunk-bounded exactness.}
The exact chunk theorem is
\begin{equation}
A_CA_{C-1}\cdots A_1=\Gamma_C-Y_CM_CW_C^*.
\end{equation}
Since $C$ is fixed, the rank is bounded by $C$.  The theorem should never be stated with $C=T$ unless the intended cost is full-sequence rank growth.

\paragraph{Stage 4: boundary-state scan.}
For the affine recurrence $S_t=A_tS_{t-1}+B_t$, the chunk summary is
\begin{equation}
\mathcal T_j=(\Gamma_j,Y_j,W_j,M_j,B_j),
\end{equation}
where
\begin{equation}
S_{j+1}=\Gamma_jS_j-Y_jM_j(W_j^*S_j)+B_j.
\end{equation}
The recommended exact implementation does not compose low-rank factors across chunks.  It scans boundary states.  If full inter-chunk parallelism is needed, one may use superchunks or approximate recompression, but those are separate implementation choices.

\subsection{Claim language for this preprint}
\label{app:preprint-claim-language}

The defensible kernel claim is:
\begin{quote}
The phase-augmented delta transition admits an exact block-WY representation within fixed-size chunks.  Because the control is diagonal in complex coordinates, the chunk product can be represented by phase-prefix products plus rank-$C$ correction factors.  Chunk summaries can be applied to boundary states without materializing dense $d\times d$ transition matrices.
\end{quote}

The claim that still requires implementation evidence is:
\begin{quote}
A fused Triton/CUDA implementation reaches KDA-like throughput and memory traffic.
\end{quote}

The claim that should not be made is:
\begin{quote}
The full sequence has an exact fixed-rank representation independent of $T$.
\end{quote}

\subsection{Chunk-kernel implementation checklist}
\label{app:chunk-kernel-checklist}

A reference implementation should follow the same rank discipline used by KDA-like chunk kernels: never set the chunk size equal to the full sequence length.  Fix a small chunk length
\begin{equation}
C\in\{64,128,256\}.
\end{equation}
Inside one chunk, construct
\begin{equation}
A_{C:1}=A_CA_{C-1}\cdots A_1=\Gamma_C-Y_CM_CW_C^*,
\end{equation}
with $Y_C,W_C\in\F^{d\times C}$ and $M_C\in\F^{C\times C}$.  The rank is therefore bounded by $C$ inside the chunk and is never allowed to grow to $T$.

For each chunk $j$, store the transfer object
\begin{equation}
\mathcal T_j=(\Gamma_j,Y_j,W_j,M_j,B_j),
\end{equation}
which acts on a boundary state by
\begin{equation}
S_{j+1}=\Gamma_jS_j-Y_jM_j(W_j^*S_j)+B_j.
\end{equation}
This is the exact boundary-state scan.  It avoids global low-rank composition and therefore avoids the rank-doubling problem that would arise from composing two rank-$C$ summaries into rank $2C$ summaries.

\paragraph{Reference per-chunk steps.}
A first PyTorch/Triton prototype should implement the following operations:
\begin{enumerate}[leftmargin=*]
    \item form phase-decay blocks $\Lambda_t=\diag(\alpha_t\od e^{i\theta_t})$ or the equivalent real $2\times2$ blocks;
    \item build $u_t=\beta_t k_t$ and $r_t=\Lambda_t^*k_t$;
    \item update $(\Gamma,Y,W,M)$ using \cref{thm:constructive-wy};
    \item compute the affine write summary by the zero-state recurrence $\bar S_0=0$, $\bar S_t=A_t\bar S_{t-1}+B_t$;
    \item apply the transfer by $W^*S$, then $M(\cdot)$, then $Y(\cdot)$, plus the cheap phase multiplication $\Gamma S$.
\end{enumerate}
The naive factor build is $\bigO(dC^2+C^3)$ per chunk, while transfer application is $\bigO(dCd_v+C^2d_v+dd_v)$.  For small $C$, this is a usable correctness kernel.  A fused kernel should reduce memory traffic and approach KDA-style $\bigO(TdC)$ behavior in practice, but that is an implementation benchmark rather than a theorem.

\paragraph{Exactness versus parallelism.}
There are three ways to process multiple chunks:
\begin{description}[leftmargin=*]
    \item[Boundary-state scan.] Exact and simplest.  Do not compose low-rank summaries; apply each transfer to the incoming state.
    \item[Superchunks.] Exact inside a larger window $C_s=mC$, with rank bounded by $C_s$.
    \item[Recompression.] Compose summaries and compress back to rank $C$ by QR/SVD/randomized low-rank approximation.  This can increase inter-chunk parallelism, but it is approximate and should be reported separately.
\end{description}

\section{Finite-State Expressivity and Automata Interpretation}
\label{app:automata}

This appendix gives the finite-state expressivity result in a way that does not assume automata-theory background. The main point is simple: a finite-state machine can always be represented by one-hot vectors, and its transition rule can always be represented by a deterministic matrix. Therefore, in the generalized tied-write affine template, the erase-write coupling is not an expressivity obstruction for finite automata: we can set the write coefficient to zero and use only the linear transition.

\subsection{The bounded-overhead question}
\label{app:problem-b}

A central question raised by KDA and SFDA is whether tying the erase and write directions reduces expressive power. A write-decoupled affine memory system has transitions
\begin{equation}
x_t=A_{\sigma_t}x_{t-1}+b_{\sigma_t},
\qquad x_t\in\R^d,
\label{eq:app-decoupled-affine}
\end{equation}
where the additive write $b_\sigma$ is independent of the linear transition $A_\sigma$. A generalized tied-write system has transitions
\begin{equation}
z_t=(I-\beta_{\sigma_t}k_{\sigma_t}k_{\sigma_t}^*)
\widetilde A_{\sigma_t}z_{t-1}
+
\beta_{\sigma_t}k_{\sigma_t}r_{\sigma_t},
\qquad z_t\in\R^D.
\label{eq:app-general-tied}
\end{equation}
Here the same vector $k_\sigma$ controls both the erase direction and the write direction.

\begin{problem}[Bounded-overhead tied-write realization]
Given a write-decoupled affine memory system in dimension $d$, does there exist a dimension $D$, an injective encoding $\phi:\R^d\to\R^D$, and tied-write transitions $\widetilde T_\sigma$ of the form \cref{eq:app-general-tied} such that
\begin{equation}
\phi(A_\sigma x+b_\sigma)=\widetilde T_\sigma(\phi(x))
\end{equation}
for every symbol $\sigma$ and every reachable state $x$? Can $D$ be bounded by $\operatorname{poly}(d)$?
\end{problem}

The theorem below answers the finite-state version of this question. It does not solve the stronger low-dimensional compression question for general continuous affine systems.

\subsection{Finite automata as one-hot linear systems}
\label{app:finite-state-proof}

A deterministic finite automaton consists of a finite set of states
\begin{equation}
Q=\{q_1,\ldots,q_N\},
\end{equation}
a finite input alphabet $\Sigma$, and for every input symbol $\sigma\in\Sigma$, a transition map
\begin{equation}
\delta_\sigma:Q\to Q.
\end{equation}
If the current state is $q$ and symbol $\sigma$ is read, the next state is $\delta_\sigma(q)$.

\paragraph{One-hot encoding.}
Encode state $q_i$ by the standard basis vector
\begin{equation}
\phi(q_i)=e_i\in\R^N.
\end{equation}
For each input symbol $\sigma$, define the matrix $P_\sigma\in\R^{N\times N}$ by
\begin{equation}
P_\sigma e_i=e_j
\quad\text{whenever}\quad
\delta_\sigma(q_i)=q_j.
\label{eq:app-Psigma}
\end{equation}
Thus $P_\sigma$ is exactly the linear operator that implements the automaton transition in one-hot coordinates.

\begin{remark}[Not necessarily a permutation]
Each column of $P_\sigma$ contains exactly one $1$, because every current state has exactly one next state. But $P_\sigma$ need not be a permutation matrix: two different states may transition to the same next state. Reset maps and absorbing states are typical examples.
\end{remark}

\begin{lemma}[One-hot linear realization]
\label{lem:onehot-linear}
For every input sequence $\sigma_1,\ldots,\sigma_T$, if
\begin{equation}
q_t=\delta_{\sigma_t}(q_{t-1}),
\end{equation}
and
\begin{equation}
s_0=\phi(q_0),
\qquad
s_t=P_{\sigma_t}s_{t-1},
\end{equation}
then $s_t=\phi(q_t)$ for all $t=0,\ldots,T$.
\end{lemma}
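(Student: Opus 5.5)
The plan is a direct induction on $t$, after first checking one fact at the level of single basis vectors. The only property of $P_\sigma$ used is its defining relation \cref{eq:app-Psigma}: $P_\sigma e_i=e_j$ exactly when $\delta_\sigma(q_i)=q_j$. Rewritten with the encoding, this says $P_\sigma\phi(q)=\phi(\delta_\sigma(q))$ for every $q\in Q$.

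First, I will confirm that $P_\sigma$ is well defined. Because the automaton is deterministic, each $q_i$ has exactly one successor $\delta_\sigma(q_i)$. So each column $P_\sigma e_i$ is specified exactly once, and a linear map on $\R^N$ is fixed by its values on the basis. This gives the identity $P_\sigma\phi(q)=\phi(\delta_\sigma(q))$, and it requires no invertibility. That is the point of the preceding remark: reset maps and merging states are allowed.

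Next comes the induction itself. The base case $t=0$ is the initialization $s_0=\phi(q_0)$. For the step, assume $s_{t-1}=\phi(q_{t-1})$. Then
\[
s_t=P_{\sigma_t}s_{t-1}=P_{\sigma_t}\phi(q_{t-1})=\phi(\delta_{\sigma_t}(q_{t-1}))=\phi(q_t),
\]
where the third equality is the basis-vector identity from the first step. This closes the induction for all $t\le T$.

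There is no real obstacle here. The only point needing care is to use linearity only on basis vectors, so that determinism rather than bijectivity of $\delta_\sigma$ is what makes $P_\sigma$ well defined. With the lemma established, \cref{thm:finite-tied-main} follows by setting $\widetilde A_\sigma=P_\sigma$ and $\beta_\sigma=0$, which reduces the tied-write template to $s_t=P_{\sigma_t}s_{t-1}$.
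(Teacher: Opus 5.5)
Your proof is correct and is essentially the paper's argument: induction on $t$, with the base case given by $s_0=\phi(q_0)$ and the step using $P_{\sigma_t}e_i=e_j$ whenever $\delta_{\sigma_t}(q_i)=q_j$. Your extra check that $P_\sigma$ is well defined by determinism alone (no invertibility needed) is a harmless addition; the paper leaves it implicit in its remark that $P_\sigma$ need not be a permutation.
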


\begin{proof}
The claim holds for $t=0$ by definition. Suppose $s_{t-1}=\phi(q_{t-1})$. If $q_{t-1}=q_i$, then $s_{t-1}=e_i$. Let $q_t=\delta_{\sigma_t}(q_i)=q_j$. By construction of $P_{\sigma_t}$,
\begin{equation}
s_t=P_{\sigma_t}e_i=e_j=\phi(q_j)=\phi(q_t).
\end{equation}
The result follows by induction.
\end{proof}

\subsection{Exact generalized tied-write realization of finite automata}

We now insert the one-hot linear recurrence into the generalized tied-write template \cref{eq:app-general-tied}. For finite-state simulation, the tied erase-write constraint causes no obstruction, because we can disable the write channel.

\begin{theorem}[Exact finite-state realization by generalized tied-write updates]
\label{thm:app-exact-tied-finite}
Every deterministic finite automaton with $N$ states admits an exact realization by the generalized tied-write update \cref{eq:app-general-tied} in dimension $N$.
\end{theorem}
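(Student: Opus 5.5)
The plan is to reduce the statement to \cref{lem:onehot-linear} by switching off the write channel of the template \cref{eq:app-general-tied}. Index $Q=\{q_1,\ldots,q_N\}$, use the one-hot encoding $\phi(q_i)=e_i\in\R^N$, and for each symbol $\sigma$ take $P_\sigma$ as defined in \cref{eq:app-Psigma}. The parameters are then chosen symbol by symbol:
\begin{equation}
\widetilde A_\sigma=P_\sigma,\qquad \beta_\sigma=0,
\end{equation}
with $k_\sigma$ and $r_\sigma$ arbitrary. For definiteness, set $k_\sigma=e_1$ and $r_\sigma=0$, so that the parameter choice is explicit and real-valued. The initial state is $z_0=\phi(q_0)=e_{i_0}$.

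The one computation is the substitution. With $\beta_\sigma=0$, the erase factor $I-\beta_\sigma k_\sigma k_\sigma^*$ equals $I$, and the write term $\beta_\sigma k_\sigma r_\sigma$ vanishes. The tied-write update therefore collapses to $z_t=P_{\sigma_t}z_{t-1}$, which is exactly the recurrence $s_t=P_{\sigma_t}s_{t-1}$ of \cref{lem:onehot-linear}. That lemma, proved by induction on $t$, then gives $z_t=\phi(q_t)$ for every $t$ and every input sequence. Since the representation is exact for all input sequences in the same dimension $N$, this is the claimed realization.

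No step is a real obstacle, so the care goes into bookkeeping. First, the template's dimension must be $D=N$ with no auxiliary coordinates. Second, the write term in \cref{eq:app-general-tied} is a vector, so $r_\sigma$ has to be read as a scalar coefficient (or as a $1\times d_v$ row if the state is matrix-valued); either way it is multiplied by $\beta_\sigma=0$, so any reading works. Third, $P_\sigma$ need not be invertible, but the template puts no restriction on $\widetilde A_\sigma$, so non-permutation transitions such as resets are admissible. Finally, I would remark that the same construction applies verbatim to \cref{cor:finite-semidir-main}, with $N$ replaced by $|N||H|$.
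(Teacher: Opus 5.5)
Your proposal is correct and is essentially the paper's proof: one-hot encode the states, set $\widetilde A_\sigma=P_\sigma$ and $\beta_\sigma=0$ so that both the erase and write terms vanish, and then invoke the one-hot linear realization lemma (induction on $t$). Your extra bookkeeping is harmless and slightly more explicit than the paper's argument; it covers the explicit choice of $k_\sigma,r_\sigma$, the initial state $z_0=e_{i_0}$, and the admissibility of non-invertible $P_\sigma$.
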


\begin{proof}
Use the one-hot encoding $\phi(q_i)=e_i\in\R^N$. For each symbol $\sigma$, define $P_\sigma$ as in \cref{eq:app-Psigma}. Given an input sequence $\sigma_1,\ldots,\sigma_T$, choose
\begin{equation}
\widetilde A_{\sigma_t}=P_{\sigma_t},
\qquad
\beta_{\sigma_t}=0.
\end{equation}
Then \cref{eq:app-general-tied} reduces to
\begin{equation}
z_t=P_{\sigma_t}z_{t-1}.
\end{equation}
By \cref{lem:onehot-linear}, $z_t=\phi(q_t)$ for all $t$. Thus the generalized tied-write update exactly realizes the automaton.
\end{proof}

\paragraph{Why the proof is so short.}
The tied-write mechanism only matters when $\beta_\sigma\neq0$. By setting $\beta_\sigma=0$, both the erase term and the write term vanish, and the system becomes an ordinary linear recurrence. The finite automaton is already an ordinary linear recurrence after one-hot encoding. Hence the tied-write coupling is irrelevant for exact finite-state realization.

\subsection{Finite semidirect automata}

A finite semidirect automaton has state space
\begin{equation}
Q=N\rtimes H,
\end{equation}
where $N$ is a finite memory set and $H$ is a finite control set. A state is a pair $q=(n,h)$, where $n\in N$ is the memory component and $h\in H$ is the control component. A typical semidirect update has the form
\begin{equation}
(n,h)\mapsto (n+\rho(h)a_\sigma,\;hh_\sigma),
\label{eq:app-semidir-update}
\end{equation}
where $a_\sigma\in N$, $h_\sigma\in H$, and $\rho(h)$ denotes the action of $h$ on $N$. The precise algebraic form is not needed for the theorem below. The only required fact is that $N\rtimes H$ is finite.

\begin{corollary}[Exact finite semidirect realization]
\label{cor:app-semidir-finite}
Every finite semidirect automaton with state space $N\rtimes H$ is exactly realizable by the generalized tied-write update in dimension
\begin{equation}
D=|N||H|.
\end{equation}
\end{corollary}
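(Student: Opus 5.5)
The plan is to reduce the corollary to \cref{thm:app-exact-tied-finite}. The one thing to check is that a finite semidirect automaton really is a deterministic finite automaton on the finite set $Q=N\rtimes H$. I would proceed in three short steps.

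\emph{Step 1 (finiteness).} As a set, $N\rtimes H$ is the Cartesian product $N\times H$. The semidirect multiplication changes only the transition rule, not the underlying set. Hence $|Q|=|N||H|$ is finite. I would then enumerate $Q=\{q_1,\ldots,q_D\}$ with $D=|N||H|$.

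\emph{Step 2 (determinism).} For each symbol $\sigma$, the update \cref{eq:app-semidir-update},
\[
\delta_\sigma(n,h)=(n+\rho(h)a_\sigma,\ hh_\sigma),
\]
is a well-defined function $Q\to Q$. This holds because $\rho(h)$ acts on $N$, so $n+\rho(h)a_\sigma\in N$, and $H$ is closed under its product. More generally, I would only use the fact that each symbol induces some function $\delta_\sigma:Q\to Q$. The paper says the precise algebraic form is not needed, so I would not depend on group axioms. Invertibility of $\delta_\sigma$ is also not needed, since the one-hot matrices $P_\sigma$ may be non-permutation matrices (see the remark after \cref{eq:app-Psigma}). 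Thus $(Q,\Sigma,\delta)$ is a deterministic finite automaton with $D$ states.

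\emph{Step 3 (apply the theorem).} I would invoke \cref{thm:app-exact-tied-finite} with $N$ replaced by $D$. This means encoding $(n,h)=q_i$ as $e_i\in\R^{D}$, setting $\widetilde A_\sigma=P_\sigma$, and setting $\beta_\sigma=0$. \Cref{lem:onehot-linear} then gives $z_t=\phi(q_t)$ for all $t$, with the realization in dimension $D=|N||H|$.

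The only potential obstacle is in Step 2: confirming that the semidirect update is a total, well-defined self-map of the finite product set, regardless of how the action $\rho$ and the operations are specified. I would handle this by stating explicitly that the corollary assumes each input induces a deterministic map on $N\rtimes H$. With that assumption, the rest is a direct citation.
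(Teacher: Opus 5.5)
Your proposal is correct and follows the paper's proof: the product set $N\rtimes H$ is finite with $|Q|=|N||H|$, each symbol induces a deterministic self-map $\delta_\sigma:Q\to Q$, and the one-hot realization theorem then applies directly with $\widetilde A_\sigma=P_\sigma$ and $\beta_\sigma=0$. Your remarks on well-definedness and on not needing $\delta_\sigma$ to be invertible are harmless expansions of the paper's one-line observation that only finiteness and determinism are used.
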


\begin{proof}
Since $N$ and $H$ are finite, $Q=N\rtimes H$ is a finite state set with $|Q|=|N||H|$. Every input symbol induces a deterministic transition map $\delta_\sigma:Q\to Q$. Apply \cref{thm:app-exact-tied-finite}.
\end{proof}

\subsection{What remains open after the solution}
\label{app:what-remains-open}

\Cref{thm:app-exact-tied-finite,cor:app-semidir-finite} solve the finite-state expressivity objection for the generalized tied-write template. The solution is exact and elementary. However, it should not be confused with two stronger claims.

\paragraph{Not a low-dimensional compression theorem.}
The one-hot construction uses dimension equal to the number of reachable states:
\begin{equation}
D=|Q|.
\end{equation}
If a memory system has exponentially many reachable states relative to a smaller continuous representation, one-hot lifting may be exponentially large. Therefore the theorem proves exact finite-state realizability, while the stronger question of efficient low-dimensional compilation remains separate.

\paragraph{Not a kernel-compatible SFDA universality theorem.}
The proof uses unrestricted transition matrices $\widetilde A_\sigma=P_\sigma$. The kernel-compatible tied SFDA transition has the much more structured form
\begin{equation}
A_t=(I-\beta_tk_tk_t^*)\Lambda_t=\Lambda_t-u_tw_t^*,
\end{equation}
where $\Lambda_t$ is diagonal or block-phase. This restriction is what makes the constructive chunk-WY kernel possible. The exact one-hot theorem therefore removes finite automata as an expressivity obstruction, but it does not say that every finite automaton admits a small phase-plus-rank-one SFDA realization.

\paragraph{Clean final interpretation.}
The finite-state question is solved: tied-write coupling does not prevent exact one-hot realization. The remaining research problem is sharper and more useful for systems evaluation: can structured SFDA transitions give low-dimensional, hardware-efficient realizations that improve the KDA quality--efficiency frontier?

\section{Experiment Protocol}
\label{app:experiment-protocol}

\subsection{Synthetic automata}

\begin{table}[h]
\centering
\caption{Automata suite.  Train lengths should be much shorter than test lengths.}
\begin{tabular}{llll}
\toprule
Task & Train length & Test length & Main metric \\
\midrule
$C_m$ counter & 256--1024 & 2048--16384 & final/prefix accuracy \\
$D_{2m}$ dihedral & 256--1024 & 2048--16384 & state accuracy \\
Reset counter & 256--1024 & 2048--16384 & post-reset accuracy \\
Bounded Dyck & 256--1024 & 2048--8192 & stack top accuracy \\
Kimi Stack & 256--2048 & 4096--8192 & pop accuracy \\
MQAR & 1024--4096 & 8192--32768 & recall accuracy \\
Palindrome & 256--2048 & 4096--8192 & reverse-copy accuracy \\
\bottomrule
\end{tabular}
\end{table}

\subsection{Language modeling}

Use at least three model scales if compute permits: 100M, 300M, and 1B parameters.  Compare:
\begin{itemize}[leftmargin=*]
    \item full attention / MLA-style baseline;
    \item KDA hybrid at $3{:}1$;
    \item SFDA hybrid at $3{:}1$, $7{:}1$, and $15{:}1$;
    \item ablations: no phase, fixed phase, learned phase, random phase, phase without delta.
\end{itemize}

\subsection{Kernel benchmarks}

Report forward and backward time for $T\in\{4k,16k,64k,128k\}$, chunk $C\in\{64,128\}$, and $d_k=d_v=128$.  Include memory footprint and achieved TFLOP/s.  The relevant comparison is not against dense DPLR only, but against the best available KDA kernel.

\subsection{Long-context benchmarks}

The final paper should include at least RULER, MRCR, RepoQA, LongBench v2, and a long-code benchmark.  The hybrid-ratio result is the key table:
\begin{center}
\begin{tabular}{lcccc}
\toprule
Model & Ratio & RULER & RepoQA & Avg. \\
\midrule
KDA & 3:1 & -- & -- & -- \\
SFDA & 3:1 & -- & -- & -- \\
SFDA & 7:1 & -- & -- & -- \\
SFDA & 15:1 & -- & -- & -- \\
\bottomrule
\end{tabular}
\end{center}

\section{Tied-Write Compilation over Structured Control Orbits}
\label{app:tied_write_orbit}

This appendix gives a precise restricted statement of what tied-write SFDA can realize without appealing to arbitrary dense transition matrices.  The key idea is an \emph{orbit condition}: a tied write into a key direction $k$ can be routed into another direction $w$ when $w$ is reachable from $k$ by the structured control operators available to the architecture.

There is one important technical caveat.  In the KDA/SFDA delta rule, a tied write is not a pure additive write.  It also contains an erase projection.  Therefore the full affine compilation theorem needs a protected-subspace assumption ensuring that the erase projection does not modify the carried memory.  We state both the contribution-level routing lemma and the full affine theorem below.

\subsection{Model classes}

Let $x\in\R^d$ denote a memory state.  A decoupled structured affine update has the form
\begin{equation}
    x^+=G_\sigma x+w_\sigma r_\sigma,
    \label{eq:decoupled-update-orbit}
\end{equation}
where $G_\sigma\in\R^{d\times d}$ is a structured transition operator, $w_\sigma\in\R^d$ is a write direction, and $r_\sigma\in\R$ is the scalar value being written.  The write direction $w_\sigma$ is independent of the direction used by the erase/read mechanism.

A tied-write delta step has the form
\begin{equation}
    z^+=(I-\beta_\sigma kk^\top)\widetilde G_\sigma z+\beta_\sigma k r_\sigma,
    \label{eq:tied-update-orbit}
\end{equation}
where the same vector $k$ controls the erase direction and the write direction.  This is the structural coupling present in KDA-style delta memory.

\subsection{Structured control family and orbits}

Let $\mathcal G\subseteq\R^{d\times d}$ be the family of structured control operators allowed by the architecture.  Examples include diagonal phase-decay matrices, block rotations, signed permutations, Givens rotations, or products of such operators.

For a fixed nonzero key $k\in\R^d$, define its orbit under $\mathcal G$ as
\begin{equation}
    \operatorname{Orb}_{\mathcal G}(k)=\{Rk:R\in\mathcal G\}.
    \label{eq:orbit}
\end{equation}
The orbit is the set of directions reachable from the tied key using only the structured control operators available to the model.

\subsection{Contribution-level routing}

\begin{lemma}[Orbit routing of a write contribution]
\label{lem:contribution-routing}
Let $k\in\R^d$ be a tied-write key and let $\mathcal G$ be the structured control family.  Suppose a desired write direction $w_\sigma$ satisfies
\begin{equation}
    w_\sigma\in\operatorname{Orb}_{\mathcal G}(k),
\end{equation}
or equivalently, there exists $R_\sigma\in\mathcal G$ such that $w_\sigma=R_\sigma k$.  Then the additive contribution $w_\sigma r_\sigma$ can be obtained by writing $r_\sigma$ along $k$ and applying $R_\sigma$ to that contribution.
\end{lemma}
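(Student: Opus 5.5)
The claim is at the level of write contributions, not full states, so the plan is a direct linear-algebra computation.

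First, I would isolate the write contribution of a tied step. Take the tied-write update \cref{eq:tied-update-orbit} with zero carried input, or equivalently look only at the additive part. With $\beta_\sigma=1$ this contribution is exactly $k r_\sigma$; for general $\beta_\sigma\neq0$ it is $\beta_\sigma k r_\sigma$, and I would absorb the factor by writing the scalar $r_\sigma/\beta_\sigma$ instead. So the tied step produces the vector $k r_\sigma$ with no interference from the erase projection, because the erase acts only on the carried term $\widetilde G_\sigma z$, which vanishes here.

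Second, I would apply the routing operator. By hypothesis, $w_\sigma\in\Orb_{\mathcal G}(k)$, so I fix a witness $R_\sigma\in\mathcal G$ with $w_\sigma=R_\sigma k$. Applying $R_\sigma$ to the contribution and using linearity of matrix multiplication over the scalar value gives
\[
R_\sigma(k r_\sigma)=(R_\sigma k)r_\sigma=w_\sigma r_\sigma,
\]
which is exactly the additive term in \cref{eq:decoupled-update-orbit}. Since $R_\sigma\in\mathcal G$, this routing uses only operators that the architecture already allows, for example one more $\Lambda$-type control step with $\beta=0$. By \cref{thm:rank-budget}, such a step stays inside the chunk-WY family.

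Third, I would state the scope. The same identity holds verbatim over $\C$ with $k^*$ in place of $k^\top$, and for vector-valued $r_\sigma$ replaced by a row $v^*$. I do not expect any real obstacle, since the argument is one line of associativity. The only point needing care is to make explicit that ``contribution'' means the additive part isolated from carried memory. When the routing step is applied to a full state, $R_\sigma$ also acts on the carried memory. Controlling that effect is precisely what the later protected-subspace full affine theorem handles, so this lemma should not claim it.
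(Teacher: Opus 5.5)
Your proposal is correct and matches the paper's proof: a tied write produces the contribution $kr_\sigma$, and applying $R_\sigma$ gives the one-line identity $R_\sigma(kr_\sigma)=(R_\sigma k)r_\sigma=w_\sigma r_\sigma$. Your additions are accurate but not required by the lemma: absorbing $\beta_\sigma$, noting that the erase projection only acts on the carried term, and deferring carried-memory effects to the protected-subspace theorem. The appeal to \cref{thm:rank-budget} is likewise unnecessary, and it tacitly assumes $\mathcal G$ lies inside the kernel's control family $\mathcal L$.
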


\begin{proof}
Since $w_\sigma=R_\sigma k$, a tied write creates the contribution $kr_\sigma$.  Applying $R_\sigma$ gives
\begin{equation}
R_\sigma(kr_\sigma)=(R_\sigma k)r_\sigma=w_\sigma r_\sigma.
\end{equation}
\end{proof}

\paragraph{Why this lemma is not yet a full update theorem.}
A model applies $R_\sigma$ to the state, not only to the newly written contribution.  Moreover, the tied delta step also contains the erase term $(I-\beta kk^\top)$.  The full theorem must account for both effects.

\subsection{Restricted tied-write compilation theorem}

\begin{theorem}[Orbit-restricted tied-write affine compilation]
\label{thm:orbit-restricted-compilation}
Let $\mathcal X\subseteq\R^d$ be the set of reachable memory states.  Consider the desired decoupled affine update
\begin{equation}
T_\sigma(x)=G_\sigma x+w_\sigma r_\sigma.
\end{equation}
Assume there exists a structured routing operator $R_\sigma\in\mathcal G$ and a structured pre-routing operator $\widetilde G_\sigma$ such that
\begin{align}
    G_\sigma&=R_\sigma\widetilde G_\sigma,
    \label{eq:orbit-linear-part}\\
    w_\sigma&=R_\sigma k,
    \label{eq:orbit-write-part}
\end{align}
and assume the protected-subspace condition
\begin{equation}
    k^\top \widetilde G_\sigma x=0
    \qquad\text{for every }x\in\mathcal X.
    \label{eq:protected-subspace}
\end{equation}
Then, with $\beta_\sigma=1$, the composition of a tied-write step with $\widetilde G_\sigma$ followed by the structured routing operator $R_\sigma$ implements $T_\sigma$ exactly on $\mathcal X$:
\begin{equation}
R_\sigma\bigl((I-kk^\top)\widetilde G_\sigma x+k r_\sigma\bigr)
=G_\sigma x+w_\sigma r_\sigma.
\end{equation}
\end{theorem}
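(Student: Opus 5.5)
The proof is a direct computation: expand the left-hand side by linearity of $R_\sigma$, then use the three hypotheses \cref{eq:orbit-linear-part}, \cref{eq:orbit-write-part}, \cref{eq:protected-subspace} in turn. Fix any reachable state $x\in\mathcal X$ and set $y:=\widetilde G_\sigma x$, the pre-routed carried memory. The tied-write step with $\beta_\sigma=1$ is $(I-kk^\top)y+kr_\sigma$, which expands to $y-k(k^\top y)+kr_\sigma$.

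The first step removes the erase term. By the protected-subspace condition \cref{eq:protected-subspace}, $k^\top y=k^\top\widetilde G_\sigma x=0$. Hence the projection $(I-kk^\top)$ acts as the identity on $y$, and the tied step yields exactly $\widetilde G_\sigma x+kr_\sigma$. This is the only place where reachability of $x$ is used, and it explains why the conclusion is stated on $\mathcal X$ rather than on all of $\R^d$.

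The second step applies the routing operator $R_\sigma$ and splits the result by linearity into $R_\sigma\widetilde G_\sigma x+R_\sigma(kr_\sigma)$. The first term equals $G_\sigma x$ by \cref{eq:orbit-linear-part}. The second term is handled by the contribution-level routing of \cref{lem:contribution-routing}: since $r_\sigma$ is a scalar, $R_\sigma(kr_\sigma)=(R_\sigma k)r_\sigma=w_\sigma r_\sigma$ by \cref{eq:orbit-write-part}. Adding the two terms gives $G_\sigma x+w_\sigma r_\sigma=T_\sigma(x)$, which is the claimed identity.

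There is no genuine analytic obstacle. The one point needing care is the first step: the identity holds only where $k^\top\widetilde G_\sigma x=0$, and it fails for general $x$ whenever the erase term is nonzero. This is precisely why the protected-subspace hypothesis is stated as a separate assumption rather than derived from the orbit condition, and why the routing lemma alone does not give a full update theorem.
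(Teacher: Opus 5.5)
Your proposal is correct and follows the paper's proof essentially verbatim: the protected-subspace condition makes $(I-kk^\top)$ act as the identity on $\widetilde G_\sigma x$, and then linearity of $R_\sigma$ together with $G_\sigma=R_\sigma\widetilde G_\sigma$ and $w_\sigma=R_\sigma k$ gives the identity. (Your closing remark is slightly overstated. The residual after routing is $w_\sigma\,(k^\top\widetilde G_\sigma x)$, so the identity fails only when both factors are nonzero. This does not affect the proof.)
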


\begin{proof}
By the protected-subspace condition, $kk^\top\widetilde G_\sigma x=0$ for every reachable $x$.  Hence
\begin{equation}
(I-kk^\top)\widetilde G_\sigma x=\widetilde G_\sigma x.
\end{equation}
Applying $R_\sigma$ to the tied-write step gives
\begin{align}
R_\sigma\bigl((I-kk^\top)\widetilde G_\sigma x+k r_\sigma\bigr)
&=R_\sigma\widetilde G_\sigma x+R_\sigma k r_\sigma\\
&=G_\sigma x+w_\sigma r_\sigma,
\end{align}
using \cref{eq:orbit-linear-part,eq:orbit-write-part}.
\end{proof}

\paragraph{Interpretation.}
The theorem identifies the condition under which the tied-write constraint is harmless for a decoupled write.  The desired write direction must be reachable from the tied key by the structured control family, and the carried memory must be protected from the erase projection.  Thus the expressivity question becomes a routing problem plus a subspace-allocation problem:
\begin{equation}
    \text{How large is }\operatorname{Orb}_{\mathcal G}(k),
    \qquad
    \text{and how cheaply can protected write lanes be allocated?}
\end{equation}

\subsection{Direction routing for common control families}

\begin{proposition}[Full orthogonal control]
\label{prop:orthogonal-routing}
If $\mathcal G=O(d)$, then for every pair of unit vectors $k,w\in\mathbb S^{d-1}$, there exists $R\in\mathcal G$ such that $Rk=w$.  Consequently,
\begin{equation}
    \operatorname{Orb}_{O(d)}(k)=\mathbb S^{d-1}.
\end{equation}
\end{proposition}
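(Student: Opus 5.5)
The plan is to exhibit, for any two unit vectors $k,w\in\mathbb S^{d-1}$, an explicit orthogonal matrix $R$ with $Rk=w$. The orbit identity then follows at once. Every $R\in O(d)$ preserves the Euclidean norm, so $\norm{Rk}_2=\norm{k}_2=1$ and hence $\operatorname{Orb}_{O(d)}(k)\subseteq\mathbb S^{d-1}$. Transitivity gives the reverse inclusion.

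For the construction I will use a single Householder reflection, which matches the paper's WY theme. There are two cases.
\begin{itemize}[leftmargin=*]
\item If $w=k$, take $R=I$.
\item If $w\neq k$, set $v=(k-w)/\norm{k-w}_2$ and $R=I-2vv^\top$.
\end{itemize}
Then $R$ is symmetric and $R^2=I$, since $\norm{v}_2=1$, so $R\in O(d)$.

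The key computation is $Rk=w$. Because $\norm{k}_2=\norm{w}_2=1$, we have
\[
\norm{k-w}_2^2=2-2k^\top w=2(1-k^\top w)=2(k-w)^\top k .
\]
Therefore
\[
2vv^\top k=\frac{2(k-w)\,(k-w)^\top k}{\norm{k-w}_2^2}=k-w,
\]
and so $Rk=k-(k-w)=w$.

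If one prefers $R\in SO(d)$, there is a fallback for $d\ge2$: compose with a reflection fixing $w$, namely $I-2uu^\top$ for any unit $u\perp w$, which restores determinant $+1$. Alternatively, use the Givens rotation in the plane spanned by $k$ and $w$. This refinement is not needed for the statement as posed.

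I expect no real obstacle. The only care points are the degenerate case $w=k$, where $v$ would be undefined, and noting explicitly that norm preservation gives the inclusion into the sphere.

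I would close with a remark tying this back to Theorem~\ref{thm:orbit-restricted-compilation}. The routing hypothesis~\eqref{eq:orbit-write-part} is always satisfiable under full orthogonal control after rescaling the written value. The price is that a generic Householder $R$ is dense, which is exactly the non-kernel-compatibility caveat discussed earlier in the paper.
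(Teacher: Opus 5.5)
Your proof is correct, but it takes a different route from the paper. The paper argues abstractly. It extends $k$ and $w$ to orthonormal bases $\{k,u_2,\ldots,u_d\}$ and $\{w,v_2,\ldots,v_d\}$ and takes the linear map carrying one basis to the other, which is orthogonal because it sends an orthonormal basis to an orthonormal basis. You instead write down one explicit Householder reflector $R=I-2vv^\top$ with $v=(k-w)/\norm{k-w}_2$, and the identity $\norm{k-w}_2^2=2(k-w)^\top k$ does all the work.

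The paper's version needs a basis completion (Gram--Schmidt) and does not single out any particular $R$. In exchange it needs no case split, and it adapts to $SO(d)$ for free by flipping the sign of one completing vector when $d\ge 2$. Yours needs the case $w=k$, and since $\det R=-1$ when $w\neq k$, it needs your extra reflection to land in $SO(d)$. In exchange it is closed-form and costs $O(d)$ to apply. You are also slightly more complete on the orbit identity: the paper proves only transitivity and leaves the inclusion $\Orb_{O(d)}(k)\subseteq\mathbb S^{d-1}$ (norm preservation) implicit, whereas you state it.

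One correction to your closing remark: your reflector is not an instance of the dense-control caveat. $I-2vv^\top$ is the identity minus a rank-one term, i.e.\ exactly a primitive $\Lambda-ur^*$ with $\Lambda=I$, $u=2v$, $r=v$. These are the very objects WY representations were designed to accumulate. So this particular routing costs one extra unit of the chunk rank budget rather than a dense $d\times d$ product. The paper's caveat concerns generic orthogonal matrices placed in the $\Lambda$ slot, which can require up to $d$ reflectors. Your construction thus shows a bit more than the proposition states.
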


\begin{proof}
The orthogonal group acts transitively on the unit sphere.  Concretely, extend $k$ and $w$ to orthonormal bases $\{k,u_2,\ldots,u_d\}$ and $\{w,v_2,\ldots,v_d\}$.  The linear map sending the first basis to the second is orthogonal and maps $k$ to $w$.
\end{proof}

\begin{proposition}[Signed permutation control]
\label{prop:signed-permutation-routing}
Let $\mathcal G$ be the group of signed permutation matrices.  If $k=e_1$, then
\begin{equation}
    \operatorname{Orb}_{\mathcal G}(e_1)=\{\pm e_i:1\le i\le d\}.
\end{equation}
\end{proposition}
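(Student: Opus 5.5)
The plan is to prove the orbit identity by showing both inclusions directly. Throughout, a signed permutation matrix is one of the form $R=D_\epsilon P_\pi$. Here $P_\pi$ is the permutation matrix with $P_\pi e_j=e_{\pi(j)}$, and $D_\epsilon=\diag(\epsilon_1,\ldots,\epsilon_d)$ with each $\epsilon_i\in\{\pm1\}$. Equivalently, $R$ has exactly one nonzero entry in each row and each column, and that entry is $\pm1$.

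\emph{Inclusion $\subseteq$.} Fix an arbitrary signed permutation $R=D_\epsilon P_\pi$ and compute its action on $e_1$:
\begin{equation}
Re_1=D_\epsilon e_{\pi(1)}=\epsilon_{\pi(1)}e_{\pi(1)}.
\end{equation}
This vector is $\pm e_i$ with $i=\pi(1)$. Equivalently, $Re_1$ is just the first column of $R$, and by the structure above that column has a single nonzero entry equal to $\pm1$. Hence $Re_1\in\{\pm e_i\}$.

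\emph{Inclusion $\supseteq$.} Fix a target index $i$ and a sign $s\in\{\pm1\}$. I will exhibit a signed permutation sending $e_1$ to $se_i$ as follows.
\begin{itemize}[leftmargin=*]
\item Let $\pi$ be the transposition $(1\,i)$, taken to be the identity if $i=1$.
\item Let $\epsilon_i=s$ and set every other $\epsilon_j=1$.
\end{itemize}
Then $R=D_\epsilon P_\pi$ satisfies $Re_1=D_\epsilon e_i=se_i$. This gives every element $\pm e_i$ of the claimed set.

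Nothing in the argument is a genuine obstacle. The only point requiring care is fixing the convention for signed permutations, namely the order of $D_\epsilon$ and $P_\pi$ and which index $\pi$ acts on. With $P_\pi e_j=e_{\pi(j)}$, the computation $Re_1=\epsilon_{\pi(1)}e_{\pi(1)}$ is immediate.

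I would also remark that the orbit is finite with $2d$ elements, so this family cannot route to arbitrary directions. This is the contrast with the transitive case of \cref{prop:orthogonal-routing}, and it is the routing-capacity limitation discussed after \cref{thm:orbit-restricted-compilation}.
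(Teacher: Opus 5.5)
Your proof is correct and follows the same two-inclusion argument as the paper: a signed permutation sends $e_1$ to its first column, which is some $\pm e_i$, and conversely every $\pm e_i$ is reached by a suitable signed permutation. The paper states this in words, while you write out the factorization $R=D_\epsilon P_\pi$ and an explicit transposition-plus-sign witness.
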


\begin{proof}
A signed permutation matrix maps each standard basis vector to another standard basis vector, possibly with a sign flip.  Thus every reachable direction from $e_1$ has the form $\pm e_i$.  Conversely, for every $\pm e_i$, there is a signed permutation matrix sending $e_1$ to $\pm e_i$.
\end{proof}

\begin{proposition}[Independent two-dimensional block rotations]
\label{prop:block-rotation-routing}
Assume $d$ is even and
\begin{equation}
    \mathcal G=SO(2)\oplus SO(2)\oplus\cdots\oplus SO(2),
\end{equation}
where each $SO(2)$ block acts only on one fixed pair of coordinates.  If $k=e_1$, then
\begin{equation}
    \operatorname{Orb}_{\mathcal G}(e_1)=\{(\cos\theta,\sin\theta,0,\ldots,0):\theta\in[0,2\pi)\}.
\end{equation}
In particular, this control family is not transitive on $\mathbb S^{d-1}$ for $d>2$.
\end{proposition}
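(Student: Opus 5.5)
The plan is to compute the orbit directly from the block structure of $\mathcal G$. An arbitrary element $R\in\mathcal G$ has the form $R=\bigoplus_{j=1}^{d/2}R(\theta_j)$, with one planar rotation per aligned coordinate pair $(2j-1,2j)$. Since $e_1$ is supported only on the first pair, we can write $e_1=(1,0)\oplus 0\oplus\cdots\oplus 0$. Then $Re_1$ is obtained blockwise: the first block gives $R(\theta_1)(1,0)^\top=(\cos\theta_1,\sin\theta_1)^\top$, and every other block acts on the zero vector, returning zero.

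This immediately gives the inclusion $\operatorname{Orb}_{\mathcal G}(e_1)\subseteq\{(\cos\theta,\sin\theta,0,\ldots,0)\}$. For the reverse inclusion, fix any $\theta\in[0,2\pi)$ and take $R=R(\theta)\oplus I_2\oplus\cdots\oplus I_2\in\mathcal G$. This $R$ sends $e_1$ to $(\cos\theta,\sin\theta,0,\ldots,0)$, so the two sets are equal.

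For non-transitivity when $d>2$, we exhibit a unit vector outside the orbit. We have $d\ge4$ because $d$ is even, so $e_3\in\mathbb S^{d-1}$ exists. Every orbit element has zero third coordinate, while $e_3$ has third coordinate $1$. Hence $e_3\notin\operatorname{Orb}_{\mathcal G}(e_1)$, and the action is not transitive on the sphere. More generally, $\mathcal G$ preserves each two-dimensional coordinate plane and the norm within it. So the quantities $x_{2j-1}^2+x_{2j}^2$ are invariants of the action, which makes the orbit of any vector a torus of dimension at most $d/2$. That is strictly smaller than the $(d-1)$-dimensional sphere once $d>2$. This remark is optional and not needed for the proof.

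There is no real obstacle. The only point needing care is the convention that the blocks are fixed and aligned, so no element of $\mathcal G$ mixes coordinates across pairs. The blockwise-action step depends entirely on this, and it is exactly the kernel-compatibility restriction of \cref{eq:block-rot}.
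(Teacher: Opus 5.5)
Your proposal is correct and takes essentially the same route as the paper: the block-diagonal structure confines $e_1$ to $\Span\{e_1,e_2\}$, and within that plane $SO(2)$ realizes every rotation. You also make explicit two things the paper leaves implicit, namely the reverse inclusion via $R(\theta)\oplus I_2\oplus\cdots\oplus I_2$ and the witness $e_3$ for non-transitivity.
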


\begin{proof}
The vector $e_1$ lies in the first two-dimensional coordinate block.  Since $\mathcal G$ is block diagonal, no operator in $\mathcal G$ mixes the first block with any other block.  Therefore $e_1$ can only move inside $\Span\{e_1,e_2\}$.  Within this plane, $SO(2)$ realizes all rotations.
\end{proof}

\begin{proposition}[Givens routing]
\label{prop:givens-routing}
If $\mathcal G$ contains Givens rotations between arbitrary coordinate pairs, then for every unit vector $w\in\mathbb S^{d-1}$, there exists a product of at most $d-1$ Givens rotations sending $e_1$ to $w$.
\end{proposition}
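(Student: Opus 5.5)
The plan is a constructive induction on the coordinates of $w$, in the spirit of a QR/Givens elimination but run in reverse. Write $w=(w_1,\ldots,w_d)^\top$ with $\norm{w}_2=1$, and let $G(i,j,\phi)$ denote the Givens rotation acting as $R(\phi)$ on the coordinate pair $(i,j)$ and as the identity elsewhere. The idea is to start from $e_1$ and use one rotation per step to peel off the correct final value of one coordinate. The remaining mass stays in a single "carrier" coordinate, and is finally deposited when only one coordinate is left.

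Concretely, I would define partial tail norms $\rho_j=\bigl(\sum_{i\ge j}w_i^2\bigr)^{1/2}$, so that $\rho_1=1$. The invariant to maintain is
\[
x^{(j)}=(w_1,\ldots,w_{j-1},\rho_j,0,\ldots,0)^\top .
\]
The base case is $x^{(1)}=e_1$. For the step from $x^{(j)}$ to $x^{(j+1)}$, the carrier mass $\rho_j$ in coordinate $j$ must be split into $w_j$ in coordinate $j$ and $\rho_{j+1}$ in coordinate $j+1$. Since $w_j^2+\rho_{j+1}^2=\rho_j^2$, the pair $(w_j,\rho_{j+1})$ lies on the circle of radius $\rho_j$. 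Hence there is an angle $\phi_j$ with $R(\phi_j)(\rho_j,0)^\top=(w_j,\rho_{j+1})^\top$. The rotation $G(j,j+1,\phi_j)$ then does exactly this and leaves every other coordinate untouched.

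After $d-1$ such steps, the vector is $x^{(d)}=(w_1,\ldots,w_{d-1},\rho_d,0,\ldots)$ with $\rho_d=|w_d|$, which may differ from $w$ in the sign of the last entry. I expect this sign issue to be the only real obstacle, and I would handle it by choosing the last angle to produce the signed value directly. At step $d-1$ the target is $(w_{d-1},w_d)$ itself rather than $(w_{d-1},|w_d|)$. This target also has norm $\rho_{d-1}$, so a suitable $\phi_{d-1}$ exists by the same circle argument, and then $x^{(d)}=w$ exactly. The degenerate case $\rho_j=0$ is harmless: all remaining target coordinates are then zero, so one can take the identity (or $\phi_j=0$) for the remaining steps. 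These identity factors can either be dropped or still counted, so the total stays at most $d-1$ rotations.

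Finally, I would conclude that $w=G(d-1,d,\phi_{d-1})\cdots G(1,2,\phi_1)e_1$, a product of at most $d-1$ Givens rotations. Therefore $\Orb_{\mathcal G}(e_1)=\mathbb S^{d-1}$ whenever $\mathcal G$ contains these rotations and is closed under products. This contrasts with \cref{prop:block-rotation-routing}: the only difference is that the pairs $(j,j+1)$ straddle the fixed $2\times2$ blocks, which is exactly what produces transitivity.
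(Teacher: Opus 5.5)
Your proof is correct. It shares the paper's basic idea of fixing one coordinate of $w$ per norm-preserving planar rotation, but it uses a different pattern of planes. The paper keeps coordinate $1$ as a fixed carrier and rotates in the planes $(1,j)$, $j=2,\ldots,d$, depositing $w_j$ each time and settling $w_1$ last. You instead use the adjacent planes $(j,j+1)$ with a moving carrier, tracked by the invariant $x^{(j)}=(w_1,\ldots,w_{j-1},\rho_j,0,\ldots,0)^\top$.

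Both constructions use exactly $d-1$ rotations in $d-1$ fixed planes, so neither is cheaper. What your chain buys is that nearest-neighbour Givens rotations already suffice, which fits the paper's remark that full routing needs a sequence of local rotations. Your write-up also makes explicit two points the paper's sketch leaves implicit: the sign choice at the last step (in the paper's version, the sign of $w_1$) and the vanishing-tail case.

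Two small caveats. First, both arguments tacitly need $d\ge2$, since for $d=1$ the target $w=-e_1$ cannot be reached with zero rotations. Second, your closing statement should read $\mathbb S^{d-1}\subseteq\Orb_{\mathcal G}(e_1)$, with equality only if $\mathcal G$ consists of orthogonal maps, because the paper's control families may also contain decays.
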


\begin{proof}
Write $w=(w_1,\ldots,w_d)^\top$ with $\norm{w}_2=1$.  Starting from $e_1$, successively rotate mass from coordinate $1$ into coordinates $2,\ldots,d$.  At step $j$, choose a Givens rotation in the $(1,j)$-plane so that the $j$-th coordinate matches $w_j$ while preserving the norm of the remaining unassigned coordinates.  After at most $d-1$ rotations, the resulting vector equals $w$.
\end{proof}

\subsection{Consequence for SFDA}

\begin{corollary}[SFDA tied-write expressivity over control orbits]
\label{cor:sfda-orbit}
Tied-write SFDA exactly realizes decoupled writes whose directions lie in the orbit of the tied key under the SFDA control family, provided the corresponding protected-subspace condition in \cref{thm:orbit-restricted-compilation} holds.  If the control family is transitive on the unit sphere, arbitrary unit write directions are routable.  If the control family is not transitive, tied-write SFDA realizes only the corresponding orbit-restricted class of decoupled writes without additional routing steps or state expansion.
\end{corollary}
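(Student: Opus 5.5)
The corollary has three clauses. I would prove each one by combining \cref{thm:orbit-restricted-compilation} with the orbit computations of the preceding propositions; no new algebra should be needed.

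\textbf{First clause.} Fix a symbol $\sigma$ with desired decoupled update $T_\sigma(x)=G_\sigma x+w_\sigma r_\sigma$. Suppose $w_\sigma\in\Orb_{\mathcal G}(k)$, with $\mathcal G$ the SFDA control family. By definition \cref{eq:orbit}, there is $R_\sigma\in\mathcal G$ with $w_\sigma=R_\sigma k$, which is \cref{eq:orbit-write-part}.

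The phrase ``the corresponding protected-subspace condition'' in the statement must be read as the full hypothesis package of \cref{thm:orbit-restricted-compilation}. That package consists of:
\begin{itemize}
\item a structured pre-routing operator $\widetilde G_\sigma$ with $G_\sigma=R_\sigma\widetilde G_\sigma$, as in \cref{eq:orbit-linear-part};
\item $k^\top\widetilde G_\sigma x=0$ on the reachable set $\mathcal X$, as in \cref{eq:protected-subspace}.
\end{itemize}
When $\mathcal G$ is a group, as for block rotations, signed permutations and $O(d)$, one can take $\widetilde G_\sigma=R_\sigma^{-1}G_\sigma$. Even then, structure of $\widetilde G_\sigma$ and the protection condition must still be assumed. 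Under these hypotheses the theorem, with $\beta_\sigma=1$, gives
\[
R_\sigma\bigl((I-kk^\top)\widetilde G_\sigma x+kr_\sigma\bigr)=T_\sigma(x)
\]
exactly on $\mathcal X$. I would then note that both factors of this composition are admissible SFDA primitives, so it is a realization in the sense of \cref{def:c-compact} with $C=2$:
\begin{itemize}
\item the tied step $(I-kk^\top)\widetilde G_\sigma x+kr_\sigma$ is a tied SFDA primitive with $\Lambda=\widetilde G_\sigma$;
\item $R_\sigma$ is a $\beta=0$ primitive with $\Lambda=R_\sigma$.
\end{itemize}

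\textbf{Second clause.} Transitivity on the sphere means $\Orb_{\mathcal G}(k)\supseteq\mathbb S^{d-1}$ for any unit key $k$. Every unit $w_\sigma$ is therefore in the orbit, and the first clause applies. The concrete instances are \cref{prop:orthogonal-routing} and \cref{prop:givens-routing}, where a Givens product is absorbed as a product of control operators. If $w_\sigma$ is not a unit vector, I would rescale $r_\sigma$ by $\norm{w_\sigma}_2$ so that $w_\sigma r_\sigma=(w_\sigma/\norm{w_\sigma}_2)(\norm{w_\sigma}_2 r_\sigma)$.

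\textbf{Third clause, the main obstacle.} This is a ``only'' statement, so it needs a converse. The natural reading is: a single tied step followed by a single routing operator $R\in\mathcal G$ produces a write contribution $R(kr)$, whose direction lies in $\Orb_{\mathcal G}(k)$. Matching the affine part of $T_\sigma$ at $x=0$, which I would assume lies in $\mathcal X$ or otherwise reach by differencing reachable states, then forces $w_\sigma r_\sigma=Rkr$. Hence $w_\sigma\in\Orb_{\mathcal G}(k)$ whenever $r_\sigma\neq0$.

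I would make the restriction ``without additional routing steps or state expansion'' precise as exactly this one-step-plus-one-route form in dimension $d$. \Cref{prop:block-rotation-routing} then exhibits a concrete non-transitive SFDA family in which the realizable class is strictly smaller. Pinning down this scoped converse formally is the delicate part. The remaining steps are direct citations.
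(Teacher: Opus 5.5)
Your treatment of the first two clauses is how the paper obtains the corollary. The paper gives no separate proof: it reads the result off \cref{thm:orbit-restricted-compilation} together with the orbit computations of \cref{prop:orthogonal-routing,prop:block-rotation-routing}. Your observation that the two factors form a $C=2$ realization in the sense of \cref{def:c-compact}, provided $\widetilde G_\sigma\in\mathcal L$, is a harmless addition.

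The real difference is the third clause. The paper proves no converse there; its ``only'' is a scope remark, backed by the non-transitivity in \cref{prop:block-rotation-routing} and by the paragraph on what the appendix does not prove. You instead turn it into an impossibility statement for the one-tied-step-plus-one-route form. That gives the clause actual content the paper lacks, but your argument needs tightening in three places.

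(i) The tied key must be held fixed, for example a designated scratch key. SFDA keys are input-dependent, and if $k$ may be chosen per symbol you can take $k=w_\sigma/\norm{w_\sigma}_2$ and $R=I$. The orbit restriction then becomes vacuous, and only the protection condition constrains anything.

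(ii) Evaluating at $x=0$ gives $w_\sigma r_\sigma=\beta r\,Rk$. This yields only $w_\sigma\in\F\cdot\Orb_{\mathcal G}(k)$, not $w_\sigma\in\Orb_{\mathcal G}(k)$. That is consistent with the corollary's word ``directions'' and with your rescaling in the second clause, but state it that way.

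(iii) Drop the ``differencing'' alternative, because $F(x)-F(x')=L_F(x-x')$ cancels exactly the constant term you are trying to identify. The correct nondegeneracy condition is $0\in\operatorname{aff}(\mathcal X)$, and without it the converse is false. For instance, take $\mathcal X=\{w_\sigma\}$ with $\norm{w_\sigma}_2=1$, $G_\sigma=I-w_\sigma w_\sigma^\top$ and $r_\sigma=1$. Then $w_\sigma$ is a fixed point of $T_\sigma$, so the identity primitive ($\beta=0$, $\Lambda=I$) realizes $T_\sigma$ on $\mathcal X$ for every $w_\sigma$, whether or not it lies in the orbit. Your primary assumption $0\in\mathcal X$ is the natural one, since the memory starts from $S_0=0$.
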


\paragraph{What this proves.}
This appendix proves that tied-write coupling is not an obstruction for write directions reachable by the architecture's own structured control operators, once the erase term is neutralized on the carried memory subspace.

\paragraph{What this does not prove.}
This result does not prove that diagonal or fixed block-phase SFDA can realize all arbitrary write directions in $O(1)$ microsteps.  Fixed block rotations are generally not transitive on the full sphere.  Full arbitrary direction routing requires either richer control operators, such as general orthogonal routing, or a sequence of multiple local rotations.  Thus the remaining stronger question is whether a kernel-compatible SFDA control family can be made sufficiently expressive for practical routing while preserving the chunk-WY structure.

\section{Compact Expressivity Proof Details and Review-Driven Additions}
\label{app:compact-expressivity-proofs}

This appendix collects proof details and scope clarifications for \cref{sec:compact-expressivity}.  The goal is to state exactly what the kernel-compatible SFDA family can represent compactly, without collapsing the distinction between one-hot finite-state universality and low-dimensional structured realizations.

\subsection{The primitive generator viewpoint}

Fix a structured control family $\mathcal L$.  The linear part of one kernel-compatible SFDA microstep is
\begin{equation}
A=\Lambda-ur^*,\qquad \Lambda\in\mathcal L.
\end{equation}
Thus the compact transition class with chunk budget $C$ is the length-$C$ word ball in the semigroup generated by structured controls and rank-one corrections:
\begin{equation}
\mathsf{SFDA}_{\mathcal L}(d,C)
:=
\left\{\prod_{t=1}^{C'}(\Lambda_t-u_tr_t^*):0\le C'\le C,\ \Lambda_t\in\mathcal L\right\}.
\end{equation}
The constructive WY theorem says that every element of this set has a computable summary
\begin{equation}
P=\Gamma-YMW^*,
\qquad
\rank(P-\Gamma)\le C.
\end{equation}
The lower bound in \cref{prop:no-global-fixed-rank} shows that this rank budget cannot generally be made independent of the number of distinct correction directions.

\subsection{Readout for compact finite phase states}

The compact phase realizations store discrete states as points on a torus.  For example, $C_m$ is stored as
\begin{equation}
z_j=e^{2\pi i j/m},\qquad j\in\mathbb Z_m.
\end{equation}
This is an exact representation of the transition dynamics.  To recover a symbolic label, one may use any finite classifier on the $m$ separated points.  A small MLP or nearest-phase classifier suffices because the set of possible hidden states is finite and separated by angle $2\pi/m$.  For products $\prod_j C_{m_j}$, the readout can decode each coordinate separately or classify the product state jointly.

\subsection{Why register memories are genuinely tied-SFDA, not only generalized-template systems}

The register theorem uses the actual tied update.  For register $i$,
\begin{equation}
S^+=(I-e_ie_i^*)S+e_ia^*.
\end{equation}
This is exactly tied SFDA with $\Lambda=I$, $\beta=1$, $k=e_i$, and $v=a$.  The same key $e_i$ erases and writes.  The coupling is not harmful because a register update is precisely an erase-then-write operation.

\subsection{Partial permutations and compatibility with the WY proof}

The default implementation uses complex diagonal $\Lambda_t$.  However, the algebra of \cref{thm:constructive-wy} only requires that $\Lambda_t$ can be multiplied into $\Gamma$, applied to $Y$, and adjoint-applied to $r_t$ efficiently.  Partial permutations satisfy this requirement: multiplying or applying a sparse shift costs linear time in the number of nonzeros.  Therefore bounded stacks and shift registers are not phase-only SFDA, but they are kernel-compatible generalized SFDA under a sparse-control family.

\subsection{What would constitute a complete compact-expressivity classification?}

A full classification would characterize all finite transformation semigroups whose transition matrices admit low-dimensional embeddings in $\mathsf{SFDA}_{\mathcal L}(d,C)$ for small $d$ and $C$.  This preprint does not claim such a classification.  Instead, it proves positive results for common structured memories and a rank-budget theorem that gives the necessary computational invariant.  In symbols, the paper proves useful sufficient inclusions such as
\begin{align}
\prod_{j=1}^{\ell}C_{m_j} &\subseteq \mathsf{SFDA}_{\mathcal L_{\mathrm{phase}}}(\ell,1),\\
D_m &\subseteq \mathsf{SFDA}_{\mathcal L_{\mathrm{ortho}}}(2,1),\\
\text{register memories with }d\text{ slots} &\subseteq \mathsf{SFDA}_{\mathcal L_{\mathrm{phase}}}(d,1),\\
\text{bounded stacks of depth }h &\subseteq \mathsf{SFDA}_{\mathcal L_{\mathrm{pp}}}(h,1),
\end{align}
where the notation suppresses the value width and finite readout.  It does not prove that every finite automaton has a small representation in $\mathsf{SFDA}_{\mathcal L}(d,C)$.

\subsection{Review checklist incorporated into the main text}

The current version incorporates the following review-driven corrections and additions:
\begin{enumerate}[leftmargin=*]
    \item a model hierarchy separating KDA, tied SFDA, kernel-compatible generalized SFDA, and the generalized tied-write proof template;
    \item corrected finite-state theorem wording: one-hot finite-state realization is for the generalized tied-write affine template, not vanilla KDA;
    \item rank-budget theorem for $A_t=\Lambda_t-u_tr_t^*$ products;
    \item no-global-fixed-rank lower bound;
    \item constructive chunk-WY recursion in the main theory section;
    \item prefix-output theorem for intra-chunk token outputs;
    \item affine write-summary theorem and zero-state recurrence;
    \item realification theorem mapping complex phases to $2\times2$ real rotation-decay blocks;
    \item stability and triangular-factor conditioning lemmas;
    \item training-time parameterization constraints for $\alpha_t,\theta_t,\beta_t,k_t$;
    \item explicit relation to RoPE, SSMs, DPLR, and KDA;
    \item orbit-routing theorem with protected-subspace erase neutralization;
    \item compact cyclic, direct-product cyclic, dihedral, register/reset, and bounded-stack realization theorems;
    \item exact chunk-parallelism wording: exact intra-chunk parallelism plus recurrent or scanned boundary propagation, not fixed-rank full-sequence closure;
    \item full complexity accounting with value width $d_v$;
    \item a theorem-to-evidence map for experiments.
\end{enumerate}


\section{Details for Basic Experiments}
\label{app:toy-experiment-details}

\paragraph{Scope.}
The experiments in Section~\ref{sec:basic-experiments} are designed to test the
paper's algebraic claims and the isolated phase-memory mechanism.  They are not
large-scale language-model experiments.  In particular, they do not validate the
hybrid SFDA-to-global-attention ratio, long-context language-model quality, or
wall-clock speed of a fused Triton/CUDA kernel.

\subsection{Numerical verification of algebraic identities}
\label{app:theorem-verification}

For each theorem check, we compare the structured implementation against dense
matrix ground truth.  Table~\ref{tab:theorem-checks} reports worst-case relative
error across the tested random seeds and shapes.

\begin{table}[h]
\centering
\caption{Numerical verification of SFDA algebra.}
\label{tab:theorem-checks}
\begin{tabular}{lc}
\toprule
Check & Worst-case error \\
\midrule
Block-WY closure, Theorem~\ref{thm:wy-closure-main} & $6.7\times 10^{-16}$ \\
Constructive chunk-WY product, Theorem~\ref{thm:constructive-wy} & $1.9\times 10^{-15}$ \\
Affine chunk transfer, Theorem~\ref{thm:affine-chunk-main} & $2.4\times 10^{-16}$ \\
KDA recovered at $\theta=0$, Proposition~\ref{prop:kda-special} & $0$ \\
Cyclic phase norm drift, Proposition~\ref{prop:phase-memory}, $T=16384$ & $3.0\times 10^{-14}$ \\
Cyclic phase modular error, Proposition~\ref{prop:phase-memory} & $3.0\times 10^{-12}$ \\
Spectral stability bound, Proposition~\ref{prop:stability} & holds; max $0.9977$ \\
\bottomrule
\end{tabular}
\end{table}

\paragraph{Chunk-kernel sanity check.}
For $d_k=d_v=128$, the correction rank equals the chunk size $C$ for
$C\in\{16,32,64,128\}$ in the generic random setting.  Boundary-state transfer
matches recurrent unrolling to approximately $10^{-15}$.  The naive build cost
increases superlinearly in $C$, consistent with the reference complexity
\(O(d_kC^2+C^3)\).  This motivates the paper's chunk-bounded rank discipline:
rank is allowed to grow within a chunk, but chunk summaries are not composed into
a global fixed-rank object.

\subsection{Constructed cyclic-memory separation}
\label{app:constructed-counter}

We construct a mod-$5$ counter with state
\[
    c_t = c_{t-1}+a_t \pmod 5.
\]
SFDA uses one real two-dimensional block to represent the complex phase
\[
    z_t = \exp(2\pi i c_t/5).
\]
Equivalently, in real coordinates,
\[
    z_t = R(2\pi a_t/5) z_{t-1},
    \qquad
    R(\phi)=
    \begin{bmatrix}
        \cos\phi & -\sin\phi\\
        \sin\phi & \cos\phi
    \end{bmatrix}.
\]
The decoder maps $z_t$ to the nearest of the five phase prototypes.  Since
$R(2\pi/5)^5=I$ and rotations preserve norm, this representation is exact up to
floating-point error for arbitrary length.

The real-diagonal baseline is calibrated at the same initial length but lacks a
nontrivial period-$5$ bounded orbit.  A real scalar or diagonal decay either
contracts, expands, or flips signs along coordinates; it cannot rotate through
five distinct phase states while preserving bounded norm.  Therefore its best
calibrated classifier approaches chance level on long random sequences.

\subsection{Learned short-to-long counter}
\label{app:learned-counter}

The learned experiment trains a minimal counter head at length $48$ and evaluates
without further tuning at lengths $48,96,192,384,768$.  The SFDA model learns a
phase parameter and a decoder for the mod-$5$ state.  The KDA baseline is the
same controlled model with phase disabled, \(\theta_t\equiv0\).  This isolates
whether the phase channel, rather than additional capacity, explains the cyclic
tracking behavior.

The learned SFDA model reaches perfect train-length accuracy and remains perfect
at length $96$, then degrades gradually as small phase-estimation errors
accumulate.  This is consistent with the constructed experiment: the exact phase
parameter has no intrinsic length drift, while a learned approximate phase can
accumulate angular error over long horizons.

\subsection{Reproducibility command block}
\label{app:reproduce-basic-experiments}

The reference experiments can be reproduced with the following commands:
\begin{verbatim}
pip install numpy matplotlib torch
python3 verify_theorems.py
python3 exp_a_representational.py
python3 exp_b_learned.py
python3 plots_and_kernel.py
\end{verbatim}

\end{document}